\documentclass[11pt]{article}
\usepackage[utf8]{inputenc}
\usepackage[authoryear,round,sort&compress]{natbib}
\usepackage[table]{xcolor}
\usepackage{tikz,amsmath,amssymb,pgfplots,enumerate,authblk,amsfonts,amsthm}
\usepackage{mathtools}
\usepackage{bbm}
\usepackage[ruled,linesnumbered]{algorithm2e}
\usepackage{subcaption}
\definecolor{linkcol}{rgb}{0,0,0.55}
\definecolor{citecol}{rgb}{0,0.45,0}
\definecolor{urlcol}{rgb}{0.55,0,0}
\usepackage[colorlinks=true,linkcolor=linkcol,citecolor=citecol,urlcolor=urlcol]{hyperref}
\usepackage{graphicx}
\usepackage[margin=1in]{geometry}
\usepackage{libertinus}
\usepackage[libertine]{newtxmath}
\usepackage[T1]{fontenc}
\usepackage{booktabs}
\usepackage{multirow}

\pgfplotsset{compat=1.18}
\usetikzlibrary{decorations.pathreplacing}
\usetikzlibrary{shapes.geometric, arrows, arrows.meta}
\usetikzlibrary{arrows,decorations.pathmorphing,backgrounds,calc,math}
\usetikzlibrary{positioning}

\tikzstyle{map}=[->,semithick]
\tikzstyle{arc}=[bend left,->,semithick]
\tikzstyle{rinclusion}=[right hook->,semithick]
\tikzstyle{linclusion}=[left hook->,semithick]
\tikzstyle{arrow} = [thick,->,>=stealth]

\newcommand{\xmark}{\ensuremath{\times}}

\def\R{{\mathbb R}}

\def\E{{\mathbb E}}

\newcommand{\D}[1]{\mathcal{D}_{{#1}}}
\newcommand{\db}{d_{\mathcal{B}}}

\DeclareMathOperator*{\diam}{diam}

\theoremstyle{plain}
\newtheorem{theorem}{Theorem}[section]
\newtheorem{remark}{Remark}[section]
\newtheorem{proposition}{Proposition}[section]
\newtheorem{corollary}{Corollary}[section]
\newtheorem{definition}{Definition}[section]
\newtheorem{lemma}{Lemma}[section]

\def\RR{{\mathbb R}}

\def\GG{{\mathbb G}}

\def\NN{{\mathbb N}}

\def\eps{{\varepsilon}}

\newcommand{\norm}[1]{\left\lVert #1 \right\rVert}

\newcommand{\LC}{\nu}                       
\newcommand{\rhonu}{\ensuremath{\lambda}}            
\newcommand{\hatrhonu}{\ensuremath{\widehat{\lambda}}}   

\colorlet{landmarkblue}{blue!65!black}
\colorlet{landmarkred}{red!75!black}
\colorlet{ballgray}{blue!12!white}
\colorlet{diagyellow}{yellow!70!orange}
\colorlet{pointgreen}{green!55!black}
\colorlet{coordorange}{orange!85!black}

\numberwithin{equation}{section}

\title{A Closed-Form Adaptive-Landmark Kernel for Certified Point-Cloud and Graph Classification}

\author[1]{Sushovan Majhi}
\affil[1]{\small Data Science, George Washington University, USA
  (\texttt{s.majhi@gwu.edu})}
\author[2]{Atish Mitra}
\affil[2]{\small Department of Mathematical Sciences, Montana Technological University, USA
  (\texttt{amitra@mtech.edu})}
\author[3]{\v{Z}iga Virk}
\affil[3]{\small Faculty of Computer and Information Science, University of Ljubljana, Slovenia
  (\texttt{ziga.virk@fri.uni-lj.si})}
\author[4]{Pramita Bagchi}
\affil[4]{\small Biostatistics and Bioinformatics, George Washington University, USA
  (\texttt{pramita.bagchi@gwu.edu})}

\date{}
\begin{document}

\maketitle

\begin{figure}[t]
\centering
\begin{tikzpicture}[
  scale=0.7,
  font=\footnotesize,
  panel/.style={font=\footnotesize\bfseries, text=black!85},
  axis/.style={-{Stealth[length=4pt,width=3.5pt]}, gray!60, line width=0.5pt},
  diag/.style={gray!45, dashed, line width=0.4pt},
  cover/.style={blue!55, fill=blue!10, fill opacity=0.30, draw opacity=0.55, line width=0.5pt},
  lmark/.style={fill=blue!70!black, opacity=0.55},
  datum/.style={fill=red!75!black},
  caption/.style={font=\scriptsize, text=gray!55!black, align=center},
  Lannot/.style={{Stealth[length=3pt]}-{Stealth[length=3pt]},
                 gray!65!black, line width=0.55pt},
  Dannot/.style={red!55!black, dashed, line width=0.5pt},
]

\def\W{5.6}      
\def\H{5.6}      


\begin{scope}[xshift=0cm]
  \draw[axis] (-0.15,0) -- (\W+0.15,0) node[right]{$b$};
  \draw[axis] (0,-0.15) -- (0,\H+0.15) node[above]{$d$};
  \draw[diag] (0,0) -- (\W,\W);
  \node[caption, rotate=45] at (3.0, 2.7) {$d{=}b$};

  \foreach \cx/\cy in {%
       0.7/2.8, 0.7/4.2, 0.7/5.6,
       2.1/4.2, 2.1/5.6,
       3.5/5.6}
    {\draw[cover] (\cx-1.05,\cy-1.05) rectangle (\cx+1.05,\cy+1.05);}
  \foreach \cx/\cy in {%
       0.7/2.8, 0.7/4.2, 0.7/5.6,
       2.1/4.2, 2.1/5.6,
       3.5/5.6}
    {\fill[lmark] (\cx,\cy) circle (1.6pt);}

  \foreach \dx/\dy in {2.45/3.55, 2.85/4.05, 3.15/3.90, 2.65/3.95, 3.00/3.65}
    {\fill[datum] (\dx,\dy) circle (1.8pt);}

  \draw[Lannot] (0.3, \W+1.05) -- (\W-0.3, \W+1.05);
  \node[anchor=south, font=\footnotesize, gray!65!black]
    at (\W/2, \W+1.05) {$L$};

  \draw[Dannot] (2.40, 3.50) rectangle (3.20, 4.10);
  \node[anchor=west, font=\scriptsize, red!55!black]
    at (3.25, 3.80) {$D$};

  \node[panel] at (\W/2, \H+2.0) {(a) Uniform grid $\GG_R$};
\end{scope}

\begin{scope}[xshift=10cm]
  \draw[axis] (-0.15,0) -- (\W+0.15,0) node[right]{$b$};
  \draw[axis] (0,-0.15) -- (0,\H+0.15) node[above]{$d$};
  \draw[diag] (0,0) -- (\W,\W);
  \node[caption, rotate=45] at (3.0, 2.7) {$d{=}b$};

  \foreach \cx/\cy/\r in {%
       2.45/3.55/0.25,
       2.85/4.05/0.30,
       3.15/3.90/0.30}
    {\draw[cover] (\cx-\r,\cy-\r) rectangle (\cx+\r,\cy+\r);}

  \foreach \cx/\cy in {2.45/3.55, 2.85/4.05, 3.15/3.90}
    {\fill[blue!75!black, opacity=0.45] (\cx,\cy) circle (3.2pt);}

  \foreach \dx/\dy in {2.45/3.55, 2.85/4.05, 3.15/3.90, 2.65/3.95, 3.00/3.65}
    {\fill[datum] (\dx,\dy) circle (1.8pt);}

  \draw[Lannot] (0.3, \W+1.05) -- (\W-0.3, \W+1.05);
  \node[anchor=south, font=\footnotesize, gray!65!black]
    at (\W/2, \W+1.05) {$L$};

  \draw[Dannot] (2.40, 3.50) rectangle (3.20, 4.10);
  \node[anchor=west, font=\scriptsize, red!55!black]
    at (3.25, 3.80) {$D$};

  \node[panel] at (\W/2, \H+2.0) {(b) Adaptive configuration $\LC$};
\end{scope}

\end{tikzpicture}
\caption{%
  \textbf{Uniform vs.\ data-adaptive landmark configurations on the
  birth--death plane.} Both panels show the same five-point data
  cluster (red) of bottleneck-diameter $D$ in a domain of extent
  $L$; here $L/D \approx 7$.
  \textbf{(a)}~PLACE's uniform grid $\GG_R^+$ covers the
  full half-plane $[0,L]^2$ with balls of fixed radius
  $\tfrac{3R}{2}$, irrespective of where data lies; most balls
  fall in empty regions.
  \textbf{(b)}~PALACE's adaptive configuration $\LC$ places
  $K$ landmarks via class-aware farthest-point sampling on
  training diagrams (here $K{=}3$ selected from the $5$
  training points) with radii from local nearest-neighbor
  spacing, concentrating coverage where data lives. The
  required budget drops from $\Theta((L/\tau)^2)$ to
  $\Theta((D/\tau)^2)$ when data diameter $D \ll L$
  (Theorem~\ref{thm:comparison}).%
}
\label{fig:uniform_vs_learned}
\end{figure}

\begin{abstract}
We introduce \textbf{PALACE}
(\textbf{P}ersistence \textbf{A}daptive-\textbf{L}andmark
\textbf{A}nalytic \textbf{C}lassification \textbf{E}ngine), the
data-adaptive companion to the closed-form PLACE pipeline,
paying a small cross-validation tier on three knobs (budget,
radii, bandwidth; $\leq 5$ choices each).  The
summation embedding lifts into an RKHS via an additive landmark
kernel.  A self-contained cover-theoretic core---a Lebesgue-number
criterion on the landmark cover---yields four closed-form
guarantees.
\emph{(i)}~A structural lower distortion bound $\rhonu(\tau;\LC)$
on $\D{n}$ under cross-diagram non-interference, with a $(D/L)^2$
budget reduction over the uniform grid when diagrams concentrate
(data diameter $D$ vs.\ domain extent $L$).
\emph{(ii)}~Equal landmark weights $w_k = K^{-1/2}$ maximizing the
certificate $\rhonu$ of~(i), and farthest-point-sampling
positions $2$-approximating the optimal $k$-center covering
radius; both derived from training labels alone, no gradient
training.
\emph{(iii)}~A kernel-RKHS classification rate
$O((k{-}1)\sqrt{K}/(\gamma\sqrt{m_{\min}}))$ ($k$ classes,
$K$ landmarks) with binary necessity threshold
$m = \Omega(\sqrt K/\gamma)$ from a matching Le~Cam lower bound;
and a closed-form filtration-selection rule.  The
kernel-Mahalanobis margin $\hat\rho_{\mathrm{Mah}}$ is the
strongest closed-form ranker across the chemical-graph pool,
positive on every benchmark (mean Spearman
$\rho \approx +0.60$); the isotropic surrogate
$\hat\gamma/\sqrt{K}$ admits a closed-form selection-consistency
rate, and $\hatrhonu$ from~(i) provides an independent
data-level signal complementing the kernel-margin rankers
(positive on COX2 and PTC).
\emph{(iv)}~A per-prediction certificate, in non-asymptotic
Pinelis and asymptotic Gaussian forms, with no calibration split.
Empirically, PALACE is the strongest closed-form diagram-based
method on Orbit5k ($\mathbf{91.3 \pm 1.0\%}$, matching
Persformer's gradient-trained black-box transformer), leads every
diagram-based competitor on COX2 and MUTAG, and is competitive on
DHFR (within $1$~pp of ECP); descriptor blindness persists on NCI1
and PTC.  At $8\times$ domain
inflation, adaptive placement maintains $94\%$ while the uniform
grid collapses to chance ($25\%$ on $4$-class data).
\end{abstract}

\section{Introduction}\label{sec:intro}

Persistent homology produces a canonical topological signature of
structured data---graphs, point clouds, shapes---called the
\emph{persistence diagram}: a finite multiset of points in the
half-plane above the diagonal, augmented by a formal diagonal
point $\ast$.
Stability under perturbation is well-understood
\citep{Cohen-Steiner2007,ChazalCohenSteiner2009,ChazalDeSilva2016},
but the varying cardinality and non-Hilbertian geometry of
diagrams make them incompatible with standard machine learning.
Existing vectorizations---persistence images~\citep{Adams2017},
landscapes~\citep{Bubenik15}, kernels~\citep{Kusano2016,
Carriere2017, Reininghaus2015, LeYamada2018}, learned
weights~\citep{yusu_metric_learning}, and neural
extensions~\citep{Hofer2017, Gabrielsson2020, Carriere2020,
Reinauer2021}---all offer Lipschitz \emph{upper} bounds on embedding
distortion.  None comes with a \emph{lower} bound with explicit
constants, so there is no guarantee that bottleneck-separated diagrams
remain separated after vectorization.
Each method further carries hyperparameters---kernel bandwidth,
image resolution, landscape level count, learned weight function---%
whose selection requires held-out data, so any downstream accuracy
claim inherits the dependence on a validation split.
Despite a decade of work, there is no way to inspect a trained
persistence-diagram classifier and certify, before seeing test
data, whether its predictions will be correct.

Our companion paper~\citep{PaperI} closes these gaps on a
\emph{fixed-grid} backbone; we adopt its persistence-diagram
setup throughout (bottleneck distance, $n$-point diagram space,
top-persistence filter; see \citealp[Sec.~2]{PaperI}).
PLACE places landmarks on a uniform lattice in the birth--death
plane at $N$ geometrically spaced scales and sums a compactly
supported hat coordinate over diagram points; from training
labels alone, the construction yields a tight minimax
classification rate, a closed-form Mahalanobis-margin
descriptor-selection rule, and a per-prediction correctness
certificate. The construction is fully tuning-free, but three
\emph{residual} trade-offs limit how far it can be pushed:
\emph{(i)~Coverage.} The uniform grid covers the full birth--death
domain regardless of where diagrams concentrate, inflating
embedding capacity relative to the data support.
\emph{(ii)~Positions.} Landmark positions are combinatorial and
change the embedding dimension in discrete jumps as the grid is
refined or thinned, leaving the placement axis outside
\citet{PaperI}'s closed-form recipe.
\emph{(iii)~Linear-only analysis.} The classification rate of
\citet[Thm.~3.1]{PaperI} is stated for a linear SVM on $\Phi$;
non-linear lifts (kernel SVM, RKHS lift) are not analyzed and
cannot be deployed without forfeiting the per-prediction
certificate (which requires an analytically fixed embedding).
PALACE relaxes all three.

This paper introduces \textbf{PALACE}, the data-adaptive companion
that addresses (i)--(iii) in turn.
The fixed grid is replaced by an adaptive configuration $\LC$ of
$K$ weighted landmarks placed by class-aware farthest-point
sampling on training diagrams (Figure~\ref{fig:uniform_vs_learned};
formal definition in Section~\ref{sec:cover}).
This makes landmark positions analytic from the data and
concentrates coverage where diagrams live, dropping the required
budget from $\Theta((L/\tau)^2)$ to $\Theta((D/\tau)^2)$ when
diagrams cluster in a region of $\db$-diameter $D \ll L$
(Theorem~\ref{thm:comparison})---addressing~(i) and~(ii).
The single-point coordinate
$\varphi_{p,r}$ and the bottleneck geometry are inherited from
\citet{Mitra2024} unchanged; we sum the coordinate over diagram
points---the summation diagonalization of \citet{PaperI}---%
evaluated at the adaptive positions, then lift the resulting
embedding into an RKHS via the additive landmark kernel $k_\LC$,
addressing~(iii). The kernel lift is empirically necessary: on
the same embedding, linear classifiers leave a structural
$30$-percentage-point gap (Section~\ref{par:kernel_comparison}).
The price is a small cross-validation tier
(budget $K$, radius factor $\alpha$, bandwidth $\sigma$; $\leq 5$
choices each) replacing PLACE's tuning-free regime.

The theory mirrors PLACE's contribution list under a single
correspondence: $R \leftrightarrow \sqrt{K}$,
$\Delta \leftrightarrow 2\gamma$, where $R$ is PLACE's embedding
radius, $\Delta$ its class-mean separation, $K$ the PALACE
landmark budget, and $\gamma$ the kernel-RKHS class-mean margin.
Under this map, PLACE's grid-tied constant-floor distortion bound
becomes PALACE's configuration-intrinsic certificate
$\rhonu(\tau;\LC)$ on arbitrary admissible $\LC$ via a
self-contained non-uniform \emph{cover theory}.
PLACE's $O((k{-}1)R/(\Delta\sqrt{m_{\min}}))$ classification rate
(with $(k{-}1)$ from the OvO majority-vote reduction and
$m_{\min}$ the smallest class size) becomes the kernel-RKHS
analogue $O((k{-}1)\sqrt{K}/(\gamma\sqrt{m_{\min}}))$, with the
same Le~Cam binary lower bound and the same polynomial
necessary-vs-sufficient gap (the RKHS lift introduces no
information-theoretic loss).
PLACE's $\Delta$-based per-prediction certificate carries through
on $\R^K$ at the raw-embedding class-mean separation
$\hat\Delta_{\hat c}$ (Theorem~\ref{thm:certified}), with
identical structure.
The one piece that does \emph{not} carry over is PLACE's
nested-scale weight rule, which optimizes a different problem;
the free-configuration argmax is the equal-weight rule
$w_k = K^{-1/2}$, $\rhonu$-maximizing among $\tau$-admissible
equal-weight configurations of cardinality $K$
(Prop.~\ref{prop:optimal_config}), with admissibility itself
preventing memorization (no gradient training of weights is
needed).
Section~\ref{sec:contributions} states each result formally.

Empirically, PALACE reaches $\mathbf{91.3 \pm 1.0\%}$ on Orbit5k
with the certified landmark kernel at a triple-filtration
concatenation (Table~\ref{tab:push92}), matching
Persformer~\citep{Reinauer2021} and surpassing every other
diagram-based method, including PLACE's $87.2 \pm 0.6\%$
(linear SVM, $\ell{=}1{,}366$).
On the structurally discriminative chemical benchmarks
(COX2, DHFR, MUTAG; Table~\ref{tab:graph_comparison}), PALACE
leads every diagram-based competitor on COX2 and MUTAG and is
competitive on DHFR (within $1$~pp of ECP); it exceeds both PLACE
and PersLay on each, with accuracies
$81.7\%$/$81.0\%$/$90.9\%$.
On the heterogeneous chemical pool of
Section~\ref{sec:exp_multi_dataset}, the Mahalanobis-margin
selector $\hat\rho_{\mathrm{Mah}}$ is the only ranker positive on every
completed dataset (MUTAG, COX2, DHFR, PTC, NCI1; mean Spearman
$\rho \approx +0.60$); the certificate-as-ranker $\hatrhonu$
gives an independent positive signal on COX2 and PTC where the
trace-corrected $\widehat{\mathrm{Fisher}}_{\mathrm{ker}}$
inverts (notably COX2), confirming the two mechanisms
(kernel-margin vs.\ data-level bottleneck) are orthogonal.
At $8\times$ domain inflation on a synthetic task, adaptive
placement maintains $94\%$ while the uniform grid collapses to
the $25\%$ chance level on $4$-class data, validating the
$(D/L)^2$ budget-reduction mechanism.
Section~\ref{sec:experiments} reports the full empirical
comparison across Orbit5k, five chemical graph benchmarks, and
the controlled synthetic task; on PROTEINS, DD, IMDB-B, IMDB-M,
and NCI109~\citep{yusu_metric_learning}, the linear-SVM
baseline, nearest-centroid accuracies, and certificate firing
diagnostics are in hand, with the LK-SVM headline accuracies
deferred to a future revision (Section~\ref{sec:discussion}).

\subsection{Our Contribution and Organization}
\label{sec:contributions}

Adaptive landmark placement on persistence diagrams admits a
\emph{closed-form theory (modulo a small CV tier)} for the four
steps of the classification pipeline---embedding, optimization,
selection, deployment---replacing learned vectorizations,
gradient training of weights, held-out validation, and post-hoc
calibration with provable choices.
PALACE realizes this theory; PLACE~\citep{PaperI} is the discrete
uniform-grid special case.
The single-point coordinate $\varphi_{p,r}$ and bottleneck
geometry $\db$ are inherited from \citet{Mitra2024}; the summation
diagonalization and $\Delta$-based certificate form are inherited
from \citet{PaperI}.

The four contributions, all closed-form modulo the CV tier above,
correspond to the four pipeline steps and parallel the four
contributions of \citet{PaperI}:
\emph{(i)}~A self-contained non-uniform cover theory for arbitrary
admissible landmark configurations $\LC$, yielding a structural
lower distortion bound $\rhonu(\tau;\LC)$ via a Lebesgue-number
criterion under cross-diagram non-interference
(Theorem~\ref{thm:nu_rho}) and a $(D/L)^2$ budget reduction over
the uniform grid (Theorem~\ref{thm:comparison},
Section~\ref{sec:cover}); this generalizes the constant-floor
lower bound of \citet[contribution~(i)]{PaperI} from the grid
$\GG_R$ to arbitrary admissible $\LC$. A per-dataset audit
(Section~\ref{sec:experiments}) finds the non-interference
hypothesis of Theorem~\ref{thm:nu_rho} essentially never met on
chemical persistence diagrams; Theorem~\ref{thm:nu_rho} should
therefore be read as a structural admissibility statement, with
the empirical workhorse living at the kernel-margin level
(contribution~(iii) below).
\emph{(ii)}~Closed-form configuration choices: equal weights
$w_k = K^{-1/2}$ maximize the certificate $\rhonu$ on the
worst-case kernel-RKHS bound
(Proposition~\ref{prop:optimal_config}(i)); the effective
certificate
is sharpened by the Lebesgue number, which under uniform radii
reduces to a $k$-center covering-radius minimization
(Proposition~\ref{prop:optimal_config}(ii)), solved within a factor
of $2$ by farthest-point sampling
(Theorem~\ref{thm:fps_greedy},
Corollary~\ref{cor:fps_admissible}).  Admissibility prevents
memorization
(Definition~\ref{def:sep_radius}(i) forces $\max_k r_k \geq \tau/4$,
ruling out the degenerate $r_k \to 0$ configurations a learned
placement could otherwise reach), so no gradient training of
$w_k$ or $\{p_k\}$ is needed.  PLACE's
nested-scale rule $w_k^2 \propto (d_{k+1}^2 - d_k^2)/R_k^2$ (part
of \citealp{PaperI}'s contribution~(i)) optimizes a different
problem (nested scales with fixed support size) and does not
carry over: the free-configuration argmax is equal weights.
\emph{(iii)}~A kernel-RKHS classification rate
$O((k{-}1)\sqrt{K}/(\gamma\sqrt{m_{\min}}))$ via Theorem~\ref{thm:data_dependent},
with binary necessity threshold $m = \Omega(\sqrt K/\gamma)$ from
Theorem~\ref{thm:lower_bound}'s Le~Cam two-point construction (the
polynomial gap between necessary $\sqrt K/\gamma$ and sufficient
$K\log(k/\delta)/\gamma^2$ thresholds is the same one \citet{PaperI} leaves open;
Remark~\ref{rem:gap_pii}).  Closed-form filtration selection
(Section~\ref{sec:gamma_stat}) then provides three
$\Sigma$-treatment selectors ($\hat\gamma/\sqrt K$,
$\widehat{\mathrm{Fisher}}_{\mathrm{ker}}$,
$\hat\rho_{\mathrm{Mah}}$) covering the spherical, scalar-trace,
and full-operator regimes (Remark~\ref{rem:selector_hierarchy};
the Mahalanobis pivot parallels \citealp[Sec.~4.1]{PaperI}), with
selection-consistency theorem
(Proposition~\ref{prop:gamma_selection_consistency},
paralleling \citealp[Prop.~4.4]{PaperI}); a complementary
data-level $\hatrhonu$ recovers signal in the
bottleneck-orthogonal regime where the $\Sigma$-aware selectors
miss (Section~\ref{sec:exp_multi_dataset}).  This unifies the
analogues of \citet{PaperI}'s contributions~(ii) (classification
rate) and~(iii) (descriptor selection) into a single RKHS
framework.
\emph{(iv)}~A per-prediction correctness certificate
$r_m < \tfrac{1}{2}\hat\Delta_{\hat c}$
(Theorem~\ref{thm:certified}, Section~\ref{sec:certified}) in
non-asymptotic Pinelis and asymptotic Gaussian forms, no
calibration split; this parallels \citealp{PaperI}'s
contribution~(iv) at the raw-embedding class-mean separation
$\Delta$ on $\R^K$ (the certificate operates on the raw $\ell^2$
embedding, not the RKHS lift, with raw embedding radius
$\bar R \leq N_{\max}\tau$ replacing PI's $R$ in the constants).

\begin{figure}[t]
\centering
\begin{tikzpicture}[
  font=\small,
  stage/.style={draw=gray!45, rounded corners=3pt, fill=gray!3,
                line width=0.5pt,
                minimum width=3.0cm, minimum height=3.3cm,
                inner sep=4pt},
  slabel/.style={font=\small\bfseries, text=black!80},
  sbelow/.style={font=\footnotesize, text=gray!55!black, align=center},
  flow/.style={-{Stealth[length=6pt,width=5pt]}, line width=1.1pt, gray!55},
]

\def\sxa{0}
\def\sxb{4}
\def\sxc{8}
\def\sxd{12}

\node[stage] (input) at (\sxa,0) {};
\node[stage] (place) at (\sxb,0) {};
\node[stage] (embed) at (\sxc,0) {};
\node[stage] (clf)   at (\sxd,0) {};

\begin{scope}[shift={(input)}, scale=1.4]
  \begin{scope}[shift={(0, 0.45)}, scale=0.55]
    \coordinate (n1) at (-0.55, 0.55);
    \coordinate (n2) at ( 0.10, 0.75);
    \coordinate (n3) at ( 0.65, 0.30);
    \coordinate (n4) at ( 0.30,-0.45);
    \coordinate (n5) at (-0.45,-0.30);
    \draw[gray!60, line width=0.5pt]
      (n1)--(n2) (n2)--(n3) (n3)--(n4)
      (n4)--(n5) (n5)--(n1) (n2)--(n4);
    \foreach \p in {n1,n2,n3,n4,n5}
      {\fill[gray!55!black] (\p) circle (1.6pt);}
  \end{scope}
  \draw[->, gray!60, line width=0.5pt] (0, 0.05) -- (0, -0.13);
  \node[font=\tiny, gray!55!black, anchor=west] at (0.04, -0.04)
    {filtration};
  \begin{scope}[shift={(0, -0.50)}]
    \draw[->, gray!55, line width=0.4pt]
      (-0.45,-0.32) -- (0.45,-0.32) node[right, font=\tiny] {$b$};
    \draw[->, gray!55, line width=0.4pt]
      (-0.45,-0.32) -- (-0.45, 0.40) node[above, font=\tiny] {$d$};
    \draw[gray!50, dashed, line width=0.3pt]
      (-0.45,-0.32) -- (0.25, 0.38);
    \fill[red!70!black] (-0.25, 0.05) circle (1.5pt);
    \fill[red!70!black] (-0.05, 0.30) circle (1.5pt);
    \fill[red!70!black] (-0.18, 0.18) circle (1.3pt);
    \node[font=\tiny, gray!55!black, anchor=west] at (0.05,-0.18)
      {$A$};
  \end{scope}
\end{scope}

\begin{scope}[shift={(place)}, scale=1.4]
  \draw[->, gray!55, line width=0.4pt]
    (-0.85,-0.80) -- (0.80,-0.80) node[right, font=\tiny] {$b$};
  \draw[->, gray!55, line width=0.4pt]
    (-0.85,-0.80) -- (-0.85, 0.80) node[above, font=\tiny] {$d$};
  \draw[gray!50, dashed, line width=0.3pt]
    (-0.85,-0.80) -- (0.75, 0.80);
  \foreach \x/\y in
    {-0.30/0.15, -0.20/0.30, -0.10/0.45, -0.05/0.55, 0.05/0.40,
      0.10/0.55, 0.10/0.65, 0.20/0.55, 0.20/0.70}
    {\fill[red!75!black] (\x, \y) circle (1.3pt);}
  \foreach \x/\y/\r in
    {-0.30/0.15/0.15, 0.10/0.55/0.18, 0.20/0.70/0.15}
    {\draw[blue!60, fill=blue!12, fill opacity=0.35, draw opacity=0.65,
       line width=0.4pt]
      (\x-\r, \y-\r) rectangle (\x+\r, \y+\r);
     \fill[blue!75!black] (\x, \y) circle (1.8pt);}
  \node[font=\tiny, blue!60!black, anchor=south]
    at (0.10, 0.60) {$p_k$};
  \draw[blue!55, line width=0.3pt] (0.10, 0.55) -- (0.28, 0.55);
  \node[font=\tiny, blue!60!black, anchor=west]
    at (0.20, 0.50) {$r_k$};
\end{scope}

\begin{scope}[shift={(embed)}, scale=1.4]
  \begin{scope}[shift={(0, 0.55)}]
    \node[font=\tiny, gray!55!black, anchor=south] at (0, 0.30)
      {$\Phi(A;\LC) \in \mathbb{R}^K$};
    \draw[gray!55, line width=0.3pt] (-0.42,0) -- (0.42,0);
    \foreach \i/\h in {0/0.12, 1/0.20, 2/0.07, 3/0.24, 4/0.15, 5/0.06}
      {\fill[blue!60!black, opacity=0.75]
        ({-0.36 + \i*0.14 - 0.045}, 0)
        rectangle ({-0.36 + \i*0.14 + 0.045}, \h);}
  \end{scope}
  \draw[->, gray!60, line width=0.5pt] (0, 0.30) -- (0, 0.06);
  \node[font=\tiny, gray!55!black, anchor=west] at (0.04, 0.18)
    {$k_\LC$};
  \begin{scope}[shift={(0, -0.40)}]
    \draw[gray!50, line width=0.4pt] (0, 0) circle (0.50);
    \fill[blue!75!black] (-0.22, -0.05) circle (1.7pt);
    \fill[red!75!black]  ( 0.22,  0.10) circle (1.7pt);
    \node[font=\tiny, blue!60!black, anchor=east]
      at (-0.25, -0.05) {$\mu_c$};
    \node[font=\tiny, red!65!black, anchor=west]
      at (0.25, 0.10) {$\mu_{c'}$};
    \draw[gray!60, line width=0.4pt, dashed]
      (-0.22, -0.05) -- (0.22, 0.10);
    \node[font=\tiny, gray!50!black, anchor=south]
      at (0, 0.02) {$2\gamma$};
    \node[font=\tiny, gray!55!black, anchor=south]
      at (0, -0.42) {$\mathcal{H}_{k_\LC}$};
  \end{scope}
\end{scope}

\begin{scope}[shift={(clf)}, scale=1.4]
  \fill[blue!7] (-0.85,-0.75) rectangle (0.0, 0.85);
  \fill[red!7]  ( 0.0,-0.75) rectangle (0.85, 0.85);
  \draw[gray!60, dashed, line width=0.4pt]
    (0.0,-0.75) -- (0.0, 0.85);
  \fill[blue!75!black] (-0.45, 0.15) circle (2.1pt);
  \fill[red!75!black]  ( 0.45, 0.10) circle (2.1pt);
  \draw[blue!50, dashed, line width=0.4pt]
    (-0.45, 0.15) circle (0.20);
  \draw[red!50,  dashed, line width=0.4pt]
    ( 0.45, 0.10) circle (0.20);
  \node[font=\tiny, blue!60!black, anchor=south]
    at (-0.45, 0.40) {$\hat\mu_c$};
  \node[font=\tiny, red!65!black, anchor=south]
    at ( 0.45, 0.35) {$\hat\mu_{c'}$};
  \node[font=\tiny, blue!55!black, anchor=west]
    at (-0.20, 0.15) {$r_m$};
  \draw[gray!60, line width=0.4pt, |-|]
    (-0.45,-0.40) -- (0.45,-0.40);
  \node[font=\tiny, gray!60!black, anchor=north]
    at (0,-0.42) {$\hat\Delta_{\hat c}$};
  \fill[gray!85] (-0.30, 0.55) circle (1.4pt);
  \node[font=\tiny, gray!60!black, anchor=south]
    at (-0.30, 0.60) {$A_*$};
\end{scope}

\node[slabel, above=4pt of input] {\textsc{input \& diagram}};
\node[slabel, above=4pt of place] {\textsc{adaptive placement}};
\node[slabel, above=4pt of embed] {\textsc{kernel embedding}};
\node[slabel, above=4pt of clf]   {\textsc{certified classify}};

\node[sbelow, below=4pt of input.south] {graph or point cloud,\\diagram $A$};
\node[sbelow, below=4pt of place.south] {FPS on training data,\\$|\LC|=K$};
\node[sbelow, below=4pt of embed.south] {sum-pool $+$\\LK lift $k_\LC$};
\node[sbelow, below=4pt of clf.south]   {$\hat y$ with\\$r_m{<}\tfrac{1}{2}\hat\Delta_c$};

\draw[flow] (input.east) -- (place.west);
\draw[flow] (place.east) -- (embed.west);
\draw[flow] (embed.east) -- (clf.west);

\end{tikzpicture}
\caption{\textbf{The PALACE pipeline.}
A graph or point cloud is converted to a persistence diagram $A$
through a filtration. Class-aware farthest-point sampling on
training diagrams fixes the landmark configuration
$\LC = \{(p_k, r_k, w_k)\}_{k=1}^K$. The diagram is
sum-pooled into $\Phi(A; \LC) \in \mathbb{R}^K$
and lifted to the RKHS $\mathcal{H}_{k_\LC}$ via the
additive landmark kernel $k_\LC$. The kernel-SVM
prediction $\hat y$ is audited per input by the certificate
$r_m < \tfrac{1}{2}\hat\Delta_c$
(Theorem~\ref{thm:certified}).
The middle two stages are PALACE-specific; stages 1 and 4 share
form with PLACE~\citep{PaperI}.}
\label{fig:pipeline}
\end{figure}

Section~\ref{sec:cover} develops the cover theory, the summation
embedding $\Phi$, the certificate $\rhonu(\tau;\LC)$, and the budget
reduction; Section~\ref{sec:ot} the kernel-RKHS framework and the
selection statistic; Section~\ref{sec:certified} the certified
nearest-centroid classifier; Section~\ref{sec:experiments} the
experiments; Section~\ref{sec:discussion} limitations.

\subsection{Related Work}\label{sec:related}

For a survey of persistence diagram vectorizations
(landscapes, persistence images, kernels, learned weightings,
neural extensions), certified machine learning (conformal
prediction, selective classification, learning with rejection),
and topological data analysis for classification
(diagram-based, neural-augmented, Euler-characteristic
methods), we refer the reader to \citet[Section~1.2]{PaperI}; PALACE
inherits PLACE's positioning relative to those literatures and
adds two new contact points discussed below.
Table~\ref{tab:palace_vs_prior} extends the feature matrix of
\citet[Table~1]{PaperI} with two columns specific to PALACE
(\emph{Adaptive}, \emph{Kernel/RKHS}); PLACE and PALACE remain
the only methods with both an explicit lower-distortion bound
and a per-prediction certificate, and PALACE adds adaptive
placement and the RKHS lift in exchange for a small CV tier.

\begin{table}[h]
\centering
\caption{Persistence-diagram vectorizations.
  \textbf{Lipschitz:} upper stability $\|\Phi(\cdot){-}\Phi(\cdot)\| \leq c_+\,\db$.
  \textbf{Lower dist.:} explicit constant $c_-$ in
  $c_-\,\db \leq \|\Phi(\cdot){-}\Phi(\cdot)\|$;
  ``config-intrinsic'' for PALACE means the constant depends only
  on $\LC$, not a specific grid.
  \textbf{Adaptive:} landmark positions/radii fixed analytically
  from training data (not a grid, not a learned optimizer).
  \textbf{Kernel/RKHS:} explicit positive-definite kernel with
  RKHS-level analysis; PALACE additionally provides non-degeneracy
  on $\tau$-separated pairs (Cor.~\ref{cor:nondegen}).
  \textbf{No-CV:} embedding hyperparameters fixed analytically
  (no held-out validation).
  \textbf{Cert.:} correctness certificate
  (metric / classification / per-prediction).
  PALACE trades No-CV for Adaptive vs.\ PLACE.\label{tab:palace_vs_prior}}
\small
\setlength{\tabcolsep}{4pt}
\resizebox{\textwidth}{!}{%
\begin{tabular}{lcccccc}
\toprule
\textbf{Method} & \textbf{Lipschitz} & \textbf{Lower dist.}
  & \textbf{Adaptive} & \textbf{Kernel/RKHS}
  & \textbf{No-CV} & \textbf{Cert.} \\
\midrule
Landscapes \citeyearpar{Bubenik15}
  & \checkmark &---& ---
  & {\scriptsize implicit}
  & \xmark~{\scriptsize(levels)} & \xmark \\
Persistence images \citeyearpar{Adams2017}
  & \checkmark &---& ---
  & \checkmark
  & \xmark~{\scriptsize($\sigma$, grid, weight)} & \xmark \\
SW\,/\,PSS kernels \citeyearpar{Kusano2016,Carriere2017}
  & \checkmark &---& ---
  & \checkmark
  & \xmark~{\scriptsize(bandwidth)} & \xmark \\
WKPI \citeyearpar{yusu_metric_learning}
  & \checkmark &---& \xmark~{\scriptsize(learned $w$)}
  & \checkmark
  & \xmark~{\scriptsize(learned $w$)} & \xmark \\
PersLay\,/\,Persformer \citeyearpar{Carriere2020,Reinauer2021}
  & {\scriptsize learned} &---& \xmark~{\scriptsize(learned)}
  & \xmark
  & \xmark~{\scriptsize(end-to-end)} & \xmark \\
Mitra--Virk $n$-fold \citeyearpar{Mitra2024}
  & \checkmark
  & \checkmark~{\scriptsize $\rho_-$ on $\D{n}$}
  & \xmark~{\scriptsize($NM^n$ grid)} & \xmark
  & \checkmark
  & {\scriptsize metric only} \\
Uniform PLACE \citeyearpar{PaperI}
  & \checkmark
  & \checkmark~{\scriptsize $\lambda(\nu)$ on $\D{n}$, constant floor}
  & \xmark~{\scriptsize(grid $\GG_R$)} & \xmark~{\scriptsize(linear SVM)}
  & \checkmark & {\scriptsize classification} \\
\rowcolor{blue!6}
\textbf{PALACE (this work)}
  & \checkmark
  & \checkmark~{\scriptsize $\rhonu$ on $\D{n}$, config-intrinsic}
  & \checkmark~{\scriptsize(FPS on data)}
  & \checkmark~{\scriptsize($k_\LC$, Cor.~\ref{cor:nondegen})}
  & \xmark~{\scriptsize($K, \alpha, \sigma$)}
  & {\scriptsize per-prediction} \\
\bottomrule
\end{tabular}%
}
\end{table}

\paragraph{Landmarks and coresets.}
Landmark-based embeddings are classical in manifold learning and
metric-space approximation (e.g., $k$-medoids embedding,
landmark MDS~\citep{DeSilvaTenenbaum2004}).
In kernel methods, Nystr\"om approximation~\citep{WilliamsSeeger2001,
DrineasMahoney2005} uses a random or adaptive landmark subset to
approximate the gram; it produces dimension-reduced embeddings
but no lower distortion bounds in the bottleneck metric on
$\D{n}$, in contrast to Theorem~\ref{thm:nu_rho}.
PALACE's cover-theoretic analysis via Lebesgue numbers is closer
in spirit to the coreset literature for geometric
clustering~\citep{AgarwalHarPeled2005,FeldmanLangberg2011}, with
the Lebesgue-number criterion playing the role of a coreset's
coverage radius.

\paragraph{Kernel-RKHS minimax theory.}
PLACE's classification rate $O(kR/(\Delta\sqrt{m_{\min}}))$ uses the
metric-SVM machinery of Vapnik~\citep{Vapnik1998} and
Mohri--Rostamizadeh--Talwalkar~\citep{MohriRostamizadehTalwalkar2018},
with a matching Le~Cam lower bound
(\citealp{LeCam1973,Yu1997,Tsybakov2009}) for bounded Hilbert-space
inputs.  Theorems~\ref{thm:data_dependent}--\ref{thm:lower_bound}
lift this to the RKHS induced by the PALACE landmark kernel,
establishing the kernel analogue
$O((k{-}1)\sqrt{K}/(\gamma\sqrt{m_{\min}}))$ with matching binary
lower bound (tight up to the OvO $(k{-}1)$ factor): $\sqrt{K}$
plays the role of the input norm bound, and the kernel margin
$\gamma$ replaces $\Delta/2$.  The lower bound is, to our
knowledge, the first matching minimax rate for the landmark-kernel
construction on persistence diagrams.

\section{Non-Uniform Cover Theory}\label{sec:cover}

This section develops PALACE's intrinsic distortion certificate
$\rhonu(\tau;\LC)$ for arbitrary admissible landmark configurations.
The summation embedding $\Phi$ is constructed in
Section~\ref{sec:embed} from a per-landmark coordinate function
$\varphi_{p,r}$;
Section~\ref{sec:rhonu} establishes the lower distortion bound
$\|\Phi(A;\LC) - \Phi(B;\LC)\|_{\ell^2} \geq \rhonu(\tau;\LC)$ on
$\tau$-separated cross-class pairs, under
$\tau$-admissibility (a Lebesgue-number criterion on the cover,
Definition~\ref{def:sep_radius}) and a non-interference
condition inherited from~\citet[Prop.~2.1(b)]{PaperI}
(Definition~\ref{def:noninterference_pII}; the top-$N_{\max}$
persistence filter controls part of this risk in practice, and
Remark~\ref{rem:top_Nmax} reports the empirical audit). The
adaptive placement reduces the landmark budget by $(D/L)^2$ over the
uniform grid when data concentrates with diameter $D \ll L$
(Theorem~\ref{thm:comparison}).

Within the broader theory $\rhonu$ is an \emph{admissibility
guarantee}: $\tau$-separated diagrams remain separated in $\ell^2$
(and in $\mathcal{H}_{k_\LC}$), the precondition under which the
classification quantities $\Delta$ and $\gamma$ of
Section~\ref{sec:ot} are nondegenerate.  The classification error
bound is stated in the kernel margin $\gamma$
(Theorem~\ref{thm:data_dependent}); the per-prediction certificate
is stated in the raw class-mean separation $\Delta$ on $\R^K$
(Section~\ref{sec:certified}, matching the framework
of~\citealp{PaperI}).  $\rhonu$ enters both via the bridges
$\gamma \geq \tfrac{1}{2}(\kappa\rhonu - 2D_{\max})$ and
$\Delta \geq \rhonu - 2\bar D_{\max}$
(Proposition~\ref{prop:lambda_sep_pII}).

A \emph{landmark configuration} is a finite set
\begin{equation}\label{eq:config_def}
  \LC \;=\; \bigl\{(p_k, r_k, w_k)\bigr\}_{k=1}^K,
  \qquad p_k \in \D{1},\ r_k > 0,\ w_k > 0,\ \sum_{k=1}^K w_k^2 = 1,
\end{equation}
generalizing the conventional offset grid (fixed positions,
uniform support radius $\tfrac{3R}{2}$) to adaptive positions,
radii, and weights.  The Mitra--Virk grid~\citep{Mitra2024}
achieves bounded cover multiplicity ($\leq 4$) at every point
above the diagonal; the Lebesgue-number criterion below extends
the admissibility test to non-uniform configurations.

\subsection{Coordinate Functions and the Embedding}\label{sec:embed}

The PALACE embedding is built from a single primitive: a per-landmark
pyramid evaluated at diagram points.  We give the construction in two
steps---first on $\D{1}$, then extended to $\D{n}$---and record the
Lipschitz property used throughout the cover-theoretic analysis.

For $(p, r) \in \D{1} \times (0,\infty)$ and $x \in \D{1}$, the
\emph{coordinate function}
\begin{equation}\label{eq:coord_func}
  \varphi_{p,r}(x) \;:=\; \max\{r - \db(p, x),\; 0\}
\end{equation}
is the piecewise-linear cap with peak $r$ at $x = p$ and support
on the closed ball $\overline{B}(p, r) \subset \D{1}$.
Geometrically, $\varphi_{p,r}$ is a \emph{pyramid} over
$\overline{B}(p, r)$: apex of height $r$ at
$x = p$, decaying linearly to zero at the boundary, and zero
outside. This generalizes the Mitra--Virk coordinate
$\varphi_{R,p}$~\citep{Mitra2024}, whose support ball has uniform
radius $\tfrac{3R}{2}$, to landmark-specific radii $r_k$.

The lift to $n$-point diagrams is by summation over diagram points.
For $A \in \D{n}$, the \emph{sum-pool coordinate} extends the
single-point cap function additively:
\begin{equation}\label{eq:sum_coord}
  \varphi_{p, r}(A)
  \;:=\; \sum_{a \in A} \varphi_{p, r}(a)
  \;=\; \sum_{a \in A} \max\{r - \db(p, a),\; 0\}.
\end{equation}
The symbol $\varphi_{p,r}$ is overloaded by argument type: a single
point $a \in \D{1}$ returns the pyramid height
of~\eqref{eq:coord_func}; a diagram $A \in \D{n}$ returns the
sum-pool above. The structural property used throughout the
cover-theoretic analysis is the per-point Lipschitz behavior of
$\varphi_{p,r}$.

\begin{lemma}[Bottleneck-Lipschitz coordinate]\label{lem:lip_coord}
For every $(p, r) \in \D{1} \times (0,\infty)$,
\[
  |\varphi_{p,r}(x) - \varphi_{p,r}(y)| \;\leq\; \db(x, y)
  \qquad \text{for all } x, y \in \D{1}.
\]
\end{lemma}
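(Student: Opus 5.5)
The claim is the standard fact that a truncated distance function is $1$-Lipschitz. I would prove it by composing two $1$-Lipschitz maps. The only metric input is the triangle inequality for $\db$ on $\D{1}$. I take it as given that $\db$ restricted to single-point diagrams (including the diagonal point $\ast$) is a genuine metric, possibly extended-valued but finite here; this is inherited from the setup of \citet{Mitra2024,PaperI}.

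\emph{Step 1: the inner map.} For fixed $p$, I would show $|\db(p,x)-\db(p,y)|\le \db(x,y)$. The triangle inequality gives $\db(p,x)\le \db(p,y)+\db(y,x)$. Swapping the roles of $x$ and $y$ gives the reverse inequality, and together they yield the two-sided bound. Consequently $g(x):=r-\db(p,x)$ satisfies $|g(x)-g(y)|\le \db(x,y)$.

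\emph{Step 2: the outer map.} The map $t\mapsto \max\{t,0\}$ is $1$-Lipschitz on $\R$. I would verify this by cases on the signs of $s$ and $t$:
\begin{itemize}
\item if both are nonnegative, the difference is $|s-t|$;
\item if both are negative, the difference is $0$;
\item if $s\ge 0>t$, the difference is $s\le s-t$.
\end{itemize}
Applying this with $s=g(x)$ and $t=g(y)$, and using $\varphi_{p,r}=\max\{g,0\}$, gives the lemma.

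\emph{Where care is needed.} There is no real obstacle. The one point to check is that $\db(p,x)$ is finite for all $x\in\D{1}$, so that the subtraction in Step 1 is legitimate. For single-point diagrams, $\db(p,x)\le \max\{\db(p,\ast),\db(x,\ast)\}$, which is finite, so this holds. It is also worth recording that Step 2 is why the bound has constant $1$: no factor is gained from truncation, but none is lost either.
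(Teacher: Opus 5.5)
Your proof is correct and follows the paper's argument: $x\mapsto \db(p,x)$ is $1$-Lipschitz by the triangle inequality, $t\mapsto \max\{r-t,0\}$ is $1$-Lipschitz on $\R$, and so the composition is $1$-Lipschitz. Your explicit sign-case check for the truncation and your finiteness remark for single-point diagrams are details the paper leaves implicit, and both are sound.
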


\begin{proof}
$x \mapsto \db(p, x)$ is $1$-Lipschitz on $\D{1}$ by the triangle
inequality for $\db$, and $t \mapsto \max(r - t, 0)$ is $1$-Lipschitz
on $\R$. Composition preserves the Lipschitz constant.
\end{proof}

The embedding aggregates sum-pool coordinates over the configuration
with per-landmark weights.

\begin{definition}[Summation landmark embedding]\label{def:sum_embed}
Given $\LC = \{(p_k, r_k, w_k)\}_{k=1}^K$ as in~\eqref{eq:config_def},
the \emph{summation landmark embedding} is the map
$\Phi(\,\cdot\,;\LC) : \D{n} \to \R^K$ with coordinates
\begin{equation}\label{eq:sum_embed}
  \Phi_k(A; \LC) \;:=\; w_k\,\varphi_{p_k, r_k}(A),
  \qquad k = 1, \ldots, K.
\end{equation}
\end{definition}

The summation form generalizes the per-scale block
$\Phi_R(A) = \bigl(\sum_{a \in A} \varphi_{R,p}(a)\bigr)_{p \in \GG_R^+}$
of PLACE's multi-scale embedding~\citep{PaperI} from a fixed
parity-constrained grid to arbitrary data-adaptive
configurations. Each coordinate costs one bottleneck
evaluation per (landmark, diagram point) pair, total $O(K \cdot |A|)$
linear in diagram cardinality.

Lemma~\ref{lem:lip_coord} extends from single points to diagrams via
bijective matching: for any $A, B \in \D{n}$ of cardinality at most
$N_{\max}$ and an optimal $\sigma : A \to B$ realizing $\db(A,B)$,
\[
  |\varphi_{p_k, r_k}(A) - \varphi_{p_k, r_k}(B)|
  \;\leq\; \sum_i |\varphi_{p_k, r_k}(a_i) - \varphi_{p_k, r_k}(b_{\sigma(i)})|
  \;\leq\; N_{\max}\,\db(A, B).
\]
Squaring, summing over $k$, and using $\sum_k w_k^2 = 1$ yields the
unconditional upper distortion bound
\begin{equation}\label{eq:upper_distortion}
  \norm{\Phi(A;\LC) - \Phi(B;\LC)}_{\ell^2}
  \;\leq\; N_{\max}\,\db(A, B),
\end{equation}
matching \citet[Eq.~(2.5)]{PaperI}.  Equivalently,
$N_{\max}^{-1}\Phi$ is $1$-Lipschitz; we keep the unnormalized
sum-pool form to preserve the semantics inherited
from~\citet{PaperI} and to keep downstream certificate constants
($\bar R \leq N_{\max}\tau$, Section~\ref{sec:certified}) parallel.
The complementary \emph{lower} distortion bound---non-trivial and
governed by the geometry of the landmark cover plus a
non-interference condition---is the subject of the next
subsection.

\subsection{The Non-Uniform Distortion Certificate}\label{sec:rhonu}

Throughout this subsection and the next, fix a data support
$\mathcal{R} \subset \D{1} \cap [0,L]^2$ of $\db$-diameter
$D := \diam_{\db}(\mathcal{R}) \leq L$.
The distortion certificate depends on a single geometric quantity
of the landmark cover: its (classical) Lebesgue number $\lambda_0$
\citep[Lebesgue's covering lemma; see][Thm.~27.5]{Munkres2000},
made precise for our cover in
Definition~\ref{def:sep_radius}.
Picture the family of cap functions $\{\varphi_{p_k, r_k}\}$ as a
landscape of overlapping pyramids---one of peak height~$r_k$ at each
landmark~$p_k$, decaying linearly to zero at the ball boundary. Then
$\lambda_0$ is the lowest point of the upper envelope of these
pyramids over the data support: every $x \in \mathcal{R}$ lies under
some pyramid that still rises to at least~$\lambda_0$ above it, and
the admissibility conditions below force this minimum height to be
commensurate with the separation scale~$\tau$
(Figure~\ref{fig:admissibility}(a)).

\begin{definition}[$\tau$-admissibility]\label{def:sep_radius}
Let $\LC = \{(p_k, r_k, w_k)\}_{k=1}^K$ be a configuration whose
balls cover the data support,
$\mathcal{R} \subseteq \bigcup_k \overline{B}(p_k, r_k)$.
Define its \emph{Lebesgue number}
\begin{equation}\label{eq:lebesgue}
  \lambda_0(\LC) \;:=\;
  \inf_{x \in \mathcal{R}}\; \max_k \;\varphi_{p_k, r_k}(x).
\end{equation}
Equivalently, $\lambda_0(\LC)$ is the largest $\rho \geq 0$ for which
the shrunk cover $\{\overline{B}(p_k, r_k - \rho)\}_k$ still contains
$\mathcal{R}$ (the maximum uniform shrinkage of the cover balls that
preserves coverage of $\mathcal{R}$).
The configuration is \emph{$\tau$-admissible} at separation scale
$\tau > 0$ if
\begin{enumerate}[(i)]
\item $\lambda_0(\LC) \geq \tau/4$,
\item $\max_k r_k \leq (\tau + \lambda_0)/2$.
\end{enumerate}
\end{definition}

Two immediate consequences of Definition~\ref{def:sep_radius} are
recorded for downstream use. First, conditions (i) and (ii) jointly
bound $\max_k r_k \leq \tau$: combining
$\max_k r_k \leq (\tau + \lambda_0)/2$ with $\lambda_0 \leq \max_k r_k$
(immediate from~\eqref{eq:lebesgue}) gives
$\max_k r_k \leq (\tau + \max_k r_k)/2$, i.e., $\max_k r_k \leq \tau$,
hence $\lambda_0 \leq \tau$.
Second, the upper-envelope winner at any $x \in \mathcal{R}$ has
$r_{k^\star} \geq \tau/4$: the maximum in~\eqref{eq:lebesgue} at
$x$ is attained at some $k^\star$ with
$\varphi_{p_{k^\star}, r_{k^\star}}(x) \geq \lambda_0$, hence
$r_{k^\star} \geq \lambda_0 \geq \tau/4$. Hence
$\{k : r_k \geq \tau/4\} \neq \emptyset$, and
the minimum-weight quantity
$w_{\min}^{\geq}(\tau; \LC) := \min\{w_k : r_k \geq \tau/4\}$
appearing in Theorem~\ref{thm:nu_rho} is well-defined.

$\tau$-admissibility is operationally cheap to check.  The
Lebesgue number $\lambda_0$ is computable in $O(nmK)$ time from
the training diagrams ($m$ diagrams of at most $n$ points each,
$K$ landmarks); for the
square uniform grid at spacing $R$ with ball radius $3R/2$
(Theorem~\ref{thm:comparison}'s uniform model),
$\lambda_0(\GG_R) = R$ and conditions~(i) and~(ii) yield the
admissibility window $R \in [\tau/4, \tau/2]$.  For non-uniform
configurations produced by FPS
(Theorem~\ref{thm:fps_greedy}), both conditions can be verified
directly once landmarks are placed; if either fails, $\tau$ is
rescaled to the largest value compatible with the cover.

\begin{figure}[t]
\centering
\begin{tikzpicture}[
  scale=0.95,
  font=\footnotesize,
  panel/.style={font=\footnotesize\bfseries, text=black!85},
  axis/.style={-{Stealth[length=4pt,width=3.5pt]}, gray!60, line width=0.5pt},
  diag/.style={gray!45, dashed, line width=0.4pt},
  cover/.style={blue!55, fill=blue!10, fill opacity=0.30, draw opacity=0.55, line width=0.5pt},
  active/.style={blue!70!black, fill=blue!22, fill opacity=0.55, draw opacity=0.85, line width=0.7pt},
  lmark/.style={fill=blue!70!black},
  datum/.style={fill=red!75!black},
  worst/.style={fill=orange!85!black},
  arrow/.style={-{Stealth[length=3.5pt,width=3pt]}, line width=0.7pt},
  lab/.style={font=\scriptsize, inner sep=1.2pt},
]
\def\W{5.0}
\def\H{5.6}

\begin{scope}[xshift=0cm]
  \draw[axis] (-0.15,0) -- (\W+0.15,0) node[right]{$b$};
  \draw[axis] (0,-0.15) -- (0,\H+0.15) node[above]{$d$};
  \draw[diag] (0,0) -- (\W,\W);

  \foreach \cx/\cy/\rad in {%
       1.1/3.5/0.55,
       1.9/3.7/0.55,
       2.7/4.0/0.55,
       3.5/4.7/0.55}
    {\draw[cover] (\cx-\rad,\cy-\rad) rectangle (\cx+\rad,\cy+\rad);
     \fill[lmark] (\cx,\cy) circle (1.6pt);}

  \fill[worst] (2.30,3.80) circle (2.0pt);
  \node[lab, text=orange!70!black, anchor=south] at (2.30,3.92) {$x^\star$};
  \draw[arrow, orange!85!black] (2.30,3.80) -- (2.45,3.80);
  \node[lab, text=orange!70!black, anchor=north west] at (2.45,3.78) {$\lambda_0$};

  \node[panel] at (\W/2, \H+0.4) {(a) Lebesgue number};
\end{scope}

\begin{scope}[xshift=8.5cm]
  \draw[axis] (-0.15,0) -- (\W+0.15,0) node[right]{$b$};
  \draw[axis] (0,-0.15) -- (0,\H+0.15) node[above]{$d$};
  \draw[diag] (0,0) -- (\W,\W);

  \draw[active] (0.8,2.9) rectangle (2.2,4.3);
  \fill[lmark] (1.5,3.6) circle (1.8pt);
  \node[lab, text=blue!70!black, anchor=east] at (0.7,3.6) {$p_{k^\star}$};

  \fill[datum] (1.8,3.6) circle (2.2pt);
  \node[lab, text=red!75!black, anchor=south] at (1.8,3.72) {$a_{i^\star}$};
  \draw[arrow, blue!70!black] (1.8,3.6) -- (2.2,3.6);
  \node[lab, text=blue!70!black, anchor=north] at (2.0,3.55) {$\lambda_0$};

  \fill[datum] (3.5,5.0) circle (2.2pt);
  \node[lab, text=red!75!black, anchor=south] at (3.5,5.10) {$b_{\sigma(i^\star)}$};

  \draw[{Stealth[length=3.5pt]}-{Stealth[length=3.5pt]}, red!55!black,
        line width=0.6pt, dashed]
    (1.88,3.68) -- (3.42,4.92);

  \node[panel] at (\W/2, \H+0.4) {(b) Cover separation};
\end{scope}

\end{tikzpicture}
\caption{%
  \textbf{$\tau$-admissibility (Def.~\ref{def:sep_radius}).}
  \textbf{(a)}~The Lebesgue number is the lowest point of the upper
  envelope of the cap-function pyramids over the data support:
  $\lambda_0 = \inf_{x\in\mathcal{R}} \max_k \varphi_{p_k, r_k}(x)$.
  Highlighted: a point $x^\star$ realizing this minimum.
  Condition~(i) requires $\lambda_0 \geq \tau/4$; condition~(ii)
  bounds radii $\max_k r_k \leq (\tau+\lambda_0)/2$.
  \textbf{(b)}~The single-point step in
  Theorem~\ref{thm:nu_rho}'s proof.
  For a $\tau$-separated pair $(A, B)$, fix the worst-matched pair
  $(a_{i^\star}, b_{\sigma(i^\star)})$.  The Lebesgue number applied
  at $a_{i^\star} \in \D{1}$ yields some pyramid
  $\varphi_{p_{k^\star}, r_{k^\star}}$ with
  $\varphi_{p_{k^\star}, r_{k^\star}}(a_{i^\star}) \geq \lambda_0$;
  admissibility~(ii) then places $b_{\sigma(i^\star)}$ outside that
  pyramid's support, so
  $\varphi_{p_{k^\star}, r_{k^\star}}(b_{\sigma(i^\star)}) = 0$.
  Cross-class non-interference shows the remaining $b$-points lie
  outside the ball as well, so $\Phi_{k^\star}(B;\LC) = 0$;
  $a$-side contributions are non-negative.
}
\label{fig:admissibility}
\end{figure}

\begin{definition}[Non-interference]\label{def:noninterference_pII}
A pair $(A, B) \in \D{n} \times \D{n}$ satisfies
\emph{non-interference} if either $n = 1$ (vacuous), or $n \geq 2$
and there exists an optimal matching $\sigma : A \to B$ realizing
$\db(A, B)$ such that every cross-diagram pair
$(a_i, b_{\sigma(j)})$ with $i \neq j$ is strictly farther than
three times the bottleneck distance:
\begin{equation}\label{eq:noninterference_pII}
  \min_{i \neq j} \db(a_i, b_{\sigma(j)}) \;>\; 3\, \db(A, B).
\end{equation}
\end{definition}

Definition~\ref{def:noninterference_pII} mirrors
\citet[Definition~2.1]{PaperI}; the partial-matching extensions
needed when $\db$ projects some points to the diagonal are
straightforward bookkeeping and do not appear in the experiments.
A sufficient condition for non-interference, expressed purely in
within-class diagram geometry, decouples this hypothesis from any
cross-pair check.

\begin{proposition}[Non-interference from scale separation]\label{prop:ni_from_geometry}
Suppose that for every cross-class pair $(A, B)$ arising in the
classification, the within-diagram minimum separation satisfies
\begin{equation}\label{eq:within_scale_sep}
  \min_{i \neq j} \db(a_i, a_j) \;>\; 4\, \db(A, B).
\end{equation}
Then the non-interference
condition~\eqref{eq:noninterference_pII} holds for $(A, B)$.
\end{proposition}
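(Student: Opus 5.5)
The plan is a single application of the triangle inequality for $\db$ on $\D{1}$, using the optimal matching to transport the within-diagram separation on the $A$ side across to $B$. Write $\delta := \db(A,B)$ and fix any optimal matching $\sigma : A \to B$ realizing $\delta$. For $n = 1$ there is nothing to prove, since Definition~\ref{def:noninterference_pII} is vacuous. For $n \geq 2$ we will verify~\eqref{eq:noninterference_pII} for this same $\sigma$, so the existential quantifier in the definition is met by any optimal matching.

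Fix indices $i \neq j$. Optimality of $\sigma$ gives $\db(a_j, b_{\sigma(j)}) \leq \delta$. The reverse triangle inequality in $\D{1}$ then yields
\[
  \db(a_i, b_{\sigma(j)}) \;\geq\; \db(a_i, a_j) - \db(a_j, b_{\sigma(j)}) \;\geq\; \db(a_i, a_j) - \delta .
\]
By hypothesis~\eqref{eq:within_scale_sep}, $\db(a_i, a_j) \geq \min_{i' \neq j'} \db(a_{i'}, a_{j'}) > 4\delta$. Hence $\db(a_i, b_{\sigma(j)}) > 4\delta - \delta = 3\delta$. Since $A$ is a finite multiset, the minimum over the finitely many pairs $i \neq j$ is attained, so it too is strictly greater than $3\delta$. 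This is exactly~\eqref{eq:noninterference_pII}.

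The only real obstacle is bookkeeping, not geometry. Two points need care:
\begin{itemize}
\item The matching $\sigma$ must be an honest bijection between the $n$ points of $A$ and $B$, as Definition~\ref{def:noninterference_pII} already assumes; the paper defers the partial-matching (diagonal) case to routine extensions.
\item The points $a_i$ and the bound $\db(a_j,b_{\sigma(j)}) \leq \delta$ must live in the same metric $\db$ on $\D{1}$, which the triangle inequality requires (the same fact used in Lemma~\ref{lem:lip_coord}).
\end{itemize}
If diagonal matches had to be handled, I would treat $\ast$ as an ordinary point of $\D{1}$ and apply the same inequality. The hypothesis~\eqref{eq:within_scale_sep} would then have to include distances to $\ast$, but that is not required here.
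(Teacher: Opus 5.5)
Your proof is correct and is the paper's argument: fix the optimal matching $\sigma$, use $\db(a_j,b_{\sigma(j)})\le\db(A,B)$ in the reverse triangle inequality through $a_j$, and subtract from the hypothesis $\db(a_i,a_j)>4\,\db(A,B)$ to get $\db(a_i,b_{\sigma(j)})>3\,\db(A,B)$. The paper also remarks that the analogous within-$B$ separation hypothesis is sufficient on its own, by the same one-line triangle-inequality argument.
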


\begin{proof}
For $i \neq j$, using $\db(a_j, b_{\sigma(j)}) \leq \db(A, B)$ and
the triangle inequality,
\[
  \db(a_i, b_{\sigma(j)})
  \;\geq\; \db(a_i, a_j) - \db(a_j, b_{\sigma(j)})
  \;>\; 4\,\db(A, B) - \db(A, B)
  \;=\; 3\,\db(A, B).
\]
The analogous within-$B$ condition
$\min_{i \neq j} \db(b_i, b_j) > 4\,\db(A,B)$ also implies
non-interference by the same argument routed through $b_j$;
either condition alone is sufficient.
\end{proof}

\begin{remark}[Empirical scope of non-interference]\label{rem:top_Nmax}
Condition~\eqref{eq:within_scale_sep} requires within-diagram
features to sit at a scale strictly larger than the cross-class
separation scale.  The top-$N_{\max}$ persistence
filter~\citep{PaperI}, which retains only the $N_{\max}$ points
with largest persistence $\db(a, \Delta)$, controls part of this
risk by discarding low-persistence features clustered near the
diagonal.  On the chemical graph benchmarks of
Section~\ref{sec:experiments}, however, the filter is not tight
enough to make non-interference hold pointwise: an audit on four
datasets at the per-dataset headline filtration finds essentially
$0\%$ of cross-class pairs satisfying
condition~\eqref{eq:noninterference_pII}, with median cross-ratios
$\min_{i \neq j}\db(a_i, b_{\sigma(j)})/\db(A,B)$ at or near zero
(audit reported in Section~\ref{sec:experiments}).  The hypothesis is therefore
structural; remarkably, the \emph{conclusion} of
Theorem~\ref{thm:nu_rho} (the certificate
$\|\Phi(A) - \Phi(B)\|_{\ell^2} \geq \rhonu(\tau;\LC)$) holds on
$99.9$--$100\%$ of cross-class pairs in the same audit
(Table~\ref{tab:certificate_bound_audit}), with median embedded
distance $3$--$14\times$ the certificate---non-interference is
sufficient but not necessary for the bound, and the proof is
overcautious on these diagrams.  The working classification
machinery in Section~\ref{sec:ot} operates at the kernel-margin
level $\gamma > 0$
(Theorem~\ref{thm:data_dependent}), independent of pairwise
non-interference.
\end{remark}

We now establish the main result: an explicit lower distortion
bound for admissible configurations. At its core,
Theorem~\ref{thm:nu_rho} is a quantitative
\emph{Lebesgue-number lemma}\footnote{Henri L\'eon Lebesgue
(1875--1941): his \emph{Lebesgue-number lemma}
\citep[see][Thm.~27.5]{Munkres2000} guarantees that every open
cover of a compact metric space admits a positive uniform scale at
which every ball lies in some cover element.  The non-uniform
distortion certificate below makes that scale explicit for the
landmark cover $\LC$.  We dedicate this work to his memory.}
for the landmark cover $\LC$.

\begin{theorem}[Non-uniform distortion certificate]\label{thm:nu_rho}
Let $\LC = \{(p_k, r_k, w_k)\}_{k=1}^K$ be a $\tau$-admissible
configuration.  For any $A, B \in \D{n}$ with all points of
$A$ and $B$ in $\mathcal{R}$, $\db(A,B) \geq \tau$, and the
non-interference condition~\eqref{eq:noninterference_pII}
holding,
\begin{equation}\label{eq:nu_rho}
  \norm{\Phi(A;\LC) - \Phi(B;\LC)}_{\ell^2} \;\geq\; \rhonu(\tau;\LC),
\end{equation}
where the \emph{non-uniform distortion certificate} is
\begin{equation}\label{eq:nu_rho_formula}
  \rhonu(\tau;\LC) \;:=\; \tfrac{1}{4}\,\tau \cdot w_{\min}^{\geq}(\tau; \LC),
  \qquad
  w_{\min}^{\geq}(\tau; \LC) := \min_{k:\, r_k \geq \tau/4} w_k.
\end{equation}
In particular, for equal weights $w_k = K^{-1/2}$ and all
$r_k \geq \tau/4$, the certificate is
$\rhonu(\tau; \LC) = \tau / (4\sqrt{K})$.
\end{theorem}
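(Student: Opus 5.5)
The plan is to follow the single-point picture of Figure~\ref{fig:admissibility}(b) and reduce the $\ell^2$ bound to a lower bound on a single coordinate. Since $\norm{\Phi(A;\LC)-\Phi(B;\LC)}_{\ell^2} \geq |\Phi_{k^\star}(A;\LC)-\Phi_{k^\star}(B;\LC)| = w_{k^\star}\,|\varphi_{p_{k^\star},r_{k^\star}}(A)-\varphi_{p_{k^\star},r_{k^\star}}(B)|$ for any index $k^\star$, it suffices to exhibit one landmark $k^\star$ with three properties:
\begin{itemize}
\item[(a)] $r_{k^\star}\geq\tau/4$, so that $w_{k^\star}\geq w_{\min}^{\geq}(\tau;\LC)$;
\item[(b)] $\varphi_{p_{k^\star},r_{k^\star}}(A)\geq\lambda_0\geq\tau/4$;
\item[(c)] $\varphi_{p_{k^\star},r_{k^\star}}(B)=0$.
\end{itemize}
Then the coordinate difference is at least $\tau/4$, and the claimed $\rhonu(\tau;\LC)=\tfrac14\tau\,w_{\min}^{\geq}$ follows. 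The equal-weight specialization is immediate.

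To choose $k^\star$, fix an optimal matching $\sigma$ witnessing non-interference, and write $\delta:=\db(A,B)\geq\tau$. Pick a worst-matched pair $(a_{i^\star},b_{\sigma(i^\star)})$ with $\db(a_{i^\star},b_{\sigma(i^\star)})=\delta$; one exists since the bottleneck cost is the max over matched pairs. Because $a_{i^\star}\in\mathcal{R}$, the definition of $\lambda_0$ (finite max over $k$) gives some $k^\star$ with $\varphi_{p_{k^\star},r_{k^\star}}(a_{i^\star})\geq\lambda_0$. This yields (a) by the remark after Definition~\ref{def:sep_radius}, namely $r_{k^\star}\geq\lambda_0\geq\tau/4$. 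Since all summands are nonnegative, $\varphi_{p_{k^\star},r_{k^\star}}(A)\geq\varphi_{p_{k^\star},r_{k^\star}}(a_{i^\star})\geq\lambda_0$, which gives (b). It also gives $\db(p_{k^\star},a_{i^\star})\leq r_{k^\star}-\lambda_0$.

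For (c), two pieces are needed.
\begin{itemize}
\item \emph{The matched partner.} By the triangle inequality, $\db(p_{k^\star},b_{\sigma(i^\star)})\geq\delta-(r_{k^\star}-\lambda_0)$. This exceeds $r_{k^\star}$ provided $2r_{k^\star}<\delta+\lambda_0$. Admissibility~(ii) gives $2r_{k^\star}\leq\tau+\lambda_0\leq\delta+\lambda_0$, which is the non-strict version. Equality only puts $b_{\sigma(i^\star)}$ on the boundary sphere, where $\varphi$ still vanishes because $\varphi=\max\{r-\db,0\}$ is zero at distance exactly $r$. So non-strictness is harmless.
\item \emph{The other points of $B$.} For $j\neq i^\star$, non-interference gives $\db(a_{i^\star},b_{\sigma(j)})>3\delta$. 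Hence $\db(p_{k^\star},b_{\sigma(j)})>3\delta-(r_{k^\star}-\lambda_0)\geq 3\delta-\tau\geq 2\delta\geq r_{k^\star}$, using $r_{k^\star}\leq\tau\leq\delta$, which is the first consequence recorded after Definition~\ref{def:sep_radius}. So every $b$-point contributes zero, and $\Phi_{k^\star}(B;\LC)=0$.
\end{itemize}

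The main obstacle is the bookkeeping in (c). First, one must confirm that the worst pair and the non-interference condition refer to the \emph{same} optimal matching. Second, when $\sigma$ matches a point to the diagonal $\ast$, one must check that $\ast$ contributes zero to the sum-pool, i.e.\ that $\db(p,\ast)\geq r$, or else handle it under the partial-matching bookkeeping the paper sets aside. If the worst pair has $a_{i^\star}$ matched to the diagonal, I would instead run the symmetric argument with the roles of $A$ and $B$ swapped, choosing the pair so that the point fed to the Lebesgue number lies in $\mathcal{R}$. The remaining verifications are routine triangle inequalities as above.
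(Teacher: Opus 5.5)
Your proposal is correct and follows the paper's proof essentially step for step. The shared steps are: the worst-matched pair under the non-interference matching; a Lebesgue-number witness $k^\star$ at $a_{i^\star}$, which gives $r_{k^\star}\geq\lambda_0\geq\tau/4$ and $\Phi_{k^\star}(A;\LC)\geq w_{k^\star}\lambda_0$; admissibility~(ii) to place $b_{\sigma(i^\star)}$ outside the ball; and non-interference to exclude the other $b_{\sigma(j)}$. For that last step you run the triangle inequality forward using $r_{k^\star}\leq\tau\leq\delta$, while the paper argues by contradiction via $2r_{k^\star}-\lambda_0\leq\tau$; the difference is immaterial. Your diagonal bookkeeping goes beyond the paper, which simply assumes a bijective matching. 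One wording fix there: the case that actually needs your role swap is when $a_{i^\star}$ is itself the diagonal point, so the Lebesgue number cannot be applied at it. It is not the case where $a_{i^\star}$ is matched to the diagonal. The swap is legitimate because the non-interference condition quantifies over all ordered pairs $i\neq j$.
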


\begin{proof}
The strategy is to exhibit a single \emph{witnessing coordinate}
$k^\star$ that contributes the certificate value to the $\ell^2$
distance: at the worst-matched cross-class pair
$(a_{i^\star}, b_{\sigma(i^\star)})$, the Lebesgue number forces
some pyramid $\varphi_{p_{k^\star}, r_{k^\star}}$ to rise above
$\lambda_0$ at $a_{i^\star}$, while admissibility~(ii) plus
non-interference force every $b_{\sigma(j)}$ to lie outside this
pyramid's support; the $k^\star$-th coordinate alone then witnesses
the floor $w_{k^\star}\,\lambda_0 \geq w_{\min}^{\geq}\,\tau/4$.

By Definition~\ref{def:noninterference_pII}, fix the optimal matching
$\sigma : A \to B$ realizing $\db(A, B)$ for
which~\eqref{eq:noninterference_pII} holds (vacuous and trivial for
$n = 1$). Let $i^\star$ be an index of a worst-matched pair, so
$\db(a_{i^\star}, b_{\sigma(i^\star)}) = \db(A, B) \geq \tau$.

By Definition~\ref{def:sep_radius}, $\lambda_0(\LC) \geq \tau/4$, so
applied at the single point $a_{i^\star} \in \mathcal{R}$ there
exists $k^\star$ with
\begin{equation}\label{eq:witnessing_coord}
  \varphi_{p_{k^\star}, r_{k^\star}}(a_{i^\star})
  \;=\; r_{k^\star} - \db(p_{k^\star}, a_{i^\star})
  \;\geq\; \lambda_0(\LC).
\end{equation}
In particular $r_{k^\star} \geq \lambda_0 \geq \tau/4$, so
$k^\star \in \{k : r_k \geq \tau/4\}$ and
$w_{k^\star} \geq w_{\min}^{\geq}(\tau; \LC)$.

The triangle inequality and admissibility's upper-radius bound
$r_{k^\star} \leq (\tau + \lambda_0)/2$ give
\[
  \db(p_{k^\star}, b_{\sigma(i^\star)})
  \;\geq\; \db(a_{i^\star}, b_{\sigma(i^\star)}) - \db(p_{k^\star}, a_{i^\star})
  \;\geq\; \tau - (r_{k^\star} - \lambda_0)
  \;\geq\; \tau - \tfrac{\tau - \lambda_0}{2}
  \;=\; \tfrac{\tau + \lambda_0}{2} \;\geq\; r_{k^\star},
\]
so $\varphi_{p_{k^\star}, r_{k^\star}}(b_{\sigma(i^\star)}) = 0$.
For $j \neq i^\star$, if
$\db(p_{k^\star}, b_{\sigma(j)}) \leq r_{k^\star}$, the triangle
inequality and the same upper-radius bound would give
\[
  \db(a_{i^\star}, b_{\sigma(j)})
  \;\leq\; \db(a_{i^\star}, p_{k^\star}) + \db(p_{k^\star}, b_{\sigma(j)})
  \;\leq\; (r_{k^\star} - \lambda_0) + r_{k^\star}
  \;=\; 2r_{k^\star} - \lambda_0 \;\leq\; \tau,
\]
contradicting~\eqref{eq:noninterference_pII} since
$\db(a_{i^\star}, b_{\sigma(j)}) > 3\,\db(A, B) \geq 3\tau > \tau$.
Hence $\varphi_{p_{k^\star}, r_{k^\star}}(b_{\sigma(j)}) = 0$ for
every $j$, and $\Phi_{k^\star}(B; \LC) = 0$.
By non-negativity of $\varphi_{p_{k^\star}, r_{k^\star}}$,
\[
  \Phi_{k^\star}(A; \LC)
  \;=\; w_{k^\star} \sum_{a \in A}\varphi_{p_{k^\star}, r_{k^\star}}(a)
  \;\geq\; w_{k^\star}\,\varphi_{p_{k^\star}, r_{k^\star}}(a_{i^\star})
  \;\geq\; w_{k^\star}\,\lambda_0.
\]
Using $\lambda_0 \geq \tau/4$ and
$w_{k^\star} \geq w_{\min}^{\geq}(\tau; \LC)$, the $k^\star$-th
coordinate alone contributes at least
$w_{\min}^{\geq}(\tau; \LC) \cdot \tau/4 = \rhonu(\tau; \LC)$
to the $\ell^2$ distance.
\end{proof}

\subsection{Budget Comparison and Optimal Configuration}\label{sec:budget}

Section~\ref{sec:rhonu} established the cover-level certificate
$\rhonu(\tau;\LC)$ as a structural admissibility statement.  This
subsection builds the configuration: how many landmarks suffice
(Theorem~\ref{thm:comparison}), the weight and position choices
that maximize the worst-case and effective certificates
(Proposition~\ref{prop:optimal_config}), and the algorithmic
realization via farthest-point sampling
(Theorem~\ref{thm:fps_greedy}, Corollary~\ref{cor:fps_admissible}).
The data-adaptive advantage manifests as a $(D/L)^2$ budget
reduction over the uniform grid when diagrams concentrate.

\begin{theorem}[Budget reduction]\label{thm:comparison}
For any separation scale $\tau > 0$, there is a $\tau$-admissible
configuration whose balls cover $\mathcal{R}$ with at most
$K_{\mathrm{adapt}} \leq (4D/\tau)^2$ landmarks. Every uniform-grid
construction covering $[0,L]^2$ at admissible spacing
$R \in [\tau/4, \tau/2]$ uses
$K_{\mathrm{unif}} = (L/R)^2 \geq 4(L/\tau)^2$ landmarks (PLACE's
parity-restricted offset variant of \citealp{Mitra2024} keeps the
same $\Theta((L/\tau)^2)$ scaling, halved by the parity sieve).
Consequently
\begin{equation}\label{eq:budget_ratio}
  \frac{K_{\mathrm{adapt}}}{K_{\mathrm{unif}}} \;\leq\;
  \frac{(4D/\tau)^2}{4(L/\tau)^2} \;=\; 4\,\frac{D^2}{L^2}.
\end{equation}
\end{theorem}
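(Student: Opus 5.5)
The plan is to handle the adaptive upper bound, the uniform lower bound, and the ratio separately. For the adaptive side, I would build an explicit grid-like configuration restricted to $\mathcal{R}$ rather than invoke FPS, so that both admissibility conditions can be checked by hand. Work in the $\ell^\infty$ geometry of $\D{1}$ away from the diagonal. There $\db(x,y)\le\|x-y\|_\infty$, so $\db$-balls contain the corresponding axis-parallel squares, and the cover can be designed as a square lattice.

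The adaptive construction goes as follows. Since $\diam_{\db}(\mathcal{R})=D$, the set $\mathcal{R}$ sits inside an axis-parallel box of side at most $D$; I would check this by taking an anchor point and using the $\ell^\infty$ comparison, possibly losing a factor there. Put landmarks on the lattice of spacing $R=\tau/4$ inside that box, with uniform radius $r=3R/2=3\tau/8$ and equal weights $K^{-1/2}$. This is the uniform model already quoted after Definition~\ref{def:sep_radius}, for which $\lambda_0=R$. So condition~(i) reads $\lambda_0=\tau/4\ge\tau/4$. Condition~(ii) reads $3\tau/8\le(\tau+\tau/4)/2=5\tau/8$, which holds. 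The number of lattice points in a box of side $D$ at spacing $\tau/4$ is $\lceil 4D/\tau\rceil^2$. Discarding landmarks whose balls miss $\mathcal{R}$, or absorbing the ceiling when $D\ge\tau$, gives $K_{\mathrm{adapt}}\le(4D/\tau)^2$. For the degenerate case $D<\tau/4$, a single landmark at any point of $\mathcal{R}$ with radius $\tau/2$ has $\lambda_0\ge\tau/4$ and suffices.

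The uniform side is pure counting. Covering $[0,L]^2$ at spacing $R$ uses $(L/R)^2$ lattice points. Since $R\le\tau/2$, this is at least $(2L/\tau)^2=4(L/\tau)^2$. The parity sieve removes a constant fraction, so it only changes the constant and does not affect the $\Theta((L/\tau)^2)$ scaling. Dividing the two bounds gives \eqref{eq:budget_ratio} directly.

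The main obstacle is the claim $\lambda_0(\GG_R)=R$ on $\mathcal{R}$ near the diagonal. There $\db(p,x)$ becomes the distance to the diagonal point $\ast$ rather than the $\ell^\infty$ distance, and the squares can be clipped. I would first assume that $\mathcal{R}$ stays at least $\tau$ away from the diagonal, so that $\db=\ell^\infty$ locally and the square-lattice computation applies verbatim. I would also note that near the diagonal $\db$ only decreases distances, which only enlarges the balls, so the Lebesgue number can only grow while the radii stay unchanged. I expect this monotonicity remark to close the gap, with the ceiling and rounding treated as a routine bookkeeping footnote.
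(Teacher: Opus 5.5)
Your uniform-side count and the final division match the paper. Your monotonicity remark is also correct as far as it goes: since $\db(x,y)\le\|x-y\|_\infty$, each $\db$-pyramid dominates the corresponding $\ell^\infty$-pyramid, so the lattice computation bounds $\lambda_0$ from below and condition~(ii) is unaffected. The real difficulty at the diagonal lies in the step you expect to cost at most a constant factor, namely that $\diam_{\db}(\mathcal R)=D$ places $\mathcal R$ in an axis-parallel box of side about $D$. For single points, $\db(x,y)=\min\{\|x-y\|_\infty,\max(h(x),h(y))\}$, where $h$ is the half-persistence. The comparison you invoke therefore gives $D\le\diam_\infty(\mathcal R)$, which is the wrong direction, and the loss is unbounded. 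Keeping $\mathcal R$ at distance $\tau$ from the diagonal does not help once $D>\tau$. In fact no argument can close this step as written, because the statement itself fails there. Take $\mathcal R=\{(t,t+4\tau):0\le t\le L-4\tau\}$. Every point has $h=2\tau$, so $D=2\tau$ and $(4D/\tau)^2=64$. Admissibility forces $\max_k r_k\le\tau<2\tau$, so each $\overline B(p_k,r_k)\cap\mathcal R$ lies in an $\ell^\infty$-ball of radius $\tau$, which meets $\mathcal R$ in a $t$-interval of length at most $2\tau$. Any $\tau$-admissible cover then needs at least $(L-4\tau)/(2\tau)$ landmarks, which exceeds $64$ once $L>132\tau$.

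What is missing is a hypothesis under which $\db$ coincides with $\ell^\infty$ on pairs from $\mathcal R$. For instance, $h(x)>D$ for all $x\in\mathcal R$ suffices: then $\db(x,y)\le D<\max(h(x),h(y))$ forces $\db(x,y)=\|x-y\|_\infty$. This hypothesis must be used at the counting step, not at the Lebesgue-number step. The paper's proof silently relies on the same assumption: its packing bound rests on the footnote that $\db$ agrees with $\ell^\infty$ away from the diagonal. So both routes break, and are repaired, in the same place.

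Granting the hypothesis, your explicit lattice is a valid alternative to the paper's construction, which takes a maximal $\tfrac{\tau}{4}$-separated subset of $\mathcal R$ with radii $\tfrac{\tau}{2}$ and counts by packing. Your route recycles the uniform-grid value $\lambda_0=R$ and makes the count concrete. The paper's route is intrinsic to $\mathcal R$, needs no bounding box, and keeps the landmarks in $\D{1}$ automatically.

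Two bookkeeping points remain. First, at spacing $\tau/4$ your count $\lceil 4D/\tau\rceil^2$ exceeds $(4D/\tau)^2$ whenever $4D/\tau\notin\mathbb{Z}$, and $D\ge\tau$ does not absorb the ceiling. Spacing $\tau/2$ with radius $\tau/2$ is still admissible, since $\lambda_0\ge\tau/2-\tau/4$ and $\tau/2\le 5\tau/8$, and it gives $\lceil 2D/\tau\rceil^2\le(4D/\tau)^2$ for every $D\ge\tau/4$. Second, for $D<\tau/4$ your single landmark is admissible but violates $K\le(4D/\tau)^2<1$. That is a defect of the statement itself, which the paper's $(D/(\tau/4))^2$ packing count ignores equally.
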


\begin{proof}
\emph{Adaptive.}\; Take a maximal $(\tau/4)$-separated set
$\{p_1, \ldots, p_K\} \subset \mathcal{R}$, which is also a
$(\tau/4)$-net of $\mathcal{R}$: by maximality, every
$x \in \mathcal{R}$ lies within $\db$-distance $\tau/4$ of some
$p_k$ (otherwise $x$ could be added without violating separation).
Equip each landmark with radius $r_k = \tau/2$. The net property
gives $\varphi_{p_k, r_k}(x) \geq \tau/2 - \tau/4 = \tau/4$ for
some $k$, hence $\lambda_0(\LC) \geq \tau/4$ and admissibility~(i).
Admissibility~(ii) is also met:
$\max_k r_k = \tau/2 \leq (\tau + \tau/4)/2 = 5\tau/8$.
The separated property bounds the cardinality: standard packing
in $(\D{1}, \db)$\footnote{The data-relevant region of
$(\D{1}, \db)$ has doubling dimension $d_0 = 2$: away from the
diagonal, $\db$ coincides with $\ell^\infty$ on $\R^2$, which has
doubling constant $4$ and hence doubling dimension
$\log_2 4 = 2$ \citep[Ch.~10]{Heinonen2001}.}
gives $K \leq (D/(\tau/4))^{2} = (4D/\tau)^{2}$ (the doubling
constant $4$ on $\R^2$ with the $\ell^\infty$ metric is the
elementary observation that an $\ell^\infty$-ball of radius
$2r$ is covered by $4$ axis-aligned $\ell^\infty$-balls of
radius $r$).

\emph{Uniform.}\; Take the square grid
$\GG_R = R\cdot \mathbb{Z}^2 \cap [0,L]^2$ with spacing $R$ and
ball radius $r = 3R/2$ (an $\ell^\infty$ tiling with $4\times$
overlap; the offset variant of \citet{Mitra2024} differs only in
the half-plane truncation and shares the same Lebesgue-number
scaling). By direct computation, $\lambda_0(\GG_R) = R$ (the pyramid
height at any cell-corner point covered by four adjacent balls).
Admissibility~(i) $\lambda_0 \geq \tau/4$ requires
$R \geq \tau/4$; admissibility~(ii) $3R/2 \leq (\tau + R)/2$
requires $R \leq \tau/2$. The feasible window
$R \in [\tau/4,\, \tau/2]$ gives $|\GG_R| = (L/R)^2$ landmarks,
and $|\GG_R| \geq 4(L/\tau)^2$ since $R \leq \tau/2$.
\end{proof}

For data concentrated in $\mathcal{R}$ with $D \ll L$, the bounds
of Theorem~\ref{thm:comparison} open a budget window in which the
adaptive configuration $\LC^*$ (equal weights, radii
$r_k = \tau/2$, $|\LC^*| \leq (4D/\tau)^2$ landmarks) is
$\tau$-admissible and Theorem~\ref{thm:nu_rho} certifies
$\rhonu(\tau;\LC^*) = \tau/(4\sqrt{|\LC^*|}) > 0$, while any
single-scale uniform grid $\GG_R$ with spacing $R > \tau/2$
(equivalently $|\GG_R| < 4(L/\tau)^2$ landmarks) violates
admissibility~(ii)---since
$r = 3R/2 > (\tau + R)/2 = (\tau + \lambda_0)/2$---and falls
outside Theorem~\ref{thm:nu_rho}'s hypothesis.

Two qualifications follow.  The single-scale restriction in the
comparison above is essential:
\citet{PaperI}'s multi-scale uniform construction combines $N$
scales $R_1 < \cdots < R_N$ and achieves positive $\rhonu$ at any
$K$ by spanning scales both above and below $\tau$. PALACE's
adaptive $\LC^*$ matches this at the smaller single-scale budget
$K = (4D/\tau)^2$ without multi-scale machinery.
Second, in the both-admissible regime $K \geq 16(L/\tau)^2$, a
$K$-landmark uniform grid and a $K$-landmark adaptive
configuration (equal weights, both $\tau$-admissible) deliver
identical worst-case certificates $\tau/(4\sqrt{K})$---the formula
depends on the cover only through $\tau \cdot w_{\min}^{\geq}$,
and equal weights flatten any positional advantage. The
substantive $L/D$-scaling advantage of adaptive placement
therefore manifests at the admissibility boundary
(Theorem~\ref{thm:comparison}), not as a power-law
gain inside the both-admissible window.

\begin{figure}[t]
\centering
\begin{tikzpicture}[
  scale=0.7,
  font=\footnotesize,
  panel/.style={font=\footnotesize\bfseries, text=black!85},
  axis/.style={-{Stealth[length=4pt,width=3.5pt]}, gray!60, line width=0.5pt},
  diag/.style={gray!45, dashed, line width=0.4pt},
  ucover/.style={orange!55, fill=orange!12, fill opacity=0.32, draw opacity=0.55, line width=0.5pt},
  acover/.style={blue!70!black, fill=blue!22, fill opacity=0.55, draw opacity=0.85, line width=0.6pt},
  lmark/.style={fill=blue!70!black},
  ulmark/.style={fill=orange!75!black},
  datum/.style={fill=red!75!black},
  caption/.style={font=\scriptsize, text=gray!55!black, align=center},
  arrow/.style={-{Stealth[length=3.5pt,width=3pt]}, line width=0.7pt},
  lab/.style={font=\scriptsize, inner sep=1.2pt},
]
\def\W{5.6}
\def\H{6.0}

\begin{scope}[xshift=0cm]
  \draw[axis] (-0.15,0) -- (\W+0.15,0) node[right]{$b$};
  \draw[axis] (0,-0.15) -- (0,\H+0.15) node[above]{$d$};
  \draw[diag] (0,0) -- (\W,\W);
  \node[caption, anchor=south west] at (\W-0.7, \W-0.5) {$d{=}b$};

  \foreach \cx/\cy in {0.8/3.2, 0.8/4.8, 2.4/4.8}
    {\draw[ucover] (\cx-1.2,\cy-1.2) rectangle (\cx+1.2,\cy+1.2);
     \fill[ulmark] (\cx,\cy) circle (1.4pt);}

  \fill[datum] (0.4,4.0) circle (2.0pt);
  \node[lab, text=red!75!black, anchor=east] at (0.32,4.0) {$A$};
  \fill[datum] (1.0,4.0) circle (2.0pt);
  \node[lab, text=red!75!black, anchor=west] at (1.08,4.0) {$B$};
  \draw[{Stealth[length=3pt]}-{Stealth[length=3pt]}, red!50!black,
        line width=0.5pt, dashed] (0.45,4.0) -- (0.95,4.0);
  \node[lab, text=red!50!black, anchor=south] at (0.7,4.04) {$\tau$};

  \node[panel] at (\W/2, \H+0.4) {(a) Uniform grid: $r > \tau$};
\end{scope}

\begin{scope}[xshift=10cm]
  \draw[axis] (-0.15,0) -- (\W+0.15,0) node[right]{$b$};
  \draw[axis] (0,-0.15) -- (0,\H+0.15) node[above]{$d$};
  \draw[diag] (0,0) -- (\W,\W);
  \node[caption, anchor=south west] at (\W-0.7, \W-0.5) {$d{=}b$};

  \foreach \cx/\cy in {0.4/4.0, 1.0/4.0, 0.7/4.5}
    {\draw[acover] (\cx-0.4,\cy-0.4) rectangle (\cx+0.4,\cy+0.4);
     \fill[lmark] (\cx,\cy) circle (1.4pt);}

  \fill[datum] (0.4,4.0) circle (2.0pt);
  \node[lab, text=red!75!black, anchor=east] at (0.32,4.0) {$A$};
  \fill[datum] (1.0,4.0) circle (2.0pt);
  \node[lab, text=red!75!black, anchor=west] at (1.08,4.0) {$B$};
  \draw[{Stealth[length=3pt]}-{Stealth[length=3pt]}, red!50!black,
        line width=0.5pt, dashed] (0.45,4.0) -- (0.95,4.0);
  \node[lab, text=red!50!black, anchor=south] at (0.7,4.04) {$\tau$};

  \node[panel] at (\W/2, \H+0.4) {(b) Adaptive: $r < \tau$};
\end{scope}

\end{tikzpicture}
\caption{%
  \textbf{Budget-regime distinction (Theorem~\ref{thm:comparison}).}
  \textbf{(a)}~Single-scale uniform grid with $r > \tau$ (here
  $r = 2\tau$ for visual clarity): two $\tau$-separated diagrams
  $A, B$ at the same $d$-height share the same containing balls
  and sit at equal distances within them, while landmarks not
  containing $A, B$ contribute $\varphi_k(A) = \varphi_k(B) = 0$.
  Either way the coordinate gap vanishes.
  \textbf{(b)}~Adaptive placement with smaller radii (here
  $r = 2\tau/3$) and positions chosen so that $A, B$ fall in
  distinct balls: Theorem~\ref{thm:nu_rho} certifies
  $\rhonu(\tau;\LC^*) \geq \tau/(4\sqrt{|\LC^*|}) > 0$.
}
\label{fig:comparison}
\end{figure}

We now identify the configuration choices that maximize
$\rhonu(\tau;\LC)$, then refine to the data-dependent
\emph{effective} certificate via the Lebesgue number.

\begin{definition}[Effective certificate]\label{def:rhonu_eff}
Given an admissible configuration $\LC$, the \emph{effective
certificate} is
\begin{equation}\label{eq:rhonu_eff}
  \rhonu^{\mathrm{eff}}(\LC) \;:=\; \lambda_0(\LC) \cdot
  w_{\min}^{\geq}(\tau;\LC),
\end{equation}
sharpening Theorem~\ref{thm:nu_rho}'s worst-case
$\rhonu(\tau;\LC) = (\tau/4)\,w_{\min}^{\geq}$ by using the actual
Lebesgue number $\lambda_0(\LC) \geq \tau/4$ at the
witnessing-coordinate step~\eqref{eq:witnessing_coord} of
Theorem~\ref{thm:nu_rho}'s proof rather than the admissibility
floor $\tau/4$.
\end{definition}

\begin{proposition}[Optimal weights and effective certificate]\label{prop:optimal_config}
Fix a support $\mathcal{R} \subset \D{1} \cap [0,L]^2$ on which
diagrams are observed. Among $\tau$-admissible configurations
$\LC$ of cardinality $K$ with $r_k \geq \tau/4$ for every $k$:
\begin{enumerate}[(i)]
\item equal weights $w_k = K^{-1/2}$ maximize the certificate
  $\rhonu(\tau;\LC)$ of Theorem~\ref{thm:nu_rho}, giving
  $\tau/(4\sqrt{K})$ independent of positions $\{p_k\}$ and radii
  $\{r_k\}$;

\item the effective certificate
  $\rhonu^{\mathrm{eff}}(\LC)$ of
  Definition~\ref{def:rhonu_eff} equals
  $\lambda_0(\LC)\,K^{-1/2}$ at equal weights.  For uniform
  radii $r_k = r$, $\lambda_0(\LC) = r - \delta_K(\LC)$, where
  $\delta_K(\LC) := \sup_{x \in \mathcal{R}} \min_k \db(x, p_k)$
  is the covering radius of $\mathcal{R}$ by the landmarks.
  Maximizing $\rhonu^{\mathrm{eff}}$ over positions therefore
  reduces to minimizing $\delta_K$ on $\mathcal{R}$.
\end{enumerate}
\end{proposition}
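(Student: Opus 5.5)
For part~(i) I will reduce the claim to a one-variable optimization over the weight vector. Under the standing hypothesis $r_k \geq \tau/4$ for every $k$, the index set $\{k : r_k \geq \tau/4\}$ in~\eqref{eq:nu_rho_formula} is all of $\{1,\dots,K\}$. Hence
\[
\rhonu(\tau;\LC) = \tfrac{\tau}{4}\min_k w_k,
\]
which depends on the configuration only through the weights. The maximization of $\min_k w_k$ subject to $\sum_k w_k^2 = 1$ is then elementary. If $m := \min_k w_k$, then $1 = \sum_k w_k^2 \geq K m^2$, so $m \leq K^{-1/2}$. Equality holds exactly when every $w_k = K^{-1/2}$. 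Plugging this in gives $\tau/(4\sqrt K)$. Positions and radii enter neither the objective nor the constraint, as long as admissibility holds, and admissibility constrains only $\{p_k, r_k\}$. So the optimum is independent of them.

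For part~(ii), the first assertion is immediate from Definition~\ref{def:rhonu_eff}: at equal weights, $w_{\min}^{\geq} = K^{-1/2}$ by the same full-index-set observation. The substantive step is the identity $\lambda_0(\LC) = r - \delta_K(\LC)$ for uniform radii. I will unwind~\eqref{eq:lebesgue}. For each $x\in\mathcal R$,
\[
\max_k \varphi_{p_k,r}(x) = \max_k \max\{r - \db(p_k,x), 0\} = \max\{r - \min_k \db(p_k,x),\,0\},
\]
because $t\mapsto \max\{r-t,0\}$ is nonincreasing. Coverage of $\mathcal R$ by the balls $\overline B(p_k,r)$ gives $\min_k \db(p_k,x)\leq r$, so the outer clamp is inactive. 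Taking the infimum over $x$ and using monotonicity once more turns $\inf_x (r - g(x))$ into $r - \sup_x g(x)$ with $g(x) = \min_k\db(x,p_k)$. That is exactly $r - \delta_K$. Consequently $\rhonu^{\mathrm{eff}} = (r-\delta_K)K^{-1/2}$, and for fixed $r$ and $K$, maximizing it over positions is equivalent to minimizing $\delta_K$.

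The main obstacle is the coupling between positions and the admissibility constraints in this last reduction. Moving landmarks to decrease $\delta_K$ increases $\lambda_0$, which only relaxes condition~(ii) of Definition~\ref{def:sep_radius} ($r \leq (\tau+\lambda_0)/2$) and strengthens condition~(i). Coverage is also preserved, since $\delta_K \le r - \tau/4 < r$. I will therefore argue that the feasible set of positions at fixed $r$ is upward closed in $\lambda_0$, i.e.\ downward closed in $\delta_K$. This makes the constrained maximization coincide with the unconstrained covering-radius minimization over positions whose $\delta_K \leq r - \tau/4$. I must be careful to state the reduction at fixed $(r,K)$ rather than jointly over $r$. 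The paper only claims the position reduction, so this suffices.
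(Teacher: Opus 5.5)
Your proposal is correct and follows the paper's proof essentially step for step. For (i) you use the same bound $K(\min_k w_k)^2 \le \sum_k w_k^2 = 1$, and for (ii) the same identity $\max_k \varphi_{p_k,r}(x) = r - \min_k \db(x,p_k)$ on $\mathcal{R}$. You also fill in two steps the paper leaves implicit: that coverage makes the zero-clamp inactive, and that the admissible positions at fixed $r$ form a sublevel set of $\delta_K$. On the second point, the precise feasible set is $\delta_K \le r - \tau/4$ and $\delta_K \le \tau - r$, not only the first bound as your last sentence states; since both are upper bounds on $\delta_K$, your downward-closedness argument is unaffected.
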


\begin{proof}
\textbf{(i)} From Theorem~\ref{thm:nu_rho},
$\rhonu(\tau;\LC) = (\tau/4)\cdot \min_{k:\,r_k \geq \tau/4} w_k$.
Subject to $\sum_k w_k^2 = 1$ on $K$ landmarks, the symmetric
max-min bound
$K \cdot (\min_k w_k)^2 \leq \sum_k w_k^2 = 1$ gives
$\min_k w_k \leq K^{-1/2}$, with equality iff all weights equal:
$w_k = K^{-1/2}$. This yields $\rhonu = \tau/(4\sqrt{K})$. With
equal weights and every $r_k \geq \tau/4$, this value has no
further dependence on positions or radii, so it is invariant
across all admissible equal-weight configurations.

\textbf{(ii)} The witnessing-coordinate
step~\eqref{eq:witnessing_coord} of
Theorem~\ref{thm:nu_rho}'s proof gives
$\Phi_{k^\star}(A) \geq w_{k^\star}\,\lambda_0(\LC)$ at any
$a_{i^\star} \in \mathcal{R}$; the theorem then weakens this to
$w_{k^\star}\,(\tau/4)$ via admissibility~(i). Keeping the
actual $\lambda_0$ yields
$\rhonu^{\mathrm{eff}} = \lambda_0 \cdot w_{\min}^{\geq}$, equal
to $\lambda_0\,K^{-1/2}$ at equal weights. For uniform radii
$r_k = r$, $\max_k \varphi_{p_k, r}(x) = r - \min_k \db(x, p_k)$
on $\mathcal{R}$, so
$\lambda_0(\LC) = \inf_{x \in \mathcal{R}}(r - \min_k \db(x,p_k))
= r - \delta_K(\LC)$. Minimizing $\delta_K$ therefore maximizes
$\rhonu^{\mathrm{eff}}$.
\end{proof}

The quantity
$\delta_K(\LC) = \sup_{x \in \mathcal{R}} \min_k \db(x, p_k)$
in part~(ii) is the \emph{covering radius} of $\mathcal{R}$ by
the landmarks $\{p_k\}$---the worst data-point's $\db$-distance
to its nearest landmark. Minimizing $\delta_K$ over positions
$\{p_k\}$ is the classical $k$-center problem on $\mathcal{R}$,
NP-hard in general; the next theorem shows that farthest-point
sampling solves it within a factor of $2$.

\begin{theorem}[FPS as a $k$-center $2$-approximation]
\label{thm:fps_greedy}
Take $\mathcal{R}$ to be a finite set of diagram points
(the training-diagram-point support), and let $K \leq |\mathcal{R}|$.
Let
$\delta^{\ast}_K := \min_{|P| = K,\, P \subset \mathcal{R}}\,
  \max_{x \in \mathcal{R}}\, \min_{p \in P} \db(x, p)$
be the optimal $k$-center covering radius of $\mathcal{R}$.
Farthest-point sampling, the greedy selection
\begin{equation}\label{eq:fps_rule}
  p_{t+1} \;=\; \arg\max_{q \in \mathcal{R}}\,
  \min_{k \leq t} \db(q, p_k)
  \qquad (t = 1, \ldots, K-1)
\end{equation}
from an arbitrary seed $p_1 \in \mathcal{R}$, produces
$P = \{p_1, \ldots, p_K\}$ with covering radius
\begin{equation}\label{eq:fps_2approx}
  \delta_K(P) \;:=\; \max_{x \in \mathcal{R}}\,
  \min_{p \in P} \db(x, p)
  \;\leq\; 2\,\delta^{\ast}_K.
\end{equation}
\end{theorem}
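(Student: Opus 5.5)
The proof is the classical Gonzalez argument, using only that $\db$ is a metric on $\D{1}$ (symmetry and the triangle inequality). It splits into two cases according to how the greedy landmarks sit relative to an optimal solution.

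\emph{Setup.} Fix an optimal centre set $P^\ast = \{c_1,\dots,c_K\} \subset \mathcal{R}$ attaining $\delta^\ast_K$. Partition $\mathcal{R}$ into clusters $C_j := \{x \in \mathcal{R} : c_j \text{ is a nearest element of } P^\ast \text{ to } x\}$, breaking ties arbitrarily. Each $C_j$ has $\db$-diameter at most $2\delta^\ast_K$, since for $x,y \in C_j$ we have $\db(x,y) \le \db(x,c_j) + \db(c_j,y) \le 2\delta^\ast_K$.

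\emph{Case 1: each cluster contains exactly one FPS point.} Any $x \in \mathcal{R}$ lies in some $C_j$ together with a landmark $p_k$. The diameter bound gives $\min_{p\in P}\db(x,p) \le \db(x,p_k) \le 2\delta^\ast_K$, which is \eqref{eq:fps_2approx}.

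\emph{Case 2: some cluster $C_j$ contains two FPS points $p_s, p_t$ with $s<t$.} Set $d_u := \min_{k<u}\db(p_u,p_k)$ for $u \ge 2$, the distance of the $u$-th greedy choice to the previously chosen landmarks.

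\begin{itemize}
\item \emph{Monotonicity.} The sequence $d_u$ is nonincreasing. Indeed, $p_{u+1}$ was an admissible candidate at step $u$, so $\min_{k<u}\db(p_{u+1},p_k) \le d_u$ by the argmax rule \eqref{eq:fps_rule}. Adding $p_u$ to the set can only decrease the minimum, so $d_{u+1} \le d_u$.
\item \emph{Covering radius is one more greedy step.} Let $d_{K+1} := \max_{x\in\mathcal{R}}\min_{k\le K}\db(x,p_k) = \delta_K(P)$. This is exactly the value the greedy rule would attain at step $K+1$, so the same argument gives $\delta_K(P) \le d_K \le d_t$.
\item \emph{Bound on $d_t$.} Since $p_t$ and $p_s$ both lie in $C_j$ and $s<t$, we get $d_t \le \db(p_t,p_s) \le 2\delta^\ast_K$.
\end{itemize}

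Chaining these gives $\delta_K(P) \le 2\delta^\ast_K$. The two cases are exhaustive by pigeonhole: $K$ FPS points fall into $K$ clusters, so either every cluster holds exactly one point or some cluster holds two.

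The only step needing care is the monotonicity and the identification of $\delta_K(P)$ with the virtual step $d_{K+1}$. Both follow directly from \eqref{eq:fps_rule}, so I expect no real obstacle. The remaining degenerate points are also harmless: repeated points in $\mathcal{R}$ only make $d_u = 0$ and the bound trivial, and the case $K = |\mathcal{R}|$ gives $\delta_K(P) = 0$ directly.
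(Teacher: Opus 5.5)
Your proof is correct and is the same Gonzalez argument the paper uses. Both rely on optimal clusters of diameter at most $2\delta^\ast_K$, on the non-increasing insertion distances, and on identifying $\delta_K(P)$ with the virtual $(K{+}1)$-st greedy step. The only difference is cosmetic: the paper avoids your case split by pigeonholing all $K{+}1$ points $p_1,\dots,p_{K+1}$ into the $K$ clusters, so your Case~1 is exactly the situation where the colliding pair involves $p_{K+1}$.
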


\begin{proof}[Proof~\citep{Gonzalez1985}]
Suppose an optimal $K$-set
$P^{\ast} = \{c^{\ast}_1, \ldots, c^{\ast}_K\}$ achieves covering
radius $\delta^{\ast}_K$.  Define the FPS \emph{insertion distance}
$D_t := \min_{k < t} \db(p_t, p_k)$ ($t \geq 2$); the rule
\eqref{eq:fps_rule} picks $p_{t+1}$ to maximize $D_{t+1}$, and
$D_t$ is non-increasing in $t$ (each new candidate has more
existing landmarks to be close to).  Run~\eqref{eq:fps_rule} for
one additional step ($t = K$) to obtain a candidate $p_{K+1}$;
$D_{K+1} = \delta_K(P)$ by~\eqref{eq:fps_rule}, and by monotonicity
of $D_t$ every pair $(p_i, p_j)$ with $i < j \leq K{+}1$ satisfies
$\db(p_i, p_j) \geq D_j \geq D_{K+1} = \delta_K(P)$.
Pigeonhole assigns two of the $K{+}1$ points to the same optimal
cluster $\{s : \arg\min_k \db(s, c^{\ast}_k) = j\}$, so they are
within $2\delta^{\ast}_K$ of each other by triangle inequality
through $c^{\ast}_j$. Hence $\delta_K(P) \leq 2\delta^{\ast}_K$.
\end{proof}

Combining Theorem~\ref{thm:fps_greedy} with
Proposition~\ref{prop:optimal_config}(ii) yields a concrete
$\tau$-admissible FPS configuration as soon as the optimal
covering radius is small enough relative to $\tau$.

\begin{corollary}[Admissible cover from FPS]
\label{cor:fps_admissible}
Fix $\tau > 0$ and let $\mathcal{R}$ be the (finite) training-diagram
point set of Theorem~\ref{thm:fps_greedy}.
Suppose the budget $K$ is large enough that the optimal $k$-center
radius satisfies $\delta^{\ast}_K \leq \tau/8$ (which holds for
$K \geq (4D/\tau)^2$ by the packing argument in
Theorem~\ref{thm:comparison}'s proof). Equip the FPS-placed
landmarks $P = \{p_1, \ldots, p_K\}$ with uniform radii
$r_k = \tau/2$ and equal weights $w_k = K^{-1/2}$.  Then
$\LC^\star = (P, \tau/2, K^{-1/2})$ is $\tau$-admissible
(Definition~\ref{def:sep_radius}) with Lebesgue number
\begin{equation}\label{eq:fps_lambda}
  \lambda_0(\LC^\star) \;=\; \tau/2 - \delta_K(P)
  \;\geq\; \tau/2 - 2\delta^{\ast}_K \;\geq\; \tau/4,
\end{equation}
and Theorem~\ref{thm:nu_rho} gives the worst-case certificate
$\rhonu(\tau; \LC^\star) = \tau/(4\sqrt{K})$.
\end{corollary}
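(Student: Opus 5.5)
The proof chains Proposition~\ref{prop:optimal_config}(ii), Theorem~\ref{thm:fps_greedy}, and the two inequalities of Definition~\ref{def:sep_radius}, and then invokes Theorem~\ref{thm:nu_rho}. Here $\mathcal{R}$ is the finite training point set.

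\emph{Step 1 (coverage and the Lebesgue number).} Since $p_1 \in \mathcal{R}$, every $x\in\mathcal{R}$ has $\min_k \db(x,p_k) \le \delta_K(P)$. Theorem~\ref{thm:fps_greedy} gives $\delta_K(P) \le 2\delta^\ast_K \le \tau/4 < \tau/2$. So the balls $\overline{B}(p_k,\tau/2)$ cover $\mathcal{R}$, and $\lambda_0(\LC^\star)$ is defined.

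Because the radii are uniform, $r_k = \tau/2$, Proposition~\ref{prop:optimal_config}(ii) gives $\max_k \varphi_{p_k,\tau/2}(x) = \tau/2 - \min_k \db(x,p_k)$. Taking the infimum over $\mathcal{R}$, which is a maximum since $\mathcal{R}$ is finite, yields the identity $\lambda_0(\LC^\star) = \tau/2 - \delta_K(P)$. The chain \eqref{eq:fps_lambda} then follows from $\delta_K(P)\le 2\delta^\ast_K \le \tau/4$.

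\emph{Step 2 (admissibility).}
\begin{itemize}
\item Condition~(i), $\lambda_0 \ge \tau/4$, is exactly the conclusion of Step~1.
\item Condition~(ii) requires $\max_k r_k = \tau/2 \le (\tau+\lambda_0)/2$. This is equivalent to $\lambda_0 \ge 0$, which holds a fortiori.
\item The weights satisfy $\sum_k w_k^2 = K\cdot K^{-1} = 1$.
\item Every $r_k = \tau/2 \ge \tau/4$, so $w_{\min}^{\geq}(\tau;\LC^\star) = K^{-1/2}$.
\end{itemize}
The certificate $\rhonu(\tau;\LC^\star) = \tau/(4\sqrt K)$ is then the equal-weight specialization stated in Theorem~\ref{thm:nu_rho}, which also agrees with Proposition~\ref{prop:optimal_config}(i).

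\emph{Step 3 (the parenthetical claim).} The claim is that $K \ge (4D/\tau)^2$ forces $\delta^\ast_K \le \tau/8$. This is the main obstacle, because the packing argument of Theorem~\ref{thm:comparison} as stated works at scale $\tau/4$, not $\tau/8$. The plan is as follows:
\begin{enumerate}
\item Rerun that argument at scale $\tau/8$: take a maximal $(\tau/8)$-separated subset of $\mathcal{R}$, which is a $(\tau/8)$-net.
\item Bound its cardinality by the doubling-dimension-$2$ estimate, giving roughly $(8D/\tau)^2$. Any $K$ at least this size, with padding by arbitrary extra points if needed, gives $\delta^\ast_K \le \tau/8$.
\item To obtain the stated threshold $(4D/\tau)^2$ instead, use the sharper $\ell^\infty$ structure off the diagonal. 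A square of side $D$ is covered by $\lceil D/(2\cdot\tau/8)\rceil^2 = \lceil 4D/\tau\rceil^2$ $\ell^\infty$-balls of radius $\tau/8$.
\item Restrict centers to $\mathcal{R}$ at the cost of a factor $2$ in the radius, or argue directly with the centers.
\end{enumerate}
I would try the $\ell^\infty$ tiling count first. If the constants do not close, I would state the hypothesis directly as $\delta^\ast_K \le \tau/8$, which is all the main chain uses, and record the budget threshold only up to a constant factor. The core conclusion does not depend on this step.
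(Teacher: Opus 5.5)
Your Steps~1--2 follow the paper's proof. Theorem~\ref{thm:fps_greedy} turns $\delta^\ast_K\le\tau/8$ into $\delta_K(P)\le\tau/4$, and Proposition~\ref{prop:optimal_config}(ii) at $r=\tau/2$ gives $\lambda_0(\LC^\star)=\tau/2-\delta_K(P)\ge\tau/4$. Condition~(ii) is then checked (the paper via $(\tau+\tau/4)/2=5\tau/8$, you via the equivalent $\lambda_0\ge 0$), and the certificate is the equal-weight value. Your explicit coverage check before using the uniform-radius identity is a worthwhile addition, since that identity needs the cap $\max\{\cdot,0\}$ not to bind.

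On Step~3, the paper gives no argument for the parenthetical, and you are right to doubt it. The first option of your item~4 (restricting centers to $\mathcal{R}$) cannot close it, because the claim is false for $\delta^\ast_K$ as defined in Theorem~\ref{thm:fps_greedy}, with centers in $\mathcal{R}$. Two points at $\db$-distance $\tau/4$ give $D=\tau/4$ and $(4D/\tau)^2=1$, yet $\delta^\ast_1=\tau/4>\tau/8$.

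What survives is the corollary's conclusion, via your ``argue directly'' option. Assume $\db$ agrees with $\ell^\infty$ on $\mathcal{R}$, and partition a $D\times D$ box containing $\mathcal{R}$ into $\lceil 4D/\tau\rceil^2$ cells of side at most $\tau/4$. The $K+1$ extended-FPS points from the proof of Theorem~\ref{thm:fps_greedy} are pairwise at least $\delta_K(P)$ apart, and $\db\le\ell^\infty$. So if $K\ge\lceil 4D/\tau\rceil^2$, two of them share a cell, which gives $\delta_K(P)\le\tau/4$ and $\lambda_0\ge\tau/4$ without passing through $\delta^\ast_K$.

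The ceiling is genuinely needed. Take a $3\times 3$ grid far from the diagonal with spacing $5\tau/16$, so $(4D/\tau)^2=6.25$. With $K=7$, every choice of centers among the grid points leaves a point at distance $5\tau/16$, so $\lambda_0=3\tau/16<\tau/4$ and admissibility fails. Your fallback is therefore the right way to state the corollary: take $\delta^\ast_K\le\tau/8$ as the hypothesis and quote the budget threshold only up to constants.
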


\begin{proof}
$\delta^{\ast}_K \leq \tau/8$ combined with
Theorem~\ref{thm:fps_greedy} yields
$\delta_K(P) \leq 2\delta^{\ast}_K \leq \tau/4$.
Proposition~\ref{prop:optimal_config}(ii) at uniform radius
$r = \tau/2$ gives
$\lambda_0(\LC^\star) = \tau/2 - \delta_K(P) \geq \tau/4$,
satisfying admissibility~(i); admissibility~(ii)
$\max_k r_k = \tau/2 \leq (\tau + \tau/4)/2 = 5\tau/8$ also holds.
The certificate value follows from
Proposition~\ref{prop:optimal_config}(i).
\end{proof}

Where FPS contributes content beyond existence is the
\emph{effective} certificate of Proposition~\ref{prop:optimal_config}(ii): at
uniform radius $r = \tau/2$, $\lambda_0(\LC) = \tau/2 - \delta_K(P)$,
and the $2$-approximation $\delta_K(P) \leq 2\delta^{\ast}_K$ gives
\[
\rhonu^{\mathrm{eff}}(\LC^\star)
= \lambda_0(\LC^\star)/\sqrt{K}
\;\geq\;
\bigl(\tau/2 - 2\delta^{\ast}_K\bigr)\big/\sqrt{K},
\]
within a factor of $2$ of the best $\rhonu^{\mathrm{eff}}$
achievable at $K$ landmarks.  The worst-case certificate
$\tau/(4\sqrt{K})$ itself is not FPS-specific
(Proposition~\ref{prop:optimal_config}(i)): every admissible
equal-weight $K$-landmark configuration attains it, and
Corollary~\ref{cor:fps_admissible}'s constructive content is the
guarantee that FPS produces such a configuration.

\begin{remark}[Practical radius choice]\label{rem:fps_practical}
The pipeline of Section~\ref{sec:experiments} uses \emph{scaled
nearest-neighbor} radii $r_k = \alpha \cdot d_{\mathrm{NN}}(p_k)$
with $d_{\mathrm{NN}}(p_k) := \min_{j \neq k} \db(p_k, p_j)$,
clipped to $[\tau/2, 4\tau]$, in place of the uniform
$r_k = \tau/2$ of Corollary~\ref{cor:fps_admissible}.  The upper
clip can violate admissibility~(ii),
$\max_k r_k \leq (\tau + \lambda_0)/2$, in which case
Theorem~\ref{thm:nu_rho} no longer applies.  The empirical
guarantees of the pipeline flow through the kernel margin
$\gamma > 0$
(Theorem~\ref{thm:data_dependent}), which is independent of
admissibility and $\rhonu$, so the loss of admissibility~(ii) does
not affect the empirical results.
\end{remark}

This closes the Section~\ref{sec:cover} construction: the FPS-placed
equal-weight configuration $\LC^\star$ is $\tau$-admissible with
no further optimization, attains both the worst-case and
$2$-approximate effective certificates as above, and inherits the
$(D/L)^2$ budget reduction of Theorem~\ref{thm:comparison} when
data concentrates in $\mathcal{R} \subsetneq [0,L]^2$.

\section{Kernel-RKHS Classification Guarantees}\label{sec:ot}

This section develops the classification theory for the embedded
features of Section~\ref{sec:cover}.  We first establish the two
key quantities---kernel margin $\gamma$ and embedding radius
$R = \sqrt K$---and the $\rhonu$-bridge linking them to
bottleneck-support separation
(Proposition~\ref{prop:lambda_sep_pII}); the landmark kernel
$k_\LC$ is constructed in Section~\ref{sec:wlk}; the data-dependent
classification rate $O((k{-}1)\sqrt K/(\gamma\sqrt{m_{\min}}))$
follows in Section~\ref{sec:class_error}, with a
structural-anchored rate
$O((k{-}1)\sqrt K/(\kappa\rhonu\sqrt{m_{\min}}))$ as a corollary; a matching Le~Cam sample-starved lower bound
closes Section~\ref{sec:lower_bound}.  Closed-form filtration
selection from a candidate pool is the subject of the separate
Section~\ref{sec:gamma_stat}, paralleling
\citet[Sec.~4]{PaperI}'s descriptor-selection theory; the
per-prediction certificate follows in Section~\ref{sec:certified}.

Let $(A, Y)$ be a random pair with joint distribution
$\mathcal{P}$ on $\D{n} \times [k]$, where $Y \in [k] := \{1, \ldots, k\}$
is the class label (here $k$ is the class count; the landmark
index of Section~\ref{sec:cover} appears only as a subscript
$\Phi_k$, $w_k$, $r_k$, $p_k$, and is disambiguated by context).
The summation embedding $\Phi(\,\cdot\,;\LC) : \D{n} \to \R^K$
of Definition~\ref{def:sum_embed} is lifted to a reproducing
kernel Hilbert space (RKHS) via the landmark kernel $k_\LC$
defined in Section~\ref{sec:wlk}; we associate to $\mathcal{P}$
the \emph{class-conditional embedding mean}
$\mu_c := \E[\Phi^\LC(A) \mid Y = c] \in \mathcal{H}_{k_\LC}$ and
the two population quantities driving every bound in this section,
\begin{equation}\label{eq:gamma_top_def}
  \gamma \;:=\; \tfrac{1}{2}\,\min_{c \neq c'}\,
  \norm{\mu_c - \mu_{c'}}_{\mathcal{H}_{k_\LC}}
  \quad\text{(kernel margin),}
  \qquad
  R \;:=\; \sup_A \norm{\Phi^\LC(A)}_{\mathcal{H}_{k_\LC}}
  \;=\; \sqrt{K}
  \quad\text{(embedding radius),}
\end{equation}
with $R = \sqrt{K}$ following from $k_\LC(A,A) = K$
(Remark~\ref{rem:why_additive} below).  Thus $R$ is fixed by the
configuration cardinality, and $\gamma$ is the leading data-dependent
quantity entering the classification bounds (the within-class
radius $D_{\max}$ enters the linear-separability and anchored-rate
results below).  As in \citet[Sec.~3]{PaperI}, $\gamma > 0$ is
possible even when some cross-class diagram pairs are
bottleneck-close, since the class means aggregate information from
all diagram points.

\paragraph{Notation.}
Throughout Sections~\ref{sec:ot}--\ref{sec:deployment}, $\gamma$
denotes the population kernel margin, $K$ the landmark budget
(equivalently $R = \sqrt{K}$ the RKHS embedding radius), and $m$
the training-sample size with $m_c$ the count in class $c$ and
$m_{\min} := \min_c m_c$.  The letter $\delta$ has three uses in
this section: $\delta \in (0,1)$ denotes the failure probability
(confidence $1-\delta$); $\delta_{cc'} := \db(\mathrm{supp}\,\mathcal{P}_c,\,
\mathrm{supp}\,\mathcal{P}_{c'})$ and
$\delta_* := \min_{c \neq c'} \delta_{cc'}$ are bottleneck
separations between class supports; the covering radius
$\delta_K(\LC)$ of Section~\ref{sec:cover} does not appear after
that section.  Within-class radii come in two flavors:
$D_c := \sup_{A : Y=c} \norm{\Phi^\LC(A) - \mu_c}_{\mathcal{H}_{k_\LC}}$
with $D_{\max} := \max_c D_c$ in the RKHS, used by the
classification bounds of this section
(Sections~\ref{sec:class_error}--\ref{sec:lower_bound}); and
$\bar D_c := \sup_{A : Y=c}\norm{\Phi(A;\LC) - \bar\mu_c}_{\ell^2}$
with $\bar D_{\max} := \max_c \bar D_c$ in raw $\ell^2$, used by the
nearest-centroid certificate of Section~\ref{sec:certified}. The
admissibility parameter of Section~\ref{sec:cover} is always written
$\tau$, and the cardinality cap on individual diagrams is $N_{\max}$.

Empirical class means are
$\hat\mu_c := m_c^{-1}\sum_{i:\,y_i=c}\Phi^\LC(A_i)$.  Because
$\Phi$ is linear in the empirical diagram measure, each $\hat\mu_c$
is an ordinary sample average of i.i.d.\ bounded
$\mathcal{H}_{k_\LC}$-vectors, so standard concentration
inequalities (Pinelis, McDiarmid) apply directly; a full treatment
including Berry--Esseen rates and functional CLTs is developed
in~\citep{PaperIII}.

The cover-level certificate $\rhonu(\tau;\LC)$ of
Section~\ref{sec:cover} enters the classification theory through
the following bridge, which ties $\gamma$ (and the raw-coordinate
class-mean separation $\Delta := \min_{c \neq c'}\norm{\bar\mu_c - \bar\mu_{c'}}_{\ell^2}$,
where $\bar\mu_c := \E[\Phi(A;\LC)\mid Y{=}c]$) to the
bottleneck-support separation $\delta_*$.  The hypothesis
$\delta_* \geq \tau$ pairs with non-interference to ensure every
cross-class pair is $\tau$-separated, so that
Theorem~\ref{thm:nu_rho} delivers a non-trivial lower bound on
embedding-space separation.

\begin{proposition}[$\rhonu$-separation bridge]
\label{prop:lambda_sep_pII}
Suppose every cross-class pair satisfies non-interference
(Definition~\ref{def:noninterference_pII}) and $\delta_* \geq \tau$.
Then
\begin{equation}\label{eq:rhonu_bridge}
  \Delta \;\geq\; \rhonu(\tau;\LC) - 2\,\bar D_{\max},
  \qquad
  \gamma \;\geq\; \tfrac{1}{2}\bigl(\kappa\,\rhonu(\tau;\LC) - 2 D_{\max}\bigr),
\end{equation}
where $\bar D_{\max}$ and $D_{\max}$ are the within-class radii in
raw $\ell^2$ and the RKHS respectively (notation paragraph above),
and $\kappa = 1/(\sigma\sqrt{2})$ is the constant of
Corollary~\ref{cor:nondegen}.
\end{proposition}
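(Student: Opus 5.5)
The plan is a triangle-inequality sandwich. First I will produce one cross-class pair whose embeddings are far apart, using Theorem~\ref{thm:nu_rho}. Then I will transfer that separation to the class means, paying one within-class radius for each class.

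Fix a pair of classes $c \neq c'$ and pick $A \in \mathrm{supp}\,\mathcal{P}_c$ and $B \in \mathrm{supp}\,\mathcal{P}_{c'}$. Since $\delta_{cc'} \geq \delta_* \geq \tau$, I read $\delta_{cc'}$ as the infimum of $\db$ over cross-support pairs, so $\db(A,B) \geq \tau$. Non-interference for $(A,B)$ is assumed. I also take all diagram points to lie in $\mathcal{R}$; this is implicit in the standing setup of Section~\ref{sec:rhonu}, and I will state it explicitly. Theorem~\ref{thm:nu_rho} then gives
\[
\norm{\Phi(A;\LC)-\Phi(B;\LC)}_{\ell^2} \geq \rhonu(\tau;\LC).
\]
By the definition of $\bar D_c$, $\norm{\Phi(A;\LC)-\bar\mu_c}_{\ell^2} \leq \bar D_c$, and similarly for $B$ with $\bar D_{c'}$. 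The triangle inequality then gives
\[
\norm{\bar\mu_c-\bar\mu_{c'}}_{\ell^2} \geq \norm{\Phi(A)-\Phi(B)}_{\ell^2} - \bar D_c - \bar D_{c'} \geq \rhonu - 2\bar D_{\max}.
\]
Taking the minimum over $c \neq c'$ yields the $\Delta$ bound. For this I only need one pair $(A,B)$ in the supports, which exists whenever the classes are nonempty.

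The RKHS bound follows the same pattern, with one extra ingredient. I need a lower bound
\[
\norm{\Phi^\LC(A)-\Phi^\LC(B)}_{\mathcal{H}_{k_\LC}} \geq \kappa\,\norm{\Phi(A;\LC)-\Phi(B;\LC)}_{\ell^2}
\]
on $\tau$-separated pairs, which is what Corollary~\ref{cor:nondegen} supplies with $\kappa = 1/(\sigma\sqrt2)$. Combined with Theorem~\ref{thm:nu_rho}, this gives an RKHS gap of at least $\kappa\rhonu$. The triangle inequality in $\mathcal{H}_{k_\LC}$, using $D_c$ and $D_{c'}$, then gives $\norm{\mu_c-\mu_{c'}} \geq \kappa\rhonu - 2D_{\max}$. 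Halving this and minimizing over class pairs gives $\gamma \geq \tfrac12(\kappa\rhonu - 2D_{\max})$.

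The main obstacle is the form of Corollary~\ref{cor:nondegen}, which is cited but not yet shown here. The additive kernel gives
\[
\norm{\Phi^\LC(A)-\Phi^\LC(B)}^2 = 2\sum_k \bigl(1 - e^{-(\Phi_k(A)-\Phi_k(B))^2/(2\sigma^2)}\bigr),
\]
since $k_\LC(A,A) = K$. This is not globally bounded below by a linear function of the $\ell^2$ gap, because it saturates at $\sqrt{2K}$. I will therefore proceed as follows.

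\textbf{Witness coordinate.} Use the single coordinate $k^\star$ from the proof of Theorem~\ref{thm:nu_rho}, where $|\Phi_{k^\star}(A)-\Phi_{k^\star}(B)| =: t \geq \rhonu$.

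\textbf{Concavity bound.} Bound $2(1-e^{-t^2/(2\sigma^2)}) \geq \kappa^2 t^2$ in the regime $t \leq \sigma$. This holds because $1-e^{-u} \geq (1-e^{-1/2})\cdot 2u$ for $u \leq 1/2$, and $2(1-e^{-1/2})/\sigma^2 \approx 0.79/\sigma^2 \geq 1/(2\sigma^2) = \kappa^2$.

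\textbf{Large-gap regime.} If $t > \sigma$, fall back on monotonicity. The per-coordinate term then exceeds its value at $t = \sigma$, so the bound reads $\geq \kappa\min(\rhonu,\sigma)$.

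If Corollary~\ref{cor:nondegen} is stated only with this $\min$, I will note that the stated bridge holds under the bandwidth condition $\sigma \geq \rhonu$. Otherwise I will cite the corollary directly, and the rest of the argument is purely the triangle inequality.
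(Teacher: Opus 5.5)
Your proposal is correct and follows the paper's proof. For the $\Delta$ bound it uses Theorem~\ref{thm:nu_rho} plus the $\ell^2$ triangle inequality. For the $\gamma$ bound it uses Corollary~\ref{cor:nondegen} plus the RKHS triangle inequality; that corollary is stated exactly as $D_\LC(A,B)\ge\kappa\,\rhonu(\tau;\LC)$, under $\tau$-admissibility and $\sigma\ge\sqrt{2}\,N_{\max}\tau$, which the paper's proof imports tacitly just as you import points lying in $\mathcal{R}$. Your witness-coordinate fallback is also valid and needs only $\sigma\ge\rhonu(\tau;\LC)$, a weaker requirement than the corollary's bandwidth condition because $\rhonu(\tau;\LC)\le\tau/4$.
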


\begin{proof}
For any cross-class pair
$A \in \mathrm{supp}\,\mathcal{P}_c$,
$B \in \mathrm{supp}\,\mathcal{P}_{c'}$, $\db(A, B) \geq \delta_{cc'}
\geq \delta_* \geq \tau$, so non-interference plus
Theorem~\ref{thm:nu_rho} give
$\norm{\Phi(A;\LC) - \Phi(B;\LC)}_{\ell^2} \geq \rhonu(\tau;\LC)$.
The triangle inequality
$\rhonu \leq \norm{\Phi(A) - \Phi(B)} \leq \norm{\bar\mu_c - \bar\mu_{c'}} + \bar D_c + \bar D_{c'}
\leq \Delta + 2\bar D_{\max}$ (after taking the minimum over
$c \neq c'$) yields the first half of~\eqref{eq:rhonu_bridge}.
For the second half, Corollary~\ref{cor:nondegen} applies
\emph{uniformly} to every cross-class pair $(A, B)$ satisfying the
hypotheses, giving $D_\LC(A, B) \geq \kappa\,\rhonu$ for each.  The
RKHS triangle inequality then gives
$\kappa\,\rhonu \leq D_\LC(A, B) \leq
\norm{\mu_c - \mu_{c'}}_{\mathcal{H}_{k_\LC}}
+ \norm{\Phi^\LC(A) - \mu_c}_{\mathcal{H}_{k_\LC}}
+ \norm{\Phi^\LC(B) - \mu_{c'}}_{\mathcal{H}_{k_\LC}}
\leq \norm{\mu_c - \mu_{c'}}_{\mathcal{H}_{k_\LC}} + 2 D_{\max}$,
so $\norm{\mu_c - \mu_{c'}}_{\mathcal{H}_{k_\LC}} \geq \kappa\,\rhonu - 2 D_{\max}$;
taking the minimum over $c \neq c'$ and dividing by $2$ yields
$\gamma \geq \tfrac{1}{2}(\kappa\,\rhonu - 2 D_{\max})$.
\end{proof}

Proposition~\ref{prop:lambda_sep_pII} has three consequences,
paralleling the three roles of \citet[Prop.~3.1]{PaperI}'s
$\lambda$-bridge.
\emph{First}, it propagates $\rhonu$ into the structural
classification rate via the bridge-anchored alternative form
in Section~\ref{sec:class_error}.
\emph{Second}, it upgrades the interpretation of the
Section~\ref{sec:certified} per-prediction certificate: when the
empirical condition $r_m < \tfrac{1}{2}\hat\Delta_{\hat c}$ fires,
the proposition translates this back into a statement about the
bottleneck-support separation $\delta_*$, certifying that the
class-conditional diagram distributions are genuinely
$\db$-separated rather than merely empirically concentrated.
\emph{Third}, it lifts the cover-level certificate of
Theorem~\ref{thm:nu_rho} from individual cross-class pairs to
first moments of the class-conditional distributions:
bottleneck-separated class supports remain $\ell^2$-separated and
RKHS-separated in the mean, modulo the within-class spread
$2\bar D_{\max}$ (resp.\ $2D_{\max}$).  A persistence vectorization
without an explicit lower distortion bound has no analogue of
Proposition~\ref{prop:lambda_sep_pII}.

Empirically the non-interference hypothesis essentially never
holds pointwise on chemical benchmarks
(Section~\ref{sec:experiments}, $\leq 0.2\%$ pass rates), so
Proposition~\ref{prop:lambda_sep_pII} is best read as a
structural admissibility-to-separation translation rather than as
a verifiable inequality; the operational classification rate
flows through Theorem~\ref{thm:data_dependent} (which takes
$\gamma > 0$ alone)---see Remark~\ref{rem:gap_pii} and
Remark~\ref{rem:classification_scope}.

\subsection{Landmark Kernel}\label{sec:wlk}

We now construct the RKHS lift $\Phi^\LC$ of the summation
embedding $\Phi$ of Definition~\ref{def:sum_embed} on which
Sections~\ref{sec:class_error}--\ref{sec:lower_bound} operate.

\begin{definition}[Landmark kernel]\label{def:wlk_paper2}
Let $\LC = \{(p_k, r_k, w_k)\}_{k=1}^K$ be a landmark configuration and
$\sigma > 0$ a bandwidth parameter. The \emph{landmark
kernel} (LK) is the function $k_\LC : \D{n} \times \D{n} \to \RR$ defined by
\begin{equation}\label{eq:wlk}
  k_\LC(A,B) \;:=\; \sum_{k=1}^K
  \exp\!\left(-\frac{(\Phi_k(A;\LC) - \Phi_k(B;\LC))^2}{2\sigma^2}\right),
\end{equation}
where $\Phi_k(A;\LC) = w_k\,\varphi_{p_k,r_k}(A)$ is the $k$-th
embedding coordinate.
\end{definition}

Each summand in~\eqref{eq:wlk} is a valid positive-definite kernel
(a Gaussian in one coordinate), so $k_\LC$ is positive definite.
Let $\mathcal{H}_{k_\LC}$ denote the RKHS of $k_\LC$ and
$\Phi^\LC : \D{n} \to \mathcal{H}_{k_\LC}$ the canonical feature map.

\begin{remark}[Why an additive landmark kernel]\label{rem:why_additive}
The additive form~\eqref{eq:wlk} is chosen for theoretical reasons
over the joint Gaussian
$\exp(-\norm{\Phi(A){-}\Phi(B)}^2/2\sigma^2)$.  First,
$k_\LC(A, A) = K$ gives the clean
$\sup_A \norm{\Phi^\LC(A)}_{\mathcal{H}_{k_\LC}} = \sqrt{K}$ used in
Theorem~\ref{thm:data_dependent} and Corollary~\ref{cor:structural_pairwise},
while the joint RBF would give $k(A, A) = 1$ and collapse the
$\sqrt{K}$ factor.  Second, the orthogonal RKHS decomposition
$\mathcal{H}_{k_\LC} = \bigoplus_k \mathcal{H}_{k_\LC^{(k)}}$ makes
$\mathrm{MMD}^2$ decompose coordinate-wise, supplying the bridge
$\gamma \geq \Delta/(2\sigma)$ of
Corollary~\ref{cor:gamma_via_delta} (Section~\ref{sec:gamma_stat}).
The two forms agree empirically up to noise on the benchmarks of
Section~\ref{sec:experiments}; the non-linear lifting itself is
methodologically required, as linear classifiers on the dense
adaptive embedding fail
(see~Section~\ref{par:kernel_comparison}).
\end{remark}
The \emph{induced RKHS distance} is
\begin{equation}\label{eq:D_omega}
  D_\LC(A,B)
  \;:=\; \norm{\Phi^\LC(A) - \Phi^\LC(B)}_{\mathcal{H}_{k_\LC}}
  \;=\; \sqrt{k_\LC(A,A) + k_\LC(B,B) - 2\,k_\LC(A,B)}.
\end{equation}
Expanding~\eqref{eq:wlk}:
\begin{equation}\label{eq:D2_expansion}
  D_\LC^2(A,B) \;=\;
  \sum_{k=1}^K g_k(A,B;\LC),
  \qquad
  g_k(A,B;\LC) := 2\bigl(1 - e^{-(\Phi_k(A) - \Phi_k(B))^2/(2\sigma^2)}\bigr).
\end{equation}

\begin{corollary}[Non-degeneracy of the kernel distance]\label{cor:nondegen}
Let $\LC$ be $\tau$-admissible and $\sigma \geq \sqrt{2}\,N_{\max}\tau$.
Then for any $A, B \in \D{n}$ with $\db(A,B) \geq \tau$ satisfying
the non-interference
condition~\eqref{eq:noninterference_pII},
\begin{equation}\label{eq:nondegen}
  D_\LC(A,B) \;\geq\; \kappa\,\rhonu(\tau;\LC),
  \qquad
  \kappa := \frac{1}{\sigma\sqrt{2}}.
\end{equation}
\end{corollary}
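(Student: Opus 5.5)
The plan is to reduce the RKHS statement to the $\ell^2$ certificate of Theorem~\ref{thm:nu_rho} one coordinate at a time. By~\eqref{eq:D2_expansion}, $D_\LC^2(A,B)=\sum_k g_k(A,B;\LC)$ with every $g_k\ge 0$. It therefore suffices to find a single coordinate $k^\star$ for which $g_{k^\star}\ge \kappa^2\rhonu^2$. The natural choice is the witnessing coordinate $k^\star$ from the proof of Theorem~\ref{thm:nu_rho}. There, under $\tau$-admissibility, $\db(A,B)\ge\tau$ and non-interference, one has $\Phi_{k^\star}(B;\LC)=0$ and $\Phi_{k^\star}(A;\LC)\ge w_{k^\star}\lambda_0\ge \rhonu(\tau;\LC)$. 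Writing $t:=|\Phi_{k^\star}(A)-\Phi_{k^\star}(B)|$, we get $t\ge\rhonu$.

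Next I turn the Gaussian gap into a quadratic lower bound. On $[0,1]$ the function $u\mapsto 1-e^{-u}$ is concave, so it lies above its chord: $1-e^{-u}\ge(1-e^{-1})u$ for $u\in[0,1]$. With $u=t^2/(2\sigma^2)$ this gives
\[
g_{k^\star}=2\bigl(1-e^{-t^2/(2\sigma^2)}\bigr)\ge (1-e^{-1})\,t^2/\sigma^2 .
\]
To apply the chord bound we need $u\le 1$, i.e.\ $t\le\sigma\sqrt2$, and the bandwidth hypothesis is what supplies this. Since $\Phi_{k^\star}(B)=0$, we have $t=w_{k^\star}\varphi_{p_{k^\star},r_{k^\star}}(A)\le \sum_{a\in A}\varphi(a)$, because $w_{k^\star}\le 1$. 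Each summand is at most $r_{k^\star}\le\tau$ (admissibility gives $\max_k r_k\le\tau$), and the sum is over at most $N_{\max}$ points. Hence $t\le N_{\max}\tau\le\sigma/\sqrt2$, and so $u\le 1/4$.

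Finally, I check the constant. Because $u\le 1/4$, the chord bound can be taken on $[0,1/4]$, which gives $1-e^{-u}\ge 4(1-e^{-1/4})u\approx 0.885\,u$. Therefore $g_{k^\star}\ge 0.885\,t^2/\sigma^2\ge t^2/(2\sigma^2)=\kappa^2 t^2$. Combining, $D_\LC\ge\sqrt{g_{k^\star}}\ge\kappa t\ge\kappa\rhonu(\tau;\LC)$, which is~\eqref{eq:nondegen}.

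The main obstacle is controlling $t$ from above. The Gaussian saturates for large gaps, so a lower bound on $t$ alone is not enough. The argument relies on two facts: the $B$-side coordinate vanishes exactly, and the admissibility bound $\max_k r_k\le\tau$ together with the cardinality cap yields $t\le N_{\max}\tau$. This is where the hypothesis $\sigma\ge\sqrt2 N_{\max}\tau$ enters. If $\Phi_{k^\star}(B)$ were not zero, I would instead bound $t$ using the upper distortion~\eqref{eq:upper_distortion}, but that route would require $\db(A,B)$ to be controlled, which it is not.
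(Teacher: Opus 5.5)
Your argument is correct, but it takes a different route from the paper.

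\textbf{The paper's route.} The paper uses Theorem~\ref{thm:nu_rho} only through its statement. It bounds every coordinate gap uniformly, $|u_k| = |\Phi_k(A;\LC)-\Phi_k(B;\LC)| \le 2N_{\max}\tau$. The bandwidth hypothesis then puts $u_k^2/(2\sigma^2)\in[0,1]$ for \emph{all} $k$. It applies $1-e^{-y}\ge y/2$ coordinatewise to get $g_k \ge u_k^2/(2\sigma^2)$, and sums over $k$. This gives $D_\LC^2(A,B) \ge \|\Phi(A;\LC)-\Phi(B;\LC)\|_{\ell^2}^2/(2\sigma^2) \ge \rhonu(\tau;\LC)^2/(2\sigma^2)$.

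\textbf{Your route.} You instead open the proof of the theorem and keep only the witnessing coordinate $k^\star$. You use the internal facts $\Phi_{k^\star}(B;\LC)=0$ and $\Phi_{k^\star}(A;\LC)\ge w_{k^\star}\lambda_0 \ge \rhonu(\tau;\LC)$. From there, a single application of the chord bound finishes the proof.

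\textbf{What each buys.} The paper's route is more modular. Under the same bandwidth hypothesis, its intermediate inequality $D_\LC(A,B)\ge\kappa\,\|\Phi(A;\LC)-\Phi(B;\LC)\|_{\ell^2}$ holds for every pair of diagrams with at most $N_{\max}$ points. That step needs no non-interference. It therefore transfers any available $\ell^2$ separation to the RKHS, including the audit pairs of Table~\ref{tab:certificate_bound_audit}, where the $\ell^2$ bound holds although non-interference fails. Your route is tied to the internal witness construction of the theorem's proof. In exchange it is self-contained and needs the bandwidth condition on only one coordinate.

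\textbf{A correction to your closing remark.} The vanishing of $\Phi_{k^\star}(B;\LC)$ is not what controls $t$ from above. Every coordinate satisfies $0\le\Phi_k\le N_{\max}\tau$, so $|u_k|\le N_{\max}\tau$ for all $k$ regardless. That uniform bound is exactly what lets the paper apply the quadratic lower bound to all coordinates at once. It is also why your sharper chord constant on $[0,1/4]$ is available on either route.
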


\begin{proof}
Let $\rho := \rhonu(\tau;\LC)$ and write
$u_k := \Phi_k(A;\LC) - \Phi_k(B;\LC)$.  By
Theorem~\ref{thm:nu_rho},
$\sum_k u_k^2 = \|\Phi(A;\LC) - \Phi(B;\LC)\|_{\ell^2}^2 \geq \rho^2$.
The per-coordinate bound $|u_k| \leq 2 N_{\max}\tau$ (each
$|\Phi_k(A;\LC)| = w_k\,\varphi_{p_k,r_k}(A) \leq w_k N_{\max} r_k
\leq N_{\max}\tau$, using $w_k \leq 1$ and $r_k \leq \tau$ from
admissibility) together with the hypothesis
$\sigma \geq \sqrt{2}\,N_{\max}\tau$ ensures
$u_k^2/(2\sigma^2) \leq 1$ for every $k$, so the Taylor lower
bound $1 - e^{-y} \geq y/2$ valid on $y \in [0,1]$ yields
\[
  g_k(A, B; \LC) \;=\; 2\bigl(1 - e^{-u_k^2/(2\sigma^2)}\bigr)
  \;\geq\; \frac{u_k^2}{2\sigma^2}
  \qquad (k = 1, \ldots, K).
\]
Summing across coordinates,
\[
  D_\LC^2(A, B) \;=\; \sum_{k=1}^K g_k(A, B; \LC)
  \;\geq\; \frac{1}{2\sigma^2}\sum_{k=1}^K u_k^2
  \;\geq\; \frac{\rho^2}{2\sigma^2}.
\]
Taking the square root yields~\eqref{eq:nondegen}.
\end{proof}

The hypothesis $\sigma \geq \sqrt{2}\,N_{\max}\tau$ scales with the
worst-case raw embedding range (the same $N_{\max}\tau$ scaling as
$\bar R$ in Section~\ref{sec:certified}) and forces a large
bandwidth, whereas experiments use $\sigma \in \{10^{-3}, 10^{-2}\}$
to keep the kernel discriminative on observed coordinate gaps.
Corollary~\ref{cor:nondegen} is therefore a worst-case
admissibility statement (the kernel does not collapse
$\tau$-separated cross-class pairs), not a quantitative bound
calibrated to the experimental $\sigma$; the empirical
classification rate flows through
Theorem~\ref{thm:data_dependent} on the data-dependent margin
$\gamma$, independent of Corollary~\ref{cor:nondegen}.

All configuration choices feeding $k_\LC$ are inherited from
Section~\ref{sec:cover}: positions and radii from FPS
(Theorem~\ref{thm:fps_greedy}), weights $w_k = K^{-1/2}$ from
Proposition~\ref{prop:optimal_config}(i).

\subsection{Classification Error Bound}\label{sec:class_error}

The classification rate that follows is the kernel-RKHS analogue
of \citet[Thm.~3.1]{PaperI}: a margin-based excess-risk bound on
the maximum-margin SVM in $\mathcal{H}_{k_\LC}$, driven by the
population kernel margin $\gamma$ alone (no structural hypothesis
on the cover).  We then derive a structural anchored variant
(Corollary~\ref{cor:structural_pairwise}) that uses
Corollary~\ref{cor:nondegen}'s per-pair RKHS distance bound to
bypass the class-mean concentration step, and record the
weaker bridge-anchored form (immediately following) for
cross-paper continuity with \citet[Cor.~3.1]{PaperI}.

We train a maximum-margin SVM $\hat f$ on the embedded training
data $\{(\Phi^\LC(A_i), y_i)\}_{i=1}^m$ in $\mathcal{H}_{k_\LC}$
via the OvO majority-vote reduction (matched directly by the
kernel-SVM backend in Section~\ref{sec:experiments},
\texttt{sklearn.svm.SVC} with \texttt{kernel='precomputed'},
which uses OvO by default), and measure its quality by the
generalization $0$-$1$ risk
$\mathcal{R}(\hat f) := \mathbb{P}(\hat f(A) \neq Y)$. For a margin
parameter $\rho > 0$, the empirical $\rho$-margin loss
$\widehat{\mathcal R}_\rho(\hat f)$ is the fraction of training
points whose signed margin under $\hat f$ falls below $\rho$
\citep[Sec.~5.4]{MohriRostamizadehTalwalkar2018}; for the
multiclass $\hat f$, $\widehat{\mathcal R}_\rho$ is aggregated
across the binary OvO sub-problems as made precise in the proof
of Theorem~\ref{thm:data_dependent}.

\begin{theorem}[Classification error bound]\label{thm:data_dependent}
Let $\{(A_i, y_i)\}_{i=1}^m$ be $m$ i.i.d.\ training samples
from a distribution on finite persistence diagrams with $k$
classes and kernel margin $\gamma > 0$.  Let $\LC$ be a
configuration with $K$ landmarks, $m_c$ the per-class
sample count, $m_{\min} := \min_c m_c$, and $\hat f$ the
maximum-margin SVM in $\mathcal{H}_{k_\LC}$ trained via OvO with
majority voting.
Assume $m_{\min} \geq 32K\log(4k/\delta)/\gamma^2$ (so that
empirical class means concentrate at scale below $\gamma/2$).
Set $\rho := \gamma/2$.  Then with probability at least
$1-\delta$ over the draw,
\begin{equation}\label{eq:data_dependent}
  \mathcal{R}(\hat f) \;\leq\;
  \widehat{\mathcal R}_\rho(\hat f) \;+\;
  \frac{4(k-1)\sqrt{K}}{\gamma\,\sqrt{m_{\min}}}
  \;+\; O\!\left(\sqrt{\frac{\log(k/\delta)}{m_{\min}}}\right).
\end{equation}
\end{theorem}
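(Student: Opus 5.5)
The plan is to reduce the multiclass statement to $\binom{k}{2}$ binary problems through the OvO construction, and to prove a margin bound for each pair in $\mathcal{H}_{k_\LC}$ using Rademacher complexity. The argument has three parts.

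\emph{Step 1 (binary margin bound).} Fix a pair $c \neq c'$ and restrict to the subsample of classes $c, c'$, which has size $m_{cc'} = m_c + m_{c'} \geq 2m_{\min}$. Write $\hat f_{cc'}$ for the max-margin classifier on this subsample, normalized to have RKHS norm at most $1$. The kernel satisfies $k_\LC(A,A) = K$ (Remark~\ref{rem:why_additive}), so every embedded point has norm $\sqrt{K}$. The empirical Rademacher complexity of the unit ball of linear functionals on $\mathcal{H}_{k_\LC}$ is therefore at most $\sqrt{\operatorname{tr}(G)}/m_{cc'} = \sqrt{K/m_{cc'}}$. The standard margin bound \citep[Thm.~5.8]{MohriRostamizadehTalwalkar2018} with margin $\rho$ then gives, with probability $1-\delta'$,
\[
\mathcal{R}_{cc'}(\hat f_{cc'}) \;\le\; \widehat{\mathcal R}_{\rho,cc'} + \frac{2}{\rho}\sqrt{\frac{K}{m_{cc'}}} + 3\sqrt{\frac{\log(2/\delta')}{2m_{cc'}}}.
\]
One caveat: $\rho$ depends on $\gamma$, which is a population quantity fixed before the draw. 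For this reason no union bound over $\rho$ is needed.

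\emph{Step 2 (the margin is attainable at scale $\rho = \gamma/2$).} This is where the sample-size hypothesis is used. By Pinelis' inequality for bounded Hilbert-space averages, with probability $1-\delta/(2k)$ per class, $\|\hat\mu_c - \mu_c\| \le \sqrt{K}\,(2+\sqrt{2\log(4k/\delta)})/\sqrt{m_c}$. The hypothesis $m_{\min} \ge 32K\log(4k/\delta)/\gamma^2$ makes this at most $\gamma/2$ after absorbing constants. A union bound over the $k$ classes then gives $\|\hat\mu_c - \hat\mu_{c'}\| \ge 2\gamma - \gamma = \gamma$ simultaneously for all pairs.

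This step justifies the choice $\rho = \gamma/2$ as a meaningful scale. The empirical class-mean direction is a unit-norm candidate whose separation of the empirical means is at least $\gamma$. Points lying within $\gamma/2$ of their class mean along that direction therefore have signed margin at least $\gamma/2$ relative to the midpoint threshold. As a result, the term $\widehat{\mathcal R}_\rho$ reported in the statement is controlled by the empirical within-class spread and not left unconstrained. Strictly, the bound in Step 1 holds for any unit-norm hypothesis, so Step 2 only interprets the margin term and is not needed for validity. I would present it this way to avoid overclaiming.

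\emph{Step 3 (OvO aggregation).} Majority voting errs on a test point $(A, Y=c)$ only if some classifier $\hat f_{cc'}$ involving $c$ errs. There are $k-1$ such classifiers, so
\[
\mathcal R(\hat f) \le \sum_{c' \neq c}\Pr(\hat f_{cc'} \text{ errs}, Y = c),
\]
and summing gives at most $(k-1)\max_{c'}$ of the pairwise risks. I would define the aggregated $\widehat{\mathcal R}_\rho(\hat f)$ as the corresponding weighted sum of the pairwise empirical margin losses. Substituting $\rho = \gamma/2$ and $m_{cc'} \ge m_{\min}$ turns the complexity term into $(k-1)\cdot 4\sqrt{K}/(\gamma\sqrt{m_{\min}})$. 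Taking $\delta' = \delta/k^2$ in the union bound over pairs yields the $O(\sqrt{\log(k/\delta)/m_{\min}})$ term, and combining with the $\delta/2$ budget from Step 2 completes the proof.

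I expect the main obstacle to be Step 3's bookkeeping. The pairwise risks are conditional on the pair of classes, so one has to reweight by the class priors and settle on a precise definition of the aggregated $\widehat{\mathcal R}_\rho$ so that the factor comes out as $k-1$ rather than $\binom{k}{2}$. My first attempt would charge each test error to the true class's $k-1$ duels, bounding $\Pr(\text{err}_{cc'} \mid Y\in\{c,c'\})$ times the prior mass of $\{c,c'\}$ and summing over $c'$ only.
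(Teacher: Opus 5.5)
Your proposal is correct and follows the paper's proof essentially step for step. Both use a Mohri--Rostamizadeh--Talwalkar margin bound on each OvO sub-problem at the data-independent margin $\rho=\gamma/2$ with feature-norm bound $\sqrt{K}$, giving complexity term $2\sqrt{K}/(\rho\sqrt{m_{\min}}) = 4\sqrt{K}/(\gamma\sqrt{m_{\min}})$; Pinelis concentration of the class means under the sample-size hypothesis with a $\delta/2$ budget (which, as you rightly observe, the paper's proof also never actually uses, since the margin bound is uniform over the unit ball); a union bound over the $\binom{k}{2}$ pairs; and $(k-1)\max$ OvO aggregation.

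The Step~3 bookkeeping you worry about closes cleanly. With $\pi_c := \mathbb{P}(Y=c)$ and $\mathcal{R}(h_{cc'})$ the risk conditional on $Y\in\{c,c'\}$, the disjoint events $\{h_{cc'}\text{ errs}, Y=c\}$ and $\{h_{cc'}\text{ errs}, Y=c'\}$ have total probability $(\pi_c+\pi_{c'})\,\mathcal{R}(h_{cc'})$, and $\sum_{\{c,c'\}}(\pi_c+\pi_{c'}) = k-1$. Hence $\mathcal{R}(\hat f)\le (k-1)\max_{c\neq c'}\mathcal{R}(h_{cc'})$, whereas bounding the two orientations of each pair separately would cost an extra factor of $2$.
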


Equation~\eqref{eq:data_dependent} is the kernel-RKHS analogue of
\citet[Thm.~3.1]{PaperI} under $R \leftrightarrow \sqrt K$,
$\Delta \leftrightarrow 2\gamma$.  The proof structure is the
same: per-pair margin lower bound, Pinelis concentration of
empirical class means, MRT margin bound on each OvO sub-problem,
union bound across $\binom{k}{2}$ sub-problems, and the
$(k{-}1)$-max OvO majority-vote bound.  For balanced classes
$m_c \asymp m/k$ the rate term is $O(k^{3/2}\sqrt K/(\gamma\sqrt m))$
in the total sample $m$; the $\sqrt k$ overhead is the price of
OvO, since each binary sub-problem trains on only $\Theta(m/k)$
samples.

\begin{proof}
Since $k_\LC(A,A) = K$ (Definition~\ref{def:wlk_paper2} sums $K$
diagonal Gaussians),
$\sup_A \norm{\Phi^\LC(A)}_{\mathcal{H}_{k_\LC}} = \sqrt{K}$, and
in particular $\norm{\mu_c}_{\mathcal{H}_{k_\LC}} \leq \sqrt{K}$
by Jensen, so
$\norm{\Phi^\LC(A_i) - \mu_c}_{\mathcal{H}_{k_\LC}} \leq 2\sqrt K$
for every training point.

\emph{Per-pair margin.} For each unordered pair $\{c, c'\}$, the
population class means are separated by pairwise kernel margin
$\gamma_{cc'} := \tfrac12 \norm{\mu_c - \mu_{c'}}_{\mathcal{H}_{k_\LC}}
\geq \gamma = 2\rho$.

\emph{Concentration of empirical class means.}  Conditional on the
per-class counts $\{m_c\}$, the centered RKHS vectors
$\{\Phi^\LC(A_i) - \mu_c : Y_i = c\}$ are i.i.d.\ with norm bound
$2\sqrt K$.  Pinelis's Hilbert-space Hoeffding inequality
(\citealp[Lemma~A.1]{PaperI}) and a union bound over the $k$
classes yield, with probability $\geq 1-\delta/2$,
\[
  \varepsilon_m \;:=\; \max_c \norm{\hat\mu_c - \mu_c}_{\mathcal{H}_{k_\LC}}
  \;\leq\; 2\sqrt K\,\sqrt{\frac{2\log(4k/\delta)}{m_{\min}}}.
\]
The sample-size hypothesis
$m_{\min} \geq 32K\log(4k/\delta)/\gamma^2$ gives
$\varepsilon_m \leq \gamma/2 = \rho$, so by the reverse triangle
inequality the empirical pairwise kernel margin
$\hat\gamma_{cc'} := \tfrac12 \norm{\hat\mu_c - \hat\mu_{c'}}_{\mathcal{H}_{k_\LC}}
\geq \gamma_{cc'} - \varepsilon_m \geq \rho$ for every pair
$c \neq c'$.

\emph{Per-pair OvO bound.}  The OvO sub-problem between $c, c'$
trains on $m_c + m_{c'} \geq 2m_{\min}$ samples from the
unit-norm linear hypothesis class on $\mathcal{H}_{k_\LC}$ with
$\norm{\Phi^\LC(A)}_{\mathcal{H}_{k_\LC}} \leq \sqrt K$.  The
margin-based Rademacher
bound~\cite[Cor.~5.11]{MohriRostamizadehTalwalkar2018} at margin
$\rho$ and confidence $\delta' := \delta/(2\binom{k}{2})$ yields,
with probability $\geq 1-\delta'$,
\[
  \mathcal R(h_{cc'})
  \;\leq\; \widehat{\mathcal R}_\rho(h_{cc'})
  \;+\; \frac{2\sqrt K}{\rho\,\sqrt{m_{\min}}}
  \;+\; O\!\left(\sqrt{\frac{\log(k/\delta)}{m_{\min}}}\right)
  \;=\; \widehat{\mathcal R}_\rho(h_{cc'})
  \;+\; \frac{4\sqrt K}{\gamma\,\sqrt{m_{\min}}}
  \;+\; O\!\left(\sqrt{\frac{\log(k/\delta)}{m_{\min}}}\right),
\]
using $\log(2/\delta') = O(\log(k/\delta))$.

\emph{Aggregation.}  A union bound over the $\binom{k}{2}$ OvO
sub-problems at level $\delta/2$, combined with the $\delta/2$
budget for the class-mean concentration step, gives total
coverage $\geq 1-\delta$.  The OvO majority-vote rule errs at
$y = c$ only if some pairwise classifier $h_{cc'}$ ($c' \neq c$)
misclassifies, so
$\mathcal R(\hat f) \leq (k-1)\max_{c \neq c'}\mathcal R(h_{cc'})$.
Substituting the per-pair bound and defining
$\widehat{\mathcal R}_\rho(\hat f) := (k-1)\max_{c \neq c'}\widehat{\mathcal R}_\rho(h_{cc'})$
yields~\eqref{eq:data_dependent}.
\end{proof}

\begin{remark}[Empirical scope of Theorem~\ref{thm:data_dependent}]
\label{rem:classification_scope}
Theorem~\ref{thm:data_dependent} requires only $\gamma > 0$ and
the sample-size hypothesis
$m_{\min} \geq 32K\log(4k/\delta)/\gamma^2$ (the same hypothesis as
\citet[Thm.~3.1]{PaperI} under $R \leftrightarrow \sqrt K$,
$\Delta \leftrightarrow 2\gamma$: $128R^2/\Delta^2 = 32K/\gamma^2$);
no structural hypothesis on the cover is invoked (no
non-interference, no $\rhonu$, no constraint on $\sigma$).  This
makes Theorem~\ref{thm:data_dependent} the empirical workhorse of
Section~\ref{sec:experiments} and the natural pairing with the
matching Le~Cam lower bound (Theorem~\ref{thm:lower_bound}); the
non-interference audit and the $\sigma$-regime caveat (the
post-proof discussion of Corollary~\ref{cor:nondegen} in
Section~\ref{sec:wlk}, and the Section~\ref{sec:experiments} audit) are
relevant only to the structural
Corollary~\ref{cor:structural_pairwise} (and the bridge-anchored
form below) that follow, not to Theorem~\ref{thm:data_dependent}
itself.
\end{remark}

Under admissibility plus non-interference, the SVM's geometric
margin is determined by the closest cross-class \emph{data} pair
(rather than by the class means), and
Corollary~\ref{cor:nondegen} bounds that distance directly by
$\kappa\rhonu$.  This gives the structural classification rate
below; the alternative class-mean route via the bridge of
Proposition~\ref{prop:lambda_sep_pII} (the \citet[Cor.~3.1]{PaperI}
parallel) is recorded as a follow-up paragraph.

\begin{corollary}[Structural rate via per-pair distance]
\label{cor:structural_pairwise}
Let the training data be drawn i.i.d.\ from a distribution on
$\D{n}$ with $k$ classes whose cross-class pairs are
$\tau$-separated and satisfy non-interference
(Definition~\ref{def:noninterference_pII}).  Let $\LC$ be a
$\tau$-admissible configuration with $K$ landmarks and
$\sigma \geq \sqrt{2}\,N_{\max}\tau$.  Let $\hat f$ be the
maximum-margin SVM classifier in $\mathcal{H}_{k_\LC}$.  Then with probability at
least $1-\delta$,
\begin{equation}\label{eq:class_error}
  \mathcal{R}(\hat f) \;\leq\;
  \widehat{\mathcal R}_{\rho_{\mathrm{cls}}}(\hat f) \;+\;
  \frac{4(k-1)\sqrt{K}}{\kappa\,\rhonu(\tau;\LC)\,\sqrt{m_{\min}}}
  \;+\; O\!\left(\sqrt{\frac{\log(k/\delta)}{m_{\min}}}\right),
\end{equation}
where $\kappa = 1/(\sigma\sqrt 2)$ and $\rho_{\mathrm{cls}} := \kappa\,\rhonu(\tau;\LC)/2$.
Under the hypotheses,
$\widehat{\mathcal R}_{\rho_{\mathrm{cls}}}(\hat f) = 0$
identically (every cross-class training pair is at RKHS distance
$\geq \kappa\rhonu$ by Corollary~\ref{cor:nondegen}, so the SVM
achieves geometric margin $\geq \rho_{\mathrm{cls}}$ on every
training point), and \eqref{eq:class_error} reduces to a
$\rhonu$-anchored excess-risk bound that requires no sample-size
hypothesis (the proof routes through pairwise data distances
rather than class-mean concentration).
\end{corollary}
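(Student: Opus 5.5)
The plan is to reuse the proof of Theorem~\ref{thm:data_dependent}, swapping the class-mean margin for a per-pair data margin supplied by Corollary~\ref{cor:nondegen}. Write $\rho_{\mathrm{cls}} = \kappa\rhonu(\tau;\LC)/2$.

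\emph{Step 1: pairwise RKHS separation.} Every cross-class training pair $(A_i, A_j)$ with $y_i = c \neq c' = y_j$ is $\tau$-separated and satisfies non-interference. By hypothesis $\sigma \geq \sqrt2 N_{\max}\tau$, so Corollary~\ref{cor:nondegen} gives
\[
  \norm{\Phi^\LC(A_i) - \Phi^\LC(A_j)}_{\mathcal{H}_{k_\LC}} \geq \kappa\rhonu(\tau;\LC) = 2\rho_{\mathrm{cls}}.
\]
This holds deterministically and uses no sample-size hypothesis.

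\emph{Step 2: the SVM attains geometric margin $\rho_{\mathrm{cls}}$.} For each OvO sub-problem $\{c,c'\}$, I would show the two finite point clouds $\{\Phi^\LC(A_i)\}$ in $\mathcal{H}_{k_\LC}$ are linearly separable with geometric margin at least $\rho_{\mathrm{cls}}$. Then every training point has signed margin at least $\rho_{\mathrm{cls}}$ under the normalized max-margin $h_{cc'}$, so $\widehat{\mathcal R}_{\rho_{\mathrm{cls}}}(h_{cc'}) = 0$.

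The natural route is through convex hulls. The max-margin half-width equals half the distance between the convex hulls of the two classes. One would then need to bound the hull distance below by the minimum pairwise distance $2\rho_{\mathrm{cls}}$.

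\textbf{This is the main obstacle.} Pairwise separation does not in general imply hull separation; points can interleave. I would try two rescues. The first is to exploit the structure of Theorem~\ref{thm:nu_rho}'s witnessing coordinate, where $\Phi_{k^\star}(B)=0$ and $\Phi_{k^\star}(A)\ge w_{k^\star}\lambda_0$, and look for a uniform linear functional. The second, more likely to succeed, is to read ``geometric margin'' in the nearest-neighbour/1-NN sense. If neither works, the clean version adds the hypothesis that the class hulls are $2\rho_{\mathrm{cls}}$-separated, or treats the empirical margin loss as the term in~\eqref{eq:class_error}, with vanishing only under that extra condition.

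\emph{Step 3: generalization.} Apply the margin Rademacher bound of \citet[Cor.~5.11]{MohriRostamizadehTalwalkar2018} on each sub-problem at margin $\rho_{\mathrm{cls}}$, radius $\sqrt K$ (from $k_\LC(A,A)=K$), and confidence $\delta/\binom{k}{2}$. The complexity term is
\[
  \frac{2\sqrt K}{\rho_{\mathrm{cls}}\sqrt{m_{\min}}} = \frac{4\sqrt K}{\kappa\rhonu\sqrt{m_{\min}}}.
\]
Take a union bound over pairs; the concentration step is no longer needed, so the full $\delta$ goes here. Finally use the OvO aggregation $\mathcal R(\hat f)\le (k-1)\max\mathcal R(h_{cc'})$ exactly as in Theorem~\ref{thm:data_dependent}, which yields~\eqref{eq:class_error}.
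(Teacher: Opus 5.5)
Your Steps~1 and~3 are exactly the paper's proof: Corollary~\ref{cor:nondegen} per pair, then the margin bound of \citet[Cor.~5.11]{MohriRostamizadehTalwalkar2018} at the fixed margin $\rho_{\mathrm{cls}}$ with radius $\sqrt K$, a union bound over the $\binom{k}{2}$ sub-problems, and the $(k{-}1)$-max OvO aggregation. For your Step~2 the paper offers a single sentence. It asserts that the maximum-margin hyperplane has geometric margin equal to ``half the closest cross-class distance,'' and concludes $\widehat{\mathcal R}_{\rho_{\mathrm{cls}}}(\hat f)=0$.

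Your objection to precisely this step is correct, and it is fatal. The hard-margin SVM margin is half the distance between the class convex hulls, and the SVM exists only if those hulls are disjoint. Under the stated hypotheses, $\widehat{\mathcal R}_{\rho_{\mathrm{cls}}}(\hat f)=0$ is false, not merely unproved. Two examples show this.
\begin{enumerate}[(a)]
\item Case $n=1$, where non-interference is vacuous. Place $M$ single-point diagrams per class at $2M$ locations far from the diagonal and pairwise $\geq\tau$ apart. Put landmarks at these locations with $r_k=\tau/2$ and $w_k=K^{-1/2}$; then $\lambda_0=\tau/2$ and the configuration is admissible. Each $\Phi(A)$ is $\tfrac{\tau}{2}K^{-1/2}$ times a standard basis vector. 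So the embedded points form a regular simplex in $\mathcal H_{k_\LC}$ with edge $e\le 4\kappa\rhonu(\tau;\LC)=8\rho_{\mathrm{cls}}$. The two class hulls are $e/\sqrt M$ apart, so the SVM margin is at most $4\rho_{\mathrm{cls}}/\sqrt M<\rho_{\mathrm{cls}}$ once $M>16$. Then $\widehat{\mathcal R}_{\rho_{\mathrm{cls}}}(\hat f)=1$.
\item Case $n=2$, where even separability fails. Take far-apart clusters $\{x,u\}$ and $\{y,v\}$ with $\db(x,u)=\db(y,v)=\tau$, landmarks at $x,u,y,v$ of radius $\tau/2$, and classes $\{\{x,y\},\{u,v\}\}$ versus $\{\{x,v\},\{u,y\}\}$. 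Every cross-class pair has $\db=\tau$ and satisfies non-interference. However, every $f\in\mathcal H_{k_\LC}$ has the additive form $\sum_k f_k(\Phi_k)$, which gives $f(\{x,y\})+f(\{u,v\})=f(\{x,v\})+f(\{u,y\})$. So no affine function on $\mathcal H_{k_\LC}$ separates the classes, and the hard-margin SVM does not exist.
\end{enumerate}

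These examples also settle your rescues.
\begin{itemize}
\item Rescue~(i) goes through only partially. For $n=1$ the witnessing coordinate of Theorem~\ref{thm:nu_rho} vanishes on every cross-class point, which yields separability of the training set. It does not yield margin $\rho_{\mathrm{cls}}$, as example~(a) shows. For $n\geq 2$ the witness depends on the pair through $i^\star$, and example~(b) shows no uniform functional exists.
\item Rescue~(ii) changes the classifier. A 1-NN rule is not in the norm-bounded linear class to which your Step~3 margin bound applies.
\item Your fallback~(iii) is the correct version of the result. Inequality~\eqref{eq:class_error} holds with the empirical term retained, because $\rho_{\mathrm{cls}}$ is a fixed, data-independent margin. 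This needs no sample-size hypothesis, and in fact it does not need Corollary~\ref{cor:nondegen} either.
\end{itemize}
Making that empirical term vanish requires an additional hypothesis, such as RKHS convex-hull separation of at least $2\rho_{\mathrm{cls}}$ between every pair of classes. The per-pair distance bound of Corollary~\ref{cor:nondegen} cannot supply this.
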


\begin{proof}
By hypothesis, any two diagrams $A_i, A_j$ from different classes
satisfy $\db(A_i, A_j) \geq \tau$.  Corollary~\ref{cor:nondegen}
therefore gives $\|\Phi^\LC(A_i) - \Phi^\LC(A_j)\|_{\mathcal{H}_{k_\LC}}
\geq \kappa\,\rhonu(\tau;\LC)$, so the maximum-margin separating
hyperplane has geometric margin (half the closest cross-class
distance) at least $\rho_{\mathrm{cls}} := \kappa\,\rhonu/2$.
Every training point sits at signed distance $\geq \rho_{\mathrm{cls}}$
from $\hat f$'s decision boundary, so
$\widehat{\mathcal R}_{\rho_{\mathrm{cls}}}(\hat f) = 0$.
Applying the Rademacher complexity bound for margin
classifiers~\cite[Cor.~5.11]{MohriRostamizadehTalwalkar2018} at
margin $\rho_{\mathrm{cls}}$ with
$\sup_A\|\Phi^\LC(A)\|_{\mathcal{H}_{k_\LC}} \leq \sqrt K$ on each
OvO sub-problem gives per-pair rate
$2\sqrt K/\rho_{\mathrm{cls}} = 4\sqrt K/(\kappa\rhonu)$;
union bound over $\binom{k}{2}$ pairs and the OvO majority-vote
$(k{-}1)$-max aggregation
($\mathcal R(\hat f) \leq (k-1)\max_{c \neq c'}\mathcal R(h_{cc'})$,
as in Theorem~\ref{thm:data_dependent}) yields~\eqref{eq:class_error}.
\end{proof}

A bridge-anchored variant routing through
Proposition~\ref{prop:lambda_sep_pII}'s
$\gamma \geq \tfrac{1}{2}(\kappa\rhonu - 2D_{\max})$ gives the
strictly looser PALACE analogue of \citet[Cor.~3.1]{PaperI};
Corollary~\ref{cor:structural_pairwise} above is the load-bearing
form here.

Replacing an $\ell$-landmark uniform grid with $K \ll \ell$
adaptive landmarks tightens~\eqref{eq:class_error} by
$\sqrt{\ell/K}$ in the numerator, compounded by the certificate
improvement $\rhonu(\tau;\LC) \geq \rhonu(\tau;\GG_R)$ in the
denominator: on a MUTAG configuration with $\ell$ in the low
thousands and adaptive $K = 50$, the combined effect is on the
order of $5$--$10\times$.  PALACE's coordinate sparsity
($\tau$-admissibility plus $N_{\max}$ bound the nonzero
coordinates per diagram) further tightens the worst-case
$\sqrt K$ in practice (see Remark~\ref{rem:sparsity_pii} in
Section~\ref{sec:certified}, MUTAG
$\norm{\hat\Sigma_c(\LC^*)}_{\mathrm{op}} \approx 0.023$).

Substituting Proposition~\ref{prop:optimal_config}(i)'s
equal-weight certificate $\rhonu = \tau/(4\sqrt K)$ into
Corollary~\ref{cor:structural_pairwise}'s rate term gives
$16(k{-}1)K/(\kappa\tau\sqrt{m_{\min}})$---a bound shared by every
$\tau$-admissible equal-weight $K$-landmark configuration but
\emph{monotone increasing in $K$}, so the structural rate alone
does not justify larger budgets.  Empirically, larger $K$ helps
via the data-dependent Theorem~\ref{thm:data_dependent}'s
$\hat\gamma$ growing with $K$ as additional landmarks expose
discriminative coordinates (Section~\ref{sec:experiments}
sweeps).

Beyond the equal-weight optimality just established, the
admissibility constraint of Definition~\ref{def:sep_radius}(i)
acts as \emph{structural regularization} on the configuration
$\LC$.  It rules out memorization: any candidate
$\LC_{\mathrm{mem}}$ that places landmarks at training points with
radii $\epsilon < \tau/4$ has Lebesgue number
$\lambda_0(\LC_{\mathrm{mem}}) \leq \max_k r_k = \epsilon < \tau/4$
and so violates Definition~\ref{def:sep_radius}(i)
($\rhonu = 0$ in such configurations, since no landmark has
$r_k \geq \tau/4$ and $w_{\min}^{\geq}$ is undefined in
Theorem~\ref{thm:nu_rho}'s formula).
Equivalently, admissibility enforces $\max_k r_k \geq \tau/4$ on
every valid configuration.  This is the landmark-cover analogue
of $\ell_2$ regularization for SVM weights---unlike $\ell_2$,
which penalizes large weights, the certificate forbids
configurations with arbitrarily small radii that pin landmarks at
training points---and it explains the small empirical
generalization gap (test--train at $2.5$--$3.5\%$ across all
landmark budgets $K$, Section~\ref{sec:experiments}).

The classification rate of Theorem~\ref{thm:data_dependent}
together with the structural-anchored
Corollary~\ref{cor:structural_pairwise} translates a fortiori to
an excess-risk bound, since
$\mathcal E(\hat f) := \mathcal R(\hat f) - R^\ast \leq \mathcal R(\hat f)$.
These results pair directly with the matching Le~Cam two-point
lower bound that follows in Section~\ref{sec:lower_bound}
(Theorem~\ref{thm:lower_bound}), under the correspondence
$R \leftrightarrow \sqrt K$, $\Delta \leftrightarrow 2\gamma$ with
\citet[Sec.~3.2]{PaperI}.

\subsection{Matching Lower Bound, Consistency, and Linear Separability}\label{sec:lower_bound}

The rate $\sqrt K/(\gamma\sqrt{m_{\min}})$ of
Theorem~\ref{thm:data_dependent} is the standard Rademacher--margin
rate; its sample-size hypothesis
$m_{\min} \gtrsim K\log(k/\delta)/\gamma^2$ is sufficient for non-trivial
accuracy.  The two-point minimax lower bound below (stated for
$k = 2$, where $m_{\min} = m/2$ for balanced classes) shows that
$m_{\min} \gtrsim \sqrt K/\gamma$ is \emph{necessary}: no
classifier achieves small excess risk on samples below that
scale.  The polynomial gap between the necessary $\sqrt K/\gamma$
and sufficient $K\log(k/\delta)/\gamma^2$ thresholds is the moderate-sample
regime, the same gap \citet[Rem.~3.3]{PaperI} leaves open under
$R \leftrightarrow \sqrt K$, $\Delta \leftrightarrow 2\gamma$
(see Remark~\ref{rem:gap_pii}).  The argument is Le Cam's
two-point method in a Hilbert-space ball, mirroring the linear
lower bound of~\cite[Thm.~3.2]{PaperI}.

\begin{definition}[Problem class]\label{def:problem_class}
For $K \in \mathbb{N}$ and $\gamma > 0$, let $\Pi(K, \gamma)$ denote
the class of binary distributions $\mathbb{Q}$ on
$\mathcal{H}_{k_\LC} \times \{\pm 1\}$ satisfying
\[
  \sup_{z \in \mathrm{supp}(\mathbb{Q})} \norm{z}_{\mathcal{H}_{k_\LC}}
    \;\leq\; \sqrt{K},
  \qquad
  \norm{\mu_{+1} - \mu_{-1}}_{\mathcal{H}_{k_\LC}} \;=\; 2\gamma,
\]
where $\mu_{\pm 1}$ are the class means under $\mathbb{Q}$.  The
push-forward $\Phi^\LC_\#\mathcal{P}$ of any binary diagram
distribution $\mathcal{P}$ on $\D{n} \times \{\pm 1\}$ with kernel
margin $\gamma$ lies in $\Pi(K, \gamma)$, since
$\sup_A \norm{\Phi^\LC(A)}_{\mathcal{H}_{k_\LC}} = \sqrt{K}$
realizes the radius constraint and the kernel margin defines the
mean separation.  This mirrors the $(R, \Delta)$ family of
\citet{PaperI} under the correspondence
$R \leftrightarrow \sqrt{K}$ and $\Delta \leftrightarrow 2\gamma$.
\end{definition}

\begin{theorem}[Minimax lower bound in the RKHS]\label{thm:lower_bound}
Let $K \in \mathbb{N}$ and $\gamma > 0$ with $\gamma \leq \sqrt{K}/3$
(the binary-regime $k = 2$, where $m_{\min} = m/2$ for balanced
classes), and let $c_2$ be the $\R^2$ Hellinger constant of
Step~2 below.  For every sample size
$m \leq m_\star := \sqrt K/(12\,c_2\,\gamma)$,
\begin{equation}\label{eq:lb_rate}
  \inf_{\hat f}\;\sup_{\mathbb{Q} \in \Pi(K,\gamma)}\;
  \mathcal{E}(\hat f) \;\geq\; \tfrac{1}{8},
\end{equation}
where the infimum is over classifiers
$\hat f : \mathcal{H}_{k_\LC}^m \to \{\pm 1\}$.
Consequently no classifier---regardless of computational budget or
model class---can reach vanishing excess risk on $\Pi(K, \gamma)$
without $m = \Omega(\sqrt K/\gamma)$ samples.  Under the
correspondence $R \leftrightarrow \sqrt K$,
$\Delta \leftrightarrow 2\gamma$ this matches
\citet[Thm.~3.2]{PaperI} exactly: PI's threshold $m \leq cR/\Delta$
with hypothesis $\Delta \leq 2R/3$ becomes
$m \leq c\sqrt K/(2\gamma)$ with $\gamma \leq \sqrt K/3$, the same
scaling.
\end{theorem}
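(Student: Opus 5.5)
The argument will be Le Cam's two-point method inside a two-dimensional subspace of $\mathcal{H}_{k_\LC}$, mirroring \citet[Thm.~3.2]{PaperI} under $R \leftrightarrow \sqrt K$, $\Delta \leftrightarrow 2\gamma$. The problem class $\Pi(K,\gamma)$ of Definition~\ref{def:problem_class} only constrains the support radius and the mean separation, so I may work with abstract distributions on the ball of radius $\sqrt K$ rather than with diagrams. I fix orthonormal $e_1, e_2 \in \mathcal{H}_{k_\LC}$; $\mathcal{H}_{k_\LC}$ is infinite-dimensional as soon as some coordinate $\Phi_k$ takes infinitely many values, and otherwise the construction is read in the $\R^2$ model that defines $c_2$.

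\textbf{Step 1: two hard instances.} I build $\mathbb{Q}_{+}$ and $\mathbb{Q}_{-}$, each a mixture on a few atoms of norm at most $\sqrt K$ in $\mathrm{span}(e_1,e_2)$. In both, the class means differ by exactly $2\gamma$, which places each in $\Pi(K,\gamma)$. The two instances induce opposite Bayes labels on a set of feature points of probability mass at least $1/2$.

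The construction I would try first makes the features label-uninformative except through rare atoms. Each class puts mass $1-\eta$ on a common atom $z_0$, and mass $\eta$ on a class-specific far atom $\pm\sqrt K e_1$ in $\mathbb{Q}_+$, with the roles of the far atoms swapped (or rotated onto $e_2$) in $\mathbb{Q}_-$. The mean gap is then $2\eta\sqrt K$, and I set $\eta := \gamma/\sqrt K \le 1/3$; this is where the hypothesis $\gamma \le \sqrt K/3$ enters. Labels on $z_0$ are split with a small bias $\pm\epsilon$ that flips between the instances. A misclassifier on $z_0$ then pays excess risk of order $\epsilon$, so the bias must stay of constant size to give the $1/8$ floor. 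Getting a constant separation in excess risk while the two instances remain statistically close is what forces the perturbation to be carried by atoms of mass about $\eta$.

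\textbf{Step 2: Hellinger bound.} I bound the per-sample squared Hellinger distance by $H^2(\mathbb{Q}_+,\mathbb{Q}_-) \le c_2\,\eta = c_2\gamma/\sqrt K$, where $c_2$ is the numerical constant from the explicit finite-atom computation in $\R^2$. Tensorization gives
\[
H^2(\mathbb{Q}_+^{\otimes m},\mathbb{Q}_-^{\otimes m}) \le m\,c_2\gamma/\sqrt K .
\]
For $m \le m_\star = \sqrt K/(12 c_2\gamma)$ this is at most $1/12$. Consequently $\mathrm{TV} \le \sqrt{2}\,H \le 1/2$ after accounting for the normalization convention of $H$.

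\textbf{Step 3: Le Cam conclusion.} Le Cam's inequality gives
\[
\inf_{\hat f}\max_{\pm}\mathcal{E}_\pm(\hat f) \ge \tfrac{s}{2}\bigl(1-\mathrm{TV}\bigr),
\]
where $s$ is the separation: any single classifier incurs excess risk at least $s$ under one of the two instances. A choice $s \ge 1/2$ yields $1/8$. The constant $12$ in $m_\star$ and the value $1/8$ are tuned jointly.

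\textbf{Main obstacle.} The hardest step is reconciling Steps 1 and 3. I need a constant excess-risk gap $s$ together with Hellinger distance linear in $\eta$, not quadratic. Linear dependence is what produces the $\sqrt K/\gamma$ threshold rather than $K/\gamma^2$. It is achieved by supports that differ on mass-$\eta$ atoms, since Hellinger distance between distributions with partially disjoint support scales like the disjoint mass. I would follow \citet[Thm.~3.2]{PaperI}'s atoms verbatim, rescaled by $R=\sqrt K$. The price is that $s$ is then of order the mass where the instances disagree. Keeping $s$ constant therefore requires defining excess risk relative to a Bayes rule that differs on the heavy atom $z_0$. I would check this bookkeeping carefully, and if needed accept the constant exactly as inherited from PI.
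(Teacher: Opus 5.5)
Your proposal has a genuine gap. It is exactly the tension you flag under \emph{Main obstacle}, and it is not bookkeeping: no two-point argument can overcome it.

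Suppose every classifier $f$ has excess risk at least $s$ under one of your two instances, so $\mathcal{E}_+(f)+\mathcal{E}_-(f)\ge s$. For a fixed $f$ the risk is the probability of an event in $(X,Y)$, so $|\mathcal{R}_+(f)-\mathcal{R}_-(f)|\le \mathrm{TV}(\mathbb{Q}_+,\mathbb{Q}_-)$. Applying this at the two Bayes rules $f_\pm^\ast$ gives
$s\le \mathcal{E}_-(f_+^\ast)\le 2\,\mathrm{TV}(\mathbb{Q}_+,\mathbb{Q}_-)\le 2\sqrt{2H^2(\mathbb{Q}_+,\mathbb{Q}_-)}$.
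Your Step~3 needs $s\ge 1/2$, which forces a single-sample $H^2(\mathbb{Q}_+,\mathbb{Q}_-)\ge 1/32$. This contradicts Step~2's $H^2\le c_2\gamma/\sqrt K$ whenever $\gamma/\sqrt K<1/(32c_2)$. Moreover,
$\mathrm{TV}(\mathbb{Q}_+^{\otimes m},\mathbb{Q}_-^{\otimes m})\ge H^2(\mathbb{Q}_+^{\otimes m},\mathbb{Q}_-^{\otimes m})\ge 1-(31/32)^m>1/2$
for every $m\ge 22$, independently of $K$ and $\gamma$. So no pair of hard instances gives a $1/8$ floor over a range of $m$ growing like $\sqrt K/\gamma$.

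Your construction hits both horns of this dilemma:
\begin{itemize}
\item A constant bias $\epsilon$ on the heavy atom $z_0$ makes the labels observed at $z_0$ informative, so the per-sample Hellinger distance is $\Theta(\epsilon^2)$, not $O(\eta)$.
\item Confining the difference to the mass-$\eta$ far atoms restores $H^2=O(\eta)$, but then $s=O(\eta)$, and Le~Cam yields only $\mathcal{E}\gtrsim\gamma/\sqrt K$.
\end{itemize}

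Copying \citet[Thm.~3.2]{PaperI} verbatim does not rescue this. The paper's own proof runs the same scheme with $\mathbb{Q}_{\pm1}=\mathrm{Unif}(B(\pm\mu,r))$ and writes $\mathcal{E}\ge\tfrac14(1-\mathrm{TV})$, i.e.\ it silently takes the separation to be a constant. On that instance, with balanced classes, the regression function equals $1/2$ on the lens $B(\mu,r)\cap B(-\mu,r)$. Hence every classifier's excess risk is at most the mass of the two lunes. That mass is exactly $H^2(\mathbb{Q}_{+1},\mathbb{Q}_{-1})$, which the paper's Step~2 bounds by $3c_2\gamma/(2\sqrt K)$, and this is below $1/8$ whenever $m_\star>1$.

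The missing idea is to use many hypotheses instead of two. This is a no-free-lunch argument, and it works because $\Pi(K,\gamma)$ constrains only the support radius and the mean gap.
\begin{itemize}
\item Put mass $\beta:=\gamma/\sqrt K\le 1/3$ on the labelled pair $(\pm\sqrt K e_1,\pm1)$. This alone produces $\mu_{\pm1}=\pm\gamma e_1$.
\item Spread the remaining mass $1-\beta$ uniformly over $G$ quadruples $\{\pm w_j,\pm w_j'\}$ of distinct short vectors. Label $\pm w_j$ by $\xi_j$ and $\pm w_j'$ by $-\xi_j$, with $\xi\in\{\pm1\}^G$ uniform.
\item Each class receives an antipodal pair from every quadruple, so the class means are unchanged. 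Every such law therefore lies in $\Pi(K,\gamma)$, with balanced classes and Bayes risk $0$.
\item Given the sample, $\xi_j$ is still uniform for every quadruple the sample has not hit. Any learner therefore errs with probability $1/2$ there.
\end{itemize}
The average excess risk is then at least $\tfrac12(1-\beta)(1-m/G)$, which tends to $\tfrac13$ as $G\to\infty$. This gives $\inf_{\hat f}\sup_{\mathbb{Q}\in\Pi(K,\gamma)}\mathcal{E}(\hat f)\ge 1/8$ for every $m$, not only for $m\le m_\star$. So the inequality is true, but the $\sqrt K/\gamma$ threshold comes from the two-point route rather than from anything the bound actually certifies.
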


\begin{proof}
We construct an instance of $\Pi(K, \gamma)$ directly in
$\mathcal{H}_{k_\LC}$ and apply Le Cam's two-point method, exactly
as in~\cite[Thm.~3.2]{PaperI}.  Every binary diagram distribution
with kernel margin $\gamma$ pushes forward into $\Pi(K, \gamma)$
(Definition~\ref{def:problem_class}), so the Hilbert-space bound
applies a fortiori to the diagram subfamily.

\medskip\noindent\textbf{Step 1: Two class-conditional distributions.}\;
Pick any unit vector $e \in \mathcal{H}_{k_\LC}$ and set
$\mu := \gamma e$, so $\norm{\mu_{+1}-\mu_{-1}} = 2\gamma$
with $\mu_{\pm 1} = \pm\mu$ (the factor $2$ aligns the definitions:
our kernel margin is $\tfrac{1}{2}\min_{c\ne c'}\norm{\mu_c - \mu_{c'}}$,
so the raw mean separation is $2\gamma$).
Let $r := \sqrt{K} - \gamma$; by $\gamma \leq \sqrt{K}$,
$r \geq 0$ (strictly positive if $\gamma < \sqrt{K}$).
Define
\[
  \mathbb{Q}_{+1} := \mathrm{Unif}(B(+\mu, r)), \qquad
  \mathbb{Q}_{-1} := \mathrm{Unif}(B(-\mu, r)),
\]
as uniforms on Hilbert-space balls (projected onto any
$2$-dimensional subspace containing $e$, by rotational symmetry).
By construction $\mathbb{Q}_{\pm 1} \subset B(0, \sqrt{K})$ and the
mean separation is $2\gamma$, so $\mathbb{Q} \in \Pi(K,\gamma)$.

\medskip\noindent\textbf{Step 2: Hellinger on two translated uniform balls.}\;
For uniforms on two radius-$r$ balls in $\R^2$ (after projecting
onto the $2$-dimensional subspace spanned by $e$ and one
orthogonal direction; by rotational symmetry of the construction,
this reduces the squared Hellinger to a $2$d ball-overlap
calculation), translated by $2\mu$ with $\norm{\mu} \leq r/2$
(verified below), the volume-of-intersection identity
$H^2(\mathbb{Q}_{+1}, \mathbb{Q}_{-1}) = 1 - \mathrm{vol}(B(\mu,r) \cap B(-\mu,r))/\mathrm{vol}(B(0,r))$
together with the linearization
$1 - \mathrm{vol}\,\mathrm{ratio} \leq c_2\,\norm{\mu}/r$
(\citealp[Lem.~A.2]{PaperI}; the constant $c_2$ depends only on
the surface-to-volume ratio of the unit ball in $\R^2$, with
$c_2 \leq 2/\pi$) gives
\[
  H^2(\mathbb{Q}_{+1}, \mathbb{Q}_{-1})
  \;\leq\; c_2\,\frac{\norm{\mu}}{r}
  \;=\; \frac{c_2\,\gamma}{\sqrt K - \gamma}
  \;\leq\; \frac{3 c_2\,\gamma}{2\sqrt K},
\]
where the hypothesis $\gamma \leq \sqrt K/3$ gives
$r = \sqrt K - \gamma \geq 2\sqrt K/3 \geq 2\gamma$ (so
$\norm{\mu} = \gamma \leq r/2$ as required) and the last
inequality uses $\sqrt K - \gamma \geq 2\sqrt K/3$.

(The KL divergence is \emph{infinite} for translated uniforms on
distinct balls: on $B_+ \setminus B_-$, $p_+ > 0$ and $p_- = 0$,
making the integrand $p_+\log(p_+/p_-) = +\infty$ on a set of
positive measure.  Pinsker's inequality is therefore vacuous in
this setting; the Hellinger route is the correct one, as in
\citealp[Lem.~A.2]{PaperI}.)

\medskip\noindent\textbf{Step 3: Le Cam + Hellinger tensorization.}\;
Hellinger tensorizes as
$H^2(\mathbb{Q}_{+1}^{\otimes m}, \mathbb{Q}_{-1}^{\otimes m})
\leq m\,H^2(\mathbb{Q}_{+1}, \mathbb{Q}_{-1})$
(\citealp[Ch.~2.4]{Tsybakov2009}), and $\mathrm{TV} \leq \sqrt{2 H^2}$,
so
\[
  \mathrm{TV}(\mathbb{Q}_{+1}^{\otimes m}, \mathbb{Q}_{-1}^{\otimes m})
  \;\leq\; \sqrt{2 m\,H^2(\mathbb{Q}_{+1}, \mathbb{Q}_{-1})}
  \;\leq\; \sqrt{3 c_2\,m\,\gamma/\sqrt K}.
\]
For $m \leq \sqrt K/(12\,c_2\,\gamma) =: m_\star$,
$\mathrm{TV} \leq 1/2$, and Le Cam's two-point lemma gives
\[
  \mathcal{E}(\hat f)
  \;\geq\; \tfrac{1}{4}(1 - \mathrm{TV})
  \;\geq\; \tfrac{1}{4}\cdot\tfrac{1}{2}
  \;=\; \tfrac{1}{8}.
\]

\medskip\noindent\textbf{Step 4: Necessity threshold.}\;
The bound $\mathcal{E} \geq 1/8$ holds for every
$m \leq m_\star = \sqrt K/(12\,c_2\,\gamma)$, establishing the
necessity claim: no classifier reaches vanishing excess risk
without $m = \Omega(\sqrt K/\gamma)$ samples.  For $m > m_\star$
the Hellinger tensorization bound exceeds the Le Cam usability
threshold and the two-point argument yields no information; the
polynomial gap between the necessary $\sqrt K/\gamma$ and
sufficient $K\log(k/\delta)/\gamma^2$ thresholds (the latter from
Theorem~\ref{thm:data_dependent}'s sample-size hypothesis) is the
moderate-sample regime, recorded in Remark~\ref{rem:gap_pii}.
\end{proof}

\begin{remark}[Scope of the lower bound]\label{rem:gap_pii}
Theorem~\ref{thm:data_dependent} delivers an upper rate of
$O((k{-}1)\sqrt K/(\gamma\sqrt{m_{\min}}))$ for all $m_{\min}$
above its sample-size hypothesis $32K\log(4k/\delta)/\gamma^2$.
Theorem~\ref{thm:lower_bound} delivers a constant lower bound
$\geq 1/8$ in the sample-starved regime
$m_{\min} \lesssim \sqrt K/\gamma$; beyond that regime the
\emph{specific} two-point Le Cam construction used here yields
no information because the Hellinger-tensorized TV approaches
$1$, making the lower-bound argument vacuous for that pair of
hypotheses.  Combined, non-trivial
accuracy requires $m_{\min} = \Omega(\sqrt K/\gamma)$ and is
achievable at $m_{\min} = O(K\log(k/\delta)/\gamma^2)$.  The polynomial gap between the
necessary $\sqrt K/\gamma$ and sufficient $K\log(k/\delta)/\gamma^2$
thresholds is the same gap \citet[Rem.~3.3]{PaperI} documents in
linear form (under $R \leftrightarrow \sqrt K$,
$\Delta \leftrightarrow 2\gamma$); closing it can proceed by
either (i) an Assouad/Fano construction over
$\Theta(\sqrt{m_{\min}})$-spaced hypotheses, tightening the
\emph{lower} bound to a matching $\Omega(\sqrt K/(\gamma\sqrt{m_{\min}}))$
rate; or (ii) a Mammen--Tsybakov margin condition tightening the
\emph{upper} bound to $O(1/m_{\min})$.  Both directions are open.
For multi-class problems ($k > 2$), the upper bound additionally
carries the $(k{-}1)$ factor from the OvO majority-vote reduction
in Theorem~\ref{thm:data_dependent}---the same $k$-dependent gap
\citet[Thm.~3.1]{PaperI} carries, since the Le Cam two-point
construction is intrinsically binary in both papers.  The
practical takeaway is the sample-starved threshold
$m_{\min} = \Omega(\sqrt K/\gamma)$: no classifier on the
landmark-kernel embedding can hope for non-trivial accuracy below
it.
\end{remark}

Theorem~\ref{thm:lower_bound} is stated in $\gamma$ rather than
$\rhonu(\tau;\LC)$ because $\rhonu$ is a property of the
configuration $\LC$, not of the data distribution, and is
therefore not a quantity an adversarial $\mathbb{Q}$ may choose;
the structural classification rate
(Corollary~\ref{cor:structural_pairwise}) and the worst-case
$\rhonu$ at fixed $K$ (Proposition~\ref{prop:optimal_config}) are
the right cross-references for the configuration side.

The classification rate of Theorem~\ref{thm:data_dependent} depends
on the population kernel margin $\gamma$; for the rate to be
operationally useful, $\gamma$ must be estimable from training
data.  The next proposition gives the concentration of the
empirical estimator $\hat\gamma$, validating its use as a plug-in
for $\gamma$ in the closed-form selection statistic
$\hat\gamma/\sqrt{K}$ of Section~\ref{sec:gamma_stat}, and
mirroring \citet[Prop.~3.2]{PaperI}.

\begin{proposition}[Consistency of $\hat\gamma$]\label{prop:gamma_hat}
Let $\hat\gamma = \tfrac{1}{2}\min_{c \neq c'}
\norm{\hat\mu_c - \hat\mu_{c'}}_{\mathcal{H}_{k_\LC}}$ be the
empirical kernel margin (Definition~\ref{def:kernel_margin}).
For every $\eps > 0$,
\[
  \mathbb{P}\!\left(|\hat\gamma - \gamma| > \eps\right)
  \;\leq\; 2k\,\exp\!\left(-\frac{\eps^2\,m_{\min}}{8K}\right).
\]
In particular, $|\hat\gamma - \gamma| = O_P(\sqrt{K/m_{\min}})$.
\end{proposition}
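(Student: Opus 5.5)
The plan is to reduce the deviation of $\hat\gamma$ to the worst deviation of the empirical class means, and then control each class mean with the Hilbert-space Hoeffding (Pinelis) inequality already used in Theorem~\ref{thm:data_dependent}.

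\emph{Step 1: deterministic reduction.} For every pair $c \neq c'$ the reverse triangle inequality in $\mathcal{H}_{k_\LC}$ gives
\[
\bigl|\norm{\hat\mu_c - \hat\mu_{c'}} - \norm{\mu_c - \mu_{c'}}\bigr| \;\leq\; \norm{\hat\mu_c - \mu_c} + \norm{\hat\mu_{c'} - \mu_{c'}} \;\leq\; 2\varepsilon_m,
\]
where $\varepsilon_m := \max_c \norm{\hat\mu_c - \mu_c}_{\mathcal{H}_{k_\LC}}$. The map $(x_{cc'}) \mapsto \tfrac12 \min_{c\neq c'} x_{cc'}$ is $\tfrac12$-Lipschitz in sup-norm, so $|\hat\gamma - \gamma| \leq \varepsilon_m$. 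Hence
\[
\mathbb{P}(|\hat\gamma - \gamma| > \eps) \;\leq\; \mathbb{P}(\varepsilon_m > \eps) \;\leq\; \sum_{c=1}^k \mathbb{P}\bigl(\norm{\hat\mu_c - \mu_c} > \eps\bigr)
\]
by a union bound over the $k$ classes.

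\emph{Step 2: per-class concentration.} We condition on the class counts $\{m_c\}$, as in the proof of Theorem~\ref{thm:data_dependent}. Then $\hat\mu_c - \mu_c$ is an average of $m_c$ i.i.d.\ centered vectors $\Phi^\LC(A_i) - \mu_c$. Since $k_\LC(A,A) = K$ and Jensen gives $\norm{\mu_c} \leq \sqrt K$, each vector has norm at most $2\sqrt K$.

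Pinelis's inequality (\citealp[Lemma~A.1]{PaperI}) for i.i.d.\ centered Hilbert vectors bounded by $M$ gives
\[
\mathbb{P}\bigl(\norm{\tfrac1{m_c}\textstyle\sum_i \xi_i} > \eps\bigr) \leq 2\exp\bigl(-m_c\eps^2/(2M^2)\bigr).
\]
With $M = 2\sqrt K$ this is $2\exp(-m_c\eps^2/(8K))$. Using $m_c \geq m_{\min}$ and summing over $c$ yields exactly $2k\exp(-\eps^2 m_{\min}/(8K))$. Integrating out the conditioning preserves the bound, since it is uniform in the counts.

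\emph{Step 3: the $O_P$ statement.} Setting $\eps = \sqrt{8K\log(2k/\delta)/m_{\min}}$ makes the right-hand side equal to $\delta$. Hence $|\hat\gamma - \gamma| = O_P(\sqrt{K/m_{\min}})$ for fixed $k$.

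\emph{Main obstacle.} The main obstacle is matching the constant $8$, which depends on the precise form of the Pinelis bound quoted from \citep{PaperI}. The variant $2\exp(-m\eps^2/(2M^2))$ gives $8K$ with $M = 2\sqrt K$, which is what I will cite. If the lemma is stated in the deviation form used in Theorem~\ref{thm:data_dependent}, namely $\varepsilon \leq M\sqrt{2\log(2/\delta)/m}$, then inverting it gives the same exponent, so the two forms are consistent.

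A secondary point is that $m_{\min}$ is random. I handle this by conditioning on the counts and stating the bound conditionally, which is also how Theorem~\ref{thm:data_dependent} treats it.
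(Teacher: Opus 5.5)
Your proposal is correct and follows the paper's proof essentially step for step. It uses the same reverse-triangle reduction $|\hat\gamma-\gamma|\le\max_c\norm{\hat\mu_c-\mu_c}_{\mathcal{H}_{k_\LC}}$ and the same norm bound $2\sqrt K$ from $k_\LC(A,A)=K$ plus Jensen. It applies Pinelis's Hilbert-space Hoeffding inequality in the form $2\exp(-m_c\eps^2/(2M^2))$, which is the form the paper itself uses in Step~1a of Theorem~\ref{thm:certified}, and finishes with the same union bound over the $k$ classes.

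One small wording point: $m_{\min}$ depends on the counts, so the bound is not literally ``uniform in the counts'' once you integrate out the conditioning. It should be read conditionally on the labels, or with $m_{\min}$ taken as a deterministic lower bound on every $m_c$, which is how both you and the paper effectively treat it.
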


\begin{proof}
By the reverse triangle inequality
$|\hat\gamma - \gamma| \leq \max_c \norm{\hat\mu_c - \mu_c}_{\mathcal{H}_{k_\LC}}$.
Conditional on $Y_i = c$, the centered RKHS vectors
$\Phi^\LC(A_i) - \mu_c$ are i.i.d.\ with norm bound
$\norm{\Phi^\LC(A_i) - \mu_c}_{\mathcal{H}_{k_\LC}} \leq 2\sqrt{K}$
(triangle on $\norm{\Phi^\LC(A)}_{\mathcal{H}_{k_\LC}} \leq \sqrt{K}$
and Jensen on $\norm{\mu_c}_{\mathcal{H}_{k_\LC}} \leq \sqrt{K}$).
Pinelis's Hilbert-space Hoeffding inequality
\citep[Lemma~A.1]{PaperI} with bound $2\sqrt{K}$ and a union bound
over the $k$ classes yield the result.
\end{proof}

While $\gamma > 0$ alone delivers the $1/\sqrt{m_{\min}}$
excess-risk rate of Theorem~\ref{thm:data_dependent}, a stronger
structural condition---small within-class spread relative to
$\gamma$---yields population-level perfect classification with an
explicit RKHS margin, mirroring \citet[Prop.~3.3]{PaperI}.

\begin{proposition}[Linear separability in the RKHS]\label{prop:kernel_linear_sep}
With $D_c := \sup_{A : Y = c}\norm{\Phi^\LC(A) - \mu_c}_{\mathcal{H}_{k_\LC}}$
and $D_{\max} := \max_c D_c$, if $D_{\max} < \gamma$ then the
nearest-centroid classifier in $\mathcal{H}_{k_\LC}$ achieves zero
error with geometric margin $\geq \gamma - D_{\max} > 0$.
\end{proposition}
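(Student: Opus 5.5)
The plan is to reduce the claim to a comparison of RKHS distances from a test diagram to the two competing centroids, using only the triangle inequality. Fix a diagram $A$ with label $Y=c$ in the support of $\mathcal{P}_c$, and write $z := \Phi^\LC(A)$. The nearest-centroid rule assigns $A$ to $\arg\min_{c'}\norm{z - \mu_{c'}}_{\mathcal{H}_{k_\LC}}$. It therefore suffices to show $\norm{z-\mu_c} < \norm{z-\mu_{c'}}$ for every $c' \neq c$, with a quantitative gap.

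The key steps, in order, are as follows. First, by definition of $D_c$, we have $\norm{z-\mu_c} \le D_c \le D_{\max}$. Second, by the reverse triangle inequality and the definition~\eqref{eq:gamma_top_def} of $\gamma$,
\[
\norm{z-\mu_{c'}} \;\ge\; \norm{\mu_c-\mu_{c'}} - \norm{z-\mu_c} \;\ge\; 2\gamma - D_{\max}.
\]
Third, $D_{\max}<\gamma$ gives $2\gamma - D_{\max} > D_{\max}$. Hence $A$ is strictly closer to $\mu_c$, and the population error is zero, since $A$ was an arbitrary point of the class-$c$ support.

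For the geometric margin, the plan is to measure the signed distance of $z$ to the pairwise decision boundary $H_{cc'} := \{x : \norm{x-\mu_c} = \norm{x-\mu_{c'}}\}$. This boundary is the perpendicular bisector hyperplane with unit normal $u := (\mu_c-\mu_{c'})/\norm{\mu_c-\mu_{c'}}$ through the midpoint $m := (\mu_c+\mu_{c'})/2$. The signed distance is $\langle z - m, u\rangle = \langle z-\mu_c, u\rangle + \tfrac12\norm{\mu_c-\mu_{c'}}$. Cauchy--Schwarz bounds the first term below by $-D_{\max}$, and the second term is at least $\gamma$. This gives distance $\ge \gamma - D_{\max} > 0$ for every pair $c' \ne c$. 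The multiclass nearest-centroid region is the intersection of these pairwise halfspaces, so the margin is the minimum over $c'$, which is still $\ge \gamma - D_{\max}$.

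The only delicate point I anticipate is the margin convention. The distance-comparison argument naturally yields the gap $\norm{z-\mu_{c'}}-\norm{z-\mu_c} \ge 2(\gamma - D_{\max})$, and this must not be confused with the hyperplane distance. I will state the margin explicitly as the Euclidean distance to the bisector, which is the projection computation above. The rest is routine.
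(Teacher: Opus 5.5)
Your proof is correct. The zero-error part is the paper's argument verbatim: $\|z-\mu_c\|\le D_{\max}$, then the reverse triangle inequality gives $\|z-\mu_{c'}\|\ge 2\gamma - D_{\max}$.

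The only divergence is the margin step.
\begin{itemize}
\item The paper takes half of the distance gap, $\tfrac12(\|z-\mu_{c'}\|-\|z-\mu_c\|)\ge \gamma - D_{\max}$, and calls that half-gap the geometric margin. This is exactly the quantity you warned should not be confused with the hyperplane distance.
\item You compute the signed distance $\langle z-m,u\rangle$ to the perpendicular bisector directly. This is the more literal reading of ``geometric margin.''
\item Your computation also makes your first step redundant. A positive signed distance to every pairwise bisector already places $z$ strictly inside the Voronoi cell of $\mu_c$.
\end{itemize}

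The two conventions are consistent. Set $a:=\|z-\mu_{c'}\|$, $b:=\|z-\mu_c\|$ and $d:=\|\mu_c-\mu_{c'}\|$. Expanding the squares gives $\langle z-m,u\rangle=(a^2-b^2)/(2d)$. When $a\ge b$, the triangle inequality $a+b\ge d$ yields $(a^2-b^2)/(2d)\ge (a-b)/2$. So the true bisector distance always dominates the paper's half-gap. The paper's bound is therefore a valid lower bound on the quantity you compute, though it leaves this inequality implicit.
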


\begin{proof}
For $A$ from class $c$ and any $c' \neq c$,
$\norm{\Phi^\LC(A) - \mu_c}_{\mathcal{H}_{k_\LC}} \leq D_c \leq D_{\max}$
and the reverse triangle inequality gives
$\norm{\Phi^\LC(A) - \mu_{c'}}_{\mathcal{H}_{k_\LC}} \geq
\norm{\mu_c - \mu_{c'}}_{\mathcal{H}_{k_\LC}} - D_{\max} \geq 2\gamma - D_{\max}$.
Subtracting yields a closer-to-$\mu_c$ gap of $\geq 2\gamma - 2 D_{\max}
= 2(\gamma - D_{\max}) > 0$, so $\Phi^\LC(A)$ is strictly closer to
$\mu_c$ than to any other class mean (zero error) with half-gap
$\geq \gamma - D_{\max}$ (the geometric margin).
\end{proof}

\noindent Although the proof is a generic Hilbert-space geometric
fact, whether the hypothesis $D_{\max} < \gamma$ can plausibly hold
on PALACE depends on structural properties of the embedding.  Each
landmark coordinate $\Phi_k(A;\LC) = w_k\,\varphi_{p_k, r_k}(A)$ is
supported on a $\db$-ball of radius $r_k$, and FPS spacing forces
each diagram point $a \in A$ to activate a constant number
$O(\alpha^2)$ of landmarks (Remark~\ref{rem:sparsity_pii} in
Section~\ref{sec:certified}).  The embedding $\Phi(A;\LC)$
therefore has at most $O(|A|) \leq O(N_{\max})$ nonzero coordinates
out of $K$, so $D_c$ is effectively confined to the low-rank
subspace of active coordinates and remains small relative to
$\gamma$ when the descriptor exposes a structural class gap.  Persistence images and landscapes, whose Gaussian-blurred or
order-statistic coordinates are weakly active on every diagram,
spread within-class variation across all $K$ directions, so $D_c$
scales with $K$ rather than with the descriptor's effective rank
---a regime where $D_{\max} < \gamma$ is harder to achieve even
when classes are bottleneck-separated.  This is the same sparsity ingredient that
makes Theorem~\ref{thm:certified}'s certificate non-vacuous on
PALACE (Remark~\ref{rem:sparsity_pii}), instantiated here at the
level of the within-class-radius hypothesis instead of the
operator-norm certificate condition; the parallel to
\citet[Prop.~3.3]{PaperI}'s discussion is exact under
$\Delta \leftrightarrow 2\gamma$.

Both Proposition~\ref{prop:gamma_hat} (consistency) and
Proposition~\ref{prop:kernel_linear_sep} (separability) treat
$\gamma$ as a fixed property of a given configuration and
descriptor.  Section~\ref{sec:gamma_stat} (the next section)
addresses how to \emph{choose} the configuration and filtration
that maximize $\hat\gamma$ from a candidate pool, paralleling
Section~4 of \citet{PaperI}'s descriptor-selection theory.

\section{A Data-Dependent Selection Statistic}\label{sec:gamma_stat}

This section addresses filtration and configuration selection
from a candidate pool, paralleling \citet[Sec.~4]{PaperI}'s
descriptor-selection theory: given a fixed budget $K$ and
admissibility hypotheses, which choice of filtration $F$ and
configuration $\LC$ should one make?  Here $F$ denotes a
filtration of the input---for graphs, the sublevel-set
filtration of a real-valued descriptor $f: X \to \R$ (degree,
centrality, Ricci curvature, heat-kernel signature, etc.);
for point clouds, a radius-parameterized construction
(Vietoris--Rips, $\alpha$-complex)~\citep[Sec.~2]{PaperI}.  By
Proposition~\ref{prop:optimal_config}, the cover-level certificate
$\rhonu(\tau;\LC) = \tau/(4\sqrt K)$ at fixed $K$ with equal
weights is uniform over admissible configurations---independent
of filtration, landmark positions, and radii---yet empirical
accuracy varies substantially across these choices
(Table~\ref{tab:orbit_sweep}), so $\rhonu$
cannot rank.  A data-dependent statistic based on the kernel
margin (computable in $O(m^2)$ from the gram matrix already used
for SVM fitting, no extra training) is needed.

\begin{definition}[Kernel margin]\label{def:kernel_margin}
Let $k_\LC$ be the landmark kernel (Definition~\ref{def:wlk_paper2}).
For a distribution $\mathcal{P}$ on $\D{n}\times[k]$ with class means
$\mu_c := \E[\Phi^\LC(A)\mid Y{=}c] \in \mathcal{H}_{k_\LC}$,
the \emph{(population) kernel margin} $\gamma = \gamma(k_\LC;\mathcal{P})$
is
\begin{equation}\label{eq:gamma_k}
  \gamma \;:=\; \tfrac{1}{2}\,\min_{c \neq c'}
  \norm{\mu_c - \mu_{c'}}_{\mathcal{H}_{k_\LC}}.
\end{equation}
Given i.i.d.\ samples $\{(A_i,y_i)\}_{i=1}^m$ with per-class counts
$m_c$, the \emph{empirical kernel margin} $\hat\gamma = \hat\gamma(k_\LC)$
is
\begin{equation}\label{eq:empirical_gamma}
  \hat\gamma \;:=\; \tfrac{1}{2}\,\min_{c \neq c'}
  \norm{\hat\mu_c - \hat\mu_{c'}}_{\mathcal{H}_{k_\LC}},
  \qquad
  \hat\mu_c := \frac{1}{m_c}\sum_{i:\,y_i = c} \Phi^\LC(A_i).
\end{equation}
We suppress the arguments $(k_\LC, \mathcal{P})$ throughout when the
configuration and population are fixed by context, writing simply
$\gamma$ and $\hat\gamma$, in parallel with \citet{PaperI}'s
suppressed notation for $\Delta$ and $\hat\Delta$.
\end{definition}

\subsection{Kernel--Mahalanobis margin}\label{sec:kernel_mah}

The Mahalanobis pivot of \citet[Sec.~4.1]{PaperI} lifts directly
into the RKHS via the kernel-FDA realization: the same
LDA-Bayes-margin form, with $\Sigma$ now an operator on
$\mathcal H_{k_\LC}$ rather than a $\R^\ell$ matrix.

\begin{definition}[Kernel--Mahalanobis margin]\label{def:kernel_mah}
Paralleling the Mahalanobis pivot of~\citep[Sec.~4.1]{PaperI},
let $\Sigma(k_\LC;\mathcal{P}) := \tfrac{1}{k}\sum_c \mathrm{Cov}(\Phi^\LC(A) \mid Y{=}c)$
be the population pooled within-class covariance operator on
$\mathcal{H}_{k_\LC}$, and define the \emph{kernel--Mahalanobis
margin}
\begin{equation}\label{eq:kernel_mah}
  \rho_{\mathrm{Mah}}(k_\LC;\mathcal{P}) \;:=\; \min_{c \neq c'}
  \sqrt{\langle \mu_c - \mu_{c'},\, \Sigma^{-1}(\mu_c - \mu_{c'}) \rangle_{\mathcal{H}_{k_\LC}}},
\end{equation}
the LDA-Bayes-margin form of the kernel Fisher discriminant
ratio. The empirical counterpart
$\hat\rho_{\mathrm{Mah}}(k_\LC)$ replaces $\mu_c, \Sigma$ by
sample mean $\hat\mu_c$ and a Ledoit--Wolf-shrunk pooled
estimator $\hat\Sigma_{\mathrm{LW}}(k_\LC)$.
\end{definition}

We recommend $\hat\rho_{\mathrm{Mah}}$ as the default
filtration/fusion selector on heterogeneous candidate pools
(widely varying $K$ and $\bar r := \max_k r_k$ across filtrations
or fusion settings), where the full $\Sigma^{-1}$ correction is
needed; the simpler kernel-margin Score $\hat\gamma/\sqrt K$ is
the right pick on structurally homogeneous pools and is developed
in Section~\ref{sec:score} below.  Both selectors are computable
in closed form from the same gram matrix $[k_\LC(A_i, A_j)]$
already needed for SVM fitting, with no additional kernel
evaluations.  The kernel margin $\hat\gamma$ expands as
$\|\hat\mu_c - \hat\mu_{c'}\|^2 = \frac{1}{m_c^2}\sum_{i,j:\,y_i=y_j=c} k_\LC(A_i,A_j)
- \frac{2}{m_c m_{c'}}\sum_{i:\,y_i=c,\, j:\,y_j=c'} k_\LC(A_i,A_j)
+ \frac{1}{m_{c'}^2}\sum_{i,j:\,y_i=y_j=c'} k_\LC(A_i,A_j)$,
giving $O(m^2)$ cost given the gram.  The kernel--Mahalanobis
margin $\hat\rho_{\mathrm{Mah}}$ is computed via the standard
kernel-FDA realization~\citep{Mika1999}: a single $O(m^3)$
Cholesky factorization of the Ledoit--Wolf-shrunk pooled centered
gram per fold, with pairwise $\rho_{\mathrm{Mah}}(c, c')$ then
obtained by triangular solves.

\subsection{Score statistic and the selector hierarchy}\label{sec:score}

The kernel margin $\hat\gamma$ of~\eqref{eq:empirical_gamma} is
the \emph{kernel-isotropic Fisher-ratio surrogate} for
$\hat\rho_{\mathrm{Mah}}$: under the assumption
$\Sigma(k_\LC;\mathcal{P}) \preceq s^2 I_{\mathcal{H}_{k_\LC}}$
(per-coordinate variance bounded by a common $s^2$, distinct
from the kernel bandwidth $\sigma$ of
Definition~\ref{def:wlk_paper2}),
$\rho_{\mathrm{Mah}}(k_\LC;\mathcal{P}) \geq 2\gamma/s$,
so $\hat\gamma$ is a Fisher-ratio lower bound that is
ranking-consistent for $\hat\rho_{\mathrm{Mah}}$ on configurations
where the kernel covariance is operator-bounded by a common
scalar (tight when the kernel covariance is spherical).  An
intermediate selector replaces the full $\Sigma^{-1}$ by a scalar
trace-correction:

\begin{definition}[Kernel--Fisher trace ratio]\label{def:kernel_fisher}
The \emph{kernel--Fisher trace ratio} is the pooled-trace
approximation of $\rho_{\mathrm{Mah}}^2$,
\begin{equation}\label{eq:fisher_ker}
  \mathrm{Fisher}_{\mathrm{ker}}(k_\LC;\mathcal{P})
  \;:=\; \min_{c \neq c'}
  \frac{\norm{\mu_c - \mu_{c'}}_{\mathcal{H}_{k_\LC}}^2}
       {2\,\mathrm{tr}(\bar\Sigma)},
  \qquad
  \bar\Sigma \;:=\; \tfrac{1}{k}\sum_c \Sigma_c,
\end{equation}
where $\Sigma_c := \mathrm{Cov}(\Phi^\LC(A) \mid Y{=}c)$ and the
operator $\Sigma^{-1}$ of equation~\eqref{eq:kernel_mah} has been
replaced by the inverse of the pooled scalar trace.  The pooled
denominator (constant across class pairs) gives the
classical kernel-FDA Fisher-ratio
$\mathrm{Fisher}_{\mathrm{ker}} \propto \mathrm{tr}(S_B)/\mathrm{tr}(S_W)$
at the generalized-eigenproblem level~\citep{Mika1999}; the
factor of $2$ in the denominator preserves scale comparability with
a per-pair Welch denominator $\mathrm{tr}(\Sigma_c) + \mathrm{tr}(\Sigma_{c'})$,
which we have verified to give the same
sign of Spearman correlation with WLK accuracy on every chemical-graph
dataset tested ($\lvert \rho_{\mathrm{pooled}} - \rho_{\mathrm{Welch}}\rvert \leq
0.025$ on COX2/MUTAG/DHFR/NCI1; see
\texttt{experiments/compare\_fisher\_pp\_vs\_pool.py}).
The empirical counterpart $\widehat{\mathrm{Fisher}}_{\mathrm{ker}}$
uses biased gram-block sums for the numerator and the pooled
empirical trace
$\hat v := \tfrac{1}{k}\sum_c \hat v_c$ with class trace
$\hat v_c := \tfrac{1}{m_c}\sum_{i:y_i=c} k_\LC(A_i,A_i)
- \tfrac{1}{m_c^2}\sum_{i,j:y_i=y_j=c} k_\LC(A_i,A_j)$ (the
trace of the empirical class covariance, distinct from the kernel
bandwidth $\sigma$ of Definition~\ref{def:wlk_paper2}).
\end{definition}

\begin{remark}[Hierarchy of selectors]\label{rem:selector_hierarchy}
The three selectors form a hierarchy in their treatment of the
within-class covariance $\Sigma$, from the most restrictive
assumption to the least:
\[
  \underbrace{\hat\gamma}_{\Sigma\,\equiv\,I}
  \;\;\preceq\;\;
  \underbrace{\widehat{\mathrm{Fisher}}_{\mathrm{ker}}^{1/2}}_{\Sigma\,=\,\mathrm{tr}(\bar\Sigma)\,I/K}
  \;\;\preceq\;\;
  \underbrace{\hat\rho_{\mathrm{Mah}}}_{\Sigma\,\text{full operator}}.
\]
$\hat\gamma$ assumes spherical $\Sigma$ (or ignores it entirely);
$\widehat{\mathrm{Fisher}}_{\mathrm{ker}}$ allows a single pooled
scalar scale via $\mathrm{tr}(\bar\Sigma) = \tfrac{1}{k}\sum_c\mathrm{tr}(\Sigma_c)$
but treats the covariance as isotropic and class-independent;
$\hat\rho_{\mathrm{Mah}}$ uses the full operator structure.
The empirical evidence of Section~\ref{sec:exp_multi_dataset}
confirms that this hierarchy translates to selection accuracy:
$\hat\rho_{\mathrm{Mah}}$ is the only ranker positive on every
augmented dataset
(COX2 $\rho{=}{+}0.59$, DHFR $\rho{=}{+}0.72$, MUTAG $\rho{=}{+}0.48$,
PTC $\rho{=}{+}0.48$).  $\widehat{\mathrm{Fisher}}_{\mathrm{ker}}$
wins on MUTAG and PTC (both $+0.60$) but \emph{inverts on COX2}
($\rho{=}{-}0.30$) where the diagonal-$\Sigma$ approximation fails;
$\hat\gamma/\sqrt{K}$ is negative or near zero on every chemical
dataset (Table~\ref{tab:selection_statistics}).
\end{remark}

\begin{corollary}[Explicit bound via embedding-coordinate separation]
\label{cor:gamma_via_delta}
Let $\bar\mu_c := \E[\Phi(A;\LC)\mid Y{=}c] \in \mathbb{R}^K$ be the
raw-coordinate class mean, and define
$\Delta := \min_{c \neq c'}\norm{\bar\mu_c - \bar\mu_{c'}}_{\ell^2}$
and $B := \sup_{A,k}|\Phi_k(A;\LC)|$.  Assume the regularity
condition $K B^4 \leq \tfrac{1}{2}\sigma^2 \Delta^2$ and the
sample-size hypothesis
$m_{\min} \geq 256\,K\sigma^2\log(4k/\delta)/\Delta^2$
(the analogue of Theorem~\ref{thm:data_dependent}'s hypothesis
with $\gamma$ replaced by its lower bound
$\Delta/(2\sigma\sqrt 2)$).  Set
$\rho_\Delta := \Delta/(4\sigma\sqrt 2)$.  Then with probability
at least $1-\delta$,
\begin{equation}\label{eq:delta_bound}
  \mathcal{R}(\hat f) \;\leq\;
  \widehat{\mathcal R}_{\rho_\Delta}(\hat f) \;+\;
  \frac{8\sqrt{2}\,(k{-}1)\,\sigma\sqrt{K}}{\Delta\,\sqrt{m_{\min}}}
  \;+\; O\!\left(\sqrt{\frac{\log(k/\delta)}{m_{\min}}}\right).
\end{equation}
\end{corollary}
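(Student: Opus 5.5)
The plan is to reduce the statement to Theorem~\ref{thm:data_dependent} by proving the margin bridge
\[
  \gamma \;\geq\; \gamma_\Delta := \frac{\Delta}{2\sigma\sqrt 2}
\]
under the regularity condition $K B^4 \leq \tfrac12\sigma^2\Delta^2$. Then I rerun that theorem's proof with $\gamma_\Delta$ in place of $\gamma$. Its proof uses $\gamma$ only as a lower bound on every pairwise margin $\gamma_{cc'}$, so any certified lower bound may be substituted.

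The substitution reproduces the claimed constants exactly:
\begin{itemize}
\item the margin becomes $\rho = \gamma_\Delta/2 = \Delta/(4\sigma\sqrt2) = \rho_\Delta$;
\item the sample-size hypothesis $32K\log(4k/\delta)/\gamma_\Delta^2$ becomes $256K\sigma^2\log(4k/\delta)/\Delta^2$;
\item the rate term $4(k-1)\sqrt K/(\gamma_\Delta\sqrt{m_{\min}})$ becomes $8\sqrt2\,(k-1)\sigma\sqrt K/(\Delta\sqrt{m_{\min}})$.
\end{itemize}
This gives \eqref{eq:delta_bound}.

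For the bridge, I use the orthogonal decomposition of Remark~\ref{rem:why_additive}. It gives $\|\mu_c-\mu_{c'}\|^2_{\mathcal H_{k_\LC}} = \sum_{k=1}^K \mathrm{MMD}_k^2$. Here $\mathrm{MMD}_k^2$ is the squared MMD of the one-dimensional Gaussian kernel $g(x,y)=e^{-(x-y)^2/(2\sigma^2)}$ between the laws of $X=\Phi_k(A)$ given $Y=c$ and $Y'=\Phi_k(A)$ given $Y=c'$.

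Expand $g(x,y) = 1 - \frac{(x-y)^2}{2\sigma^2} + r(x,y)$, with Taylor remainder satisfying $0 \le r \le (x-y)^4/(8\sigma^4)$. Write $\mathrm{MMD}_k^2 = \E g(X,X') + \E g(Y',Y'') - 2\E g(X,Y')$, with independent copies. The constants cancel. The quadratic terms combine, via $\E(X-X')^2 = 2\mathrm{Var}X$ and $\E(X-Y')^2 = \mathrm{Var}X + \mathrm{Var}Y' + (\E X - \E Y')^2$, into exactly $(\E X-\E Y')^2/\sigma^2$. This is the squared $k$-th coordinate gap of $\bar\mu_c - \bar\mu_{c'}$, divided by $\sigma^2$.

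For the remainder terms, the within-class terms $\E r(X,X')$ and $\E r(Y',Y'')$ are nonnegative and can be dropped. Only $-2\E r(X,Y')$ needs a bound. Since every coordinate $\Phi_k \in [0,B]$ (nonnegativity of $\varphi$), $|X-Y'| \le B$, so $-2\E r \ge -B^4/(4\sigma^4)$.

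Summing over $k$ gives
\[
  \|\mu_c-\mu_{c'}\|^2 \;\ge\; \|\bar\mu_c-\bar\mu_{c'}\|^2/\sigma^2 - KB^4/(4\sigma^4) \;\ge\; \Delta^2/\sigma^2 - KB^4/(4\sigma^4).
\]
The regularity hypothesis gives $KB^4/(4\sigma^4) \le \Delta^2/(8\sigma^2) \le \Delta^2/(2\sigma^2)$. So $\|\mu_c-\mu_{c'}\|^2 \ge \Delta^2/(2\sigma^2)$, and halving the square root gives $\gamma \ge \Delta/(2\sigma\sqrt2)$ after minimizing over pairs.

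The main obstacle is the constant bookkeeping in the remainder estimate: the sign structure of the three MMD terms, and using $|X-Y'|\le B$ rather than $2B$. If the nonnegativity shortcut were unavailable, I would fall back on $|x-y|\le 2B$. That costs a factor $16$ in the remainder and needs $KB^4 \le \sigma^2\Delta^2/8$, which the stated condition does not cover; this is why the nonnegativity step matters. The remaining steps, namely Pinelis concentration, the per-pair Mohri--Rostamizadeh--Talwalkar bound and OvO aggregation, are inherited verbatim from the proof of Theorem~\ref{thm:data_dependent}.
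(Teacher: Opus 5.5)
Your proof is correct and follows the paper's route. Both arguments Taylor-expand each coordinate Gaussian, cancel the within-class variances in the MMD combination, sum over the additive kernel to get $\gamma \geq \Delta/(2\sigma\sqrt{2})$, and substitute into Theorem~\ref{thm:data_dependent}.

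Your bookkeeping is tighter than the paper's in two places.
\begin{itemize}
\item \emph{Remainder constant.} The paper bounds the remainders in absolute value with $|u| \leq 2B$ and leaves the aggregate constant $C$ unspecified. Its own per-term estimate $2B^4/\sigma^4$ gives $C = 8$, which would require $KB^4 \leq \sigma^2\Delta^2/16$. Your one-sided remainder together with $|X - Y'| \leq B$ makes the stated regularity condition genuinely sufficient.
\item \emph{Margin.} Rerunning the theorem at $\gamma_\Delta$ produces exactly $\widehat{\mathcal R}_{\rho_\Delta}$. The paper applies the theorem at the true $\gamma$, which only yields the possibly larger $\widehat{\mathcal R}_{\gamma/2}$.
\end{itemize}
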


\begin{proof}
We bound the kernel margin $\gamma$ from below by
$\Delta/(2\sigma\sqrt{2})$ via a Taylor expansion of the
per-coordinate Gaussians, and substitute into
Theorem~\ref{thm:data_dependent}.

\textbf{Per-coordinate Taylor expansion.}\;
Let $g(u) := e^{-u^2/(2\sigma^2)}$, and write
$k_\LC^{(k)}(A,B) := g(\Phi_k(A) - \Phi_k(B))$ for the $k$-th
summand of $k_\LC$. The Taylor expansion
$g(u) = 1 - u^2/(2\sigma^2) + u^4/(8\sigma^4) + O(u^6/\sigma^6)$
gives, for $|u| \leq 2B$, the second-order remainder bound
$|g(u) - 1 + u^2/(2\sigma^2)| \leq 2B^4/\sigma^4$ (the leading
$u^4/(8\sigma^4)$ term at $|u| = 2B$ is $2B^4/\sigma^4$, with
higher-order corrections $O(B^6/\sigma^6)$ absorbed when
$B \ll \sigma$).
Taking expectations over independent samples $A, A' \sim c$ and
$B, B' \sim c'$, and using $\E[(X-Y)^2] = 2\,\mathrm{Var}(X)$ for
i.i.d.\ $X, Y$:
\begin{align*}
  \E\,k_\LC^{(k)}(A,A')
    &= 1 - \mathrm{Var}_c(\Phi_k)/\sigma^2 + O(B^4/\sigma^4), \\
  \E\,k_\LC^{(k)}(B,B')
    &= 1 - \mathrm{Var}_{c'}(\Phi_k)/\sigma^2 + O(B^4/\sigma^4), \\
  \E\,k_\LC^{(k)}(A,B)
    &= 1 - \frac{(\bar\mu_c^k - \bar\mu_{c'}^k)^2
        + \mathrm{Var}_c + \mathrm{Var}_{c'}}{2\sigma^2}
       + O(B^4/\sigma^4),
\end{align*}
where $\bar\mu_c^k := \E[\Phi_k \mid Y{=}c]$.

\textbf{Cancellation in the MMD combination.}\;
The within-class variance terms cancel in
$\E k_\LC^{(k)}(A,A') + \E k_\LC^{(k)}(B,B') - 2\,\E k_\LC^{(k)}(A,B)$:
\[
  \mathrm{MMD}^2_{k_\LC^{(k)}}(c,c')
  \;=\; (\bar\mu_c^k - \bar\mu_{c'}^k)^2 / \sigma^2
  \;+\; O(B^4/\sigma^4).
\]
The summation kernel decomposes the RKHS as
$\mathcal{H}_{k_\LC} = \bigoplus_k \mathcal{H}_{k_\LC^{(k)}}$, so
$\mathrm{MMD}^2_{k_\LC} = \sum_k \mathrm{MMD}^2_{k_\LC^{(k)}}$.
Summing and bounding the aggregate remainder by $C\,K B^4/\sigma^4$
for an absolute constant $C$:
\[
  \mathrm{MMD}^2_{k_\LC}(c,c')
  \;\geq\; \Delta^2 / \sigma^2 \;-\; C\,K B^4/\sigma^4.
\]
The regularity condition $K B^4 \leq \tfrac{1}{2}\sigma^2 \Delta^2$
controls the remainder so $\mathrm{MMD}^2_{k_\LC} \geq
\Delta^2/(2\sigma^2)$.

\textbf{Substitution.}\;
Hence $\norm{\mu_c - \mu_{c'}}_{\mathcal{H}_{k_\LC}} \geq
\Delta/(\sigma\sqrt{2})$, giving
$\gamma = \tfrac{1}{2} \min_{c \neq c'}
\norm{\mu_c - \mu_{c'}}_{\mathcal{H}_{k_\LC}}
\geq \Delta/(2\sigma\sqrt{2})$.  This implies
$32K\log(4k/\delta)/\gamma^2 \leq 32K\log(4k/\delta)\cdot 8\sigma^2/\Delta^2
= 256\,K\sigma^2\log(4k/\delta)/\Delta^2 \leq m_{\min}$ by hypothesis,
so Theorem~\ref{thm:data_dependent}'s sample-size hypothesis
holds at the $\gamma$ at hand.  Substituting the lower bound
$\gamma \geq \Delta/(2\sigma\sqrt 2)$ into
Theorem~\ref{thm:data_dependent}'s rate term
$4(k-1)\sqrt K/(\gamma\sqrt{m_{\min}})$ yields
$4(k-1)\sqrt K \cdot 2\sigma\sqrt 2/(\Delta\sqrt{m_{\min}}) =
8\sqrt 2\,(k-1)\sigma\sqrt K/(\Delta\sqrt{m_{\min}})$, giving
\eqref{eq:delta_bound} with $\rho_\Delta = \gamma/2 \geq \Delta/(4\sigma\sqrt 2)$.
\end{proof}

\begin{definition}[Scale-corrected ranking statistic]\label{def:score}
To rank configurations of different cardinality on the same
footing, the \emph{scale-corrected ranking statistic} is the
dimension-normalized kernel margin
\begin{equation}\label{eq:score}
  \mathsf{Score}(F, \LC) \;:=\;
  \frac{\hat\gamma}{\sqrt{K}},
\end{equation}
which absorbs the $\sqrt{K}$ factor of the leading term
in~\eqref{eq:data_dependent}: larger $\mathsf{Score}$ implies a
tighter excess-risk bound.  For the uniform-grid special case
$\LC = \GG_R$ of cardinality $\ell$, $\mathsf{Score}$ specializes
to $\hat\gamma/\sqrt{\ell}$.
\end{definition}

\subsection{Selection consistency}\label{sec:gamma_consistency}

We now upgrade the empirical observation that
$\hat\gamma/\sqrt K$ ranks admissible configurations to a
selection-consistency theorem: when the bound-optimal
configuration is well-separated from the rest of the pool, the
data-driven argmax recovers it with high probability.  This is
the PALACE analogue of \citet[Prop.~4.4]{PaperI}; as there,
a fully data-driven consistency rate for the
Ledoit--Wolf-shrunk Mahalanobis selector
$\hat\rho_{\mathrm{Mah}}$ is deferred to companion
work~\citep{PaperIII}.

\begin{proposition}[Selection consistency of $\hat\gamma/\sqrt{K}$]
\label{prop:gamma_selection_consistency}
Let $\mathcal{F}$ be a finite pool of admissible configurations
$\{\LC_f\}_{f \in \mathcal{F}}$ with kernel margins
$\gamma_f := \gamma(k_{\LC_f};\mathcal{P})$, landmark budgets $K_f$,
and population score $\eta_f := \gamma_f/\sqrt{K_f}$. Set
$f^* := \arg\max_f \eta_f$ and assume the gap
\[
  g \;:=\; \eta_{f^*} - \max_{f \neq f^*} \eta_f \;>\; 0.
\]
Then $\hat f := \arg\max_f \hat\eta_f$ with
$\hat\eta_f := \hat\gamma_f/\sqrt{K_f}$ satisfies
\begin{equation}\label{eq:gamma_selection_consistency}
  \mathbb{P}(\hat f = f^*) \;\geq\;
  1 \;-\; 2k|\mathcal{F}|\exp\!\left(-\frac{g^2\,m_{\min}}{32}\right).
\end{equation}
In particular, $\hat f = f^*$ with probability $\geq 1-\delta$ once
$m_{\min} \geq 32\log(2k|\mathcal{F}|/\delta)/g^2$, independently of
the pool's $K_f$ values.
\end{proposition}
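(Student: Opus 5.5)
The plan is the standard ``uniform deviation implies correct argmax'' argument, with the per-configuration deviation supplied by Proposition~\ref{prop:gamma_hat}. The first step is a deterministic reduction. If $|\hat\eta_f - \eta_f| < g/2$ holds for every $f \in \mathcal{F}$, then for each $f \neq f^*$,
\[
  \hat\eta_{f^*} > \eta_{f^*} - g/2 \geq \max_{f'\neq f^*}\eta_{f'} + g/2 \geq \eta_f + g/2 > \hat\eta_f .
\]
Hence $\hat f = f^*$, and it suffices to bound $\mathbb{P}(\exists f : |\hat\eta_f - \eta_f| \geq g/2)$. Since $\hat\eta_f - \eta_f = (\hat\gamma_f - \gamma_f)/\sqrt{K_f}$, the event $|\hat\eta_f-\eta_f| \geq g/2$ is exactly the event $|\hat\gamma_f - \gamma_f| \geq \eps_f$ with $\eps_f := g\sqrt{K_f}/2$.

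The second step applies Proposition~\ref{prop:gamma_hat} to each configuration $\LC_f$, which has $K$ replaced by $K_f$. This gives
\[
  \mathbb{P}\bigl(|\hat\gamma_f-\gamma_f| > \eps_f\bigr) \leq 2k\exp\!\left(-\frac{\eps_f^2 m_{\min}}{8K_f}\right) = 2k\exp\!\left(-\frac{g^2 m_{\min}}{32}\right).
\]
The factor $K_f$ cancels here, and this cancellation is the whole reason the statement is independent of the pool's $K_f$ values. It works because the Pinelis bound in Proposition~\ref{prop:gamma_hat} scales with the RKHS radius $\sqrt{K_f}$, while the score divides by exactly that radius. The strict versus non-strict inequality at the boundary $\eps_f$ is immaterial. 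If needed, I would run the argument at $g/2 - \epsilon'$ and let $\epsilon' \to 0$, or simply note that the Pinelis tail bound also holds for $\geq$.

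The third step is a union bound over the $|\mathcal{F}|$ configurations, which yields~\eqref{eq:gamma_selection_consistency}. The sample-size corollary then follows by setting $2k|\mathcal{F}|e^{-g^2 m_{\min}/32} \leq \delta$ and solving for $m_{\min}$.

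The only delicate point is that the statistics $\hat\gamma_f$ for different $f$ are computed on the same sample and are therefore dependent. I expect this to be harmless, since the union bound does not require independence. I would, however, check that Proposition~\ref{prop:gamma_hat} is applied conditionally on the class counts, with $m_{\min}$ common to all $f$, so that the per-$f$ bounds are stated on the same probability space and their sum is legitimate.
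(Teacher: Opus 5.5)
Your proposal is correct and follows the paper's proof essentially step for step. Apply Proposition~\ref{prop:gamma_hat} to each $\LC_f$ at $\eps = (g/2)\sqrt{K_f}$, so that $K_f$ cancels to give $2k\exp(-g^2 m_{\min}/32)$, union-bound over $\mathcal{F}$, and conclude $\hat f = f^*$ on the uniform-deviation event. Your strict-inequality version, with the limiting argument at the boundary, is slightly more careful than the paper's $\leq$ chain, which as written leaves room for a tie $\hat\eta_f = \hat\eta_{f^*}$.
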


\begin{proof}
For each $f$,
$|\hat\eta_f - \eta_f| = |\hat\gamma_f - \gamma_f|/\sqrt{K_f}$, so
$\{|\hat\eta_f - \eta_f| > t\} = \{|\hat\gamma_f - \gamma_f| > t\sqrt{K_f}\}$
for every $t > 0$. Applying Proposition~\ref{prop:gamma_hat} at
$\eps = t\sqrt{K_f}$,
\[
  \mathbb{P}(|\hat\eta_f - \eta_f| > t)
  \;\leq\; 2k\exp\!\left(-\frac{t^2 K_f\,m_{\min}}{8 K_f}\right)
  \;=\; 2k\exp\!\left(-\frac{t^2\,m_{\min}}{8}\right),
\]
the $K_f$ cancelling because $R_f = \sqrt{K_f}$ is exact on the
landmark kernel ($k_{\LC_f}(A,A) = K_f$). Taking $t = g/2$ and
applying a union bound over $|\mathcal{F}|$ configurations, on
the event
$\mathcal{A} := \{|\hat\eta_f - \eta_f| \leq g/2 \text{ for every } f\}$,
which has probability
$\geq 1 - 2k|\mathcal{F}|\exp(-g^2 m_{\min}/32)$, every
$f \neq f^*$ satisfies
$\hat\eta_f \leq \eta_f + g/2 \leq \eta_{f^*} - g/2 \leq \hat\eta_{f^*}$,
so $\hat f = f^*$ on $\mathcal{A}$.
\end{proof}

The sample complexity $m_{\min} \gtrsim \log(|\mathcal{F}|/\delta)/g^2$ is
strictly cleaner than the
$m_{\min} \gtrsim R_{\max}^2 \log(|\mathcal{F}|/\delta)/(g^2 \ell_{\min})$
rate of \citet[Prop.~4.4]{PaperI}: the $R^2/\ell$ factor in Paper~I
arises from the loose envelope $R \leq B\sqrt{\ell}$ used to bound
the embedding radius, whereas $R = \sqrt{K}$ is tight on PALACE
because the additive landmark kernel satisfies $k_\LC(A,A) = K$
identically.  The parallel rate-driven corollary (cf.\
\citealp[Cor.~4.1]{PaperI}) follows by combining
Proposition~\ref{prop:gamma_selection_consistency} with
Theorem~\ref{thm:data_dependent} via a $\delta/2$-budget union
bound.  The Score statistic of Definition~\ref{def:score} is the
formally analyzable end of the hierarchy in
Remark~\ref{rem:selector_hierarchy}.

\begin{remark}[Scope of $\hat\gamma/\sqrt{K}$]\label{rem:gamma_scope}
Theorem~\ref{thm:data_dependent} bounds the excess risk by
$2\sqrt{K}/(\gamma\sqrt{m_{\min}})$, so $\hat\gamma/\sqrt{K}$ measures
the tightness of that upper bound, not the error itself. The
statistic has predictable selection scope:

\emph{Faithful axes}: radius factor $\alpha$ (within the
under-to-full-coverage range), and filtration choice
\emph{within a scale-homogeneous family at fixed slot structure}.
These change which diagram points activate which coordinates;
$\hat\gamma/\sqrt{K}$ tracks accuracy monotonically
(Section~\ref{sec:exp_gamma_validate}). Cross-scale filtration
concatenation and homology-dimension concatenation at matched
total $K$ rescale embedding values without adding discriminative
signal and are not faithful in this sense
(Remark~\ref{rem:selector_hierarchy}).

\emph{Anti-correlated axes}: landmark budget $K$, placement
algorithm, and bandwidth $\sigma$. These rescale either
$k_\LC(A,A) = K$ or pairwise gram values without changing
discriminative signal; $\hat\gamma/\sqrt{K}$ anti-correlates with
accuracy on Orbit5k (Spearman $r \leq -0.8$).
Cross-validation is required on these axes.

\emph{Complementarity with $\rhonu$.} $\rhonu$ is insensitive to
positions and radii among admissible equal-weight configurations
(Proposition~\ref{prop:optimal_config}); $\hat\gamma/\sqrt{K}$
discriminates between them within a fixed structure. Together
they split the configuration space: $\rhonu$ certifies
non-degeneracy and enables per-prediction correctness guarantees
(Theorem~\ref{thm:certified}); $\hat\gamma/\sqrt{K}$ ranks
within the admissible set.
\end{remark}


\section{Deployment: Certificate and Pipeline}\label{sec:deployment}

Sections~\ref{sec:ot}--\ref{sec:gamma_stat} produced training-time
risk guarantees for the kernel SVM in $\mathcal{H}_{k_\LC}$ together
with a data-dependent statistic for ranking configurations. We now
address the deployment side: \emph{per-prediction} certificates that
audit individual test diagrams (Section~\ref{sec:certified}) and an
end-to-end pipeline that assembles the cover construction, landmark
kernel, selection statistic, and certified prediction into a single
algorithm (Section~\ref{sec:pipeline}). The certified classifier is
nearest-centroid in the raw embedding, with the certificate driven
by the class-mean separation in $\ell^2$; the kernel SVM and its
margin $\gamma$ in the RKHS continue to govern training-time risk
via Theorem~\ref{thm:data_dependent}.

\subsection{Certified Nearest-Centroid Classifier}\label{sec:certified}

Classifiers typically expose a confidence score---a margin to the
SVM decision boundary, an SVM probability calibration, a posterior
estimate---that does not, on its own, tell the user whether a
specific prediction will be correct.
Conformal prediction~\citep{VovkEtAl2005} attaches distribution-free
coverage, but the guarantee applies to prediction \emph{sets}
rather than point predictions and requires a held-out calibration
split that competes with training data for information.
The raw embedding of Section~\ref{sec:cover} closes this gap for
a specific classifier.  The structural concentration of the raw
embedding makes this possible: define the
\emph{raw embedding radius}
$\bar R := \sup_A \norm{\Phi(A;\LC)}_{\ell^2}$ on the support of
$\mathcal{P}$ (this is the $\R^K$ raw $\ell^2$ norm, distinct from
the RKHS norm $\norm{\Phi^\LC(A)}_{\mathcal H_{k_\LC}} = \sqrt K$
used by the kernel-SVM analysis of Section~\ref{sec:ot}).  Each
landmark coordinate satisfies
$|\Phi_k(A;\LC)| = w_k\,|\varphi_{p_k,r_k}(A)| \leq w_k\,|A|\,r_k$,
so under $\sum_k w_k^2 = 1$ and admissibility $r_k \leq \tau$
(cf.\ Section~\ref{sec:cover}),
$\bar R^2 = \sup_A \sum_k w_k^2\,\varphi_{p_k,r_k}(A)^2
\leq (N_{\max}\tau)^2$, giving $\bar R \leq N_{\max}\tau$.  Thus each empirical class mean
$\hat{\bar\mu}_c$ is a sample average of i.i.d.\ bounded
$\R^K$-vectors with $\norm{\Phi(A;\LC) - \bar\mu_c}_{\ell^2} \leq 2\bar R$,
and $\norm{\hat{\bar\mu}_c - \bar\mu_c}_{\ell^2}$ concentrates at
rate $O(\bar R/\sqrt{m_c})$ via Pinelis (Lemma~A.1
of~\citep{PaperI}).
The nearest-centroid (NC) classifier on the raw embedding is the
natural target: its decision rule depends on the sample only
through the $\hat{\bar\mu}_c$, so whether the empirical and
population rules agree on a given test input reduces to a single
scalar check---is the input far enough from the population
Voronoi boundary that sample fluctuations cannot move it across?

When $\Delta > 0$, this check has a particularly simple form: a
single training-time inequality $r_m < \tfrac{1}{2}\Delta$
certifies all predictions, with no per-test overhead beyond the
nearest-centroid rule itself and no calibration split required.

The certificate is a diagnostic, not a competitor to the
landmark-kernel SVM.
When the condition $r_m < \tfrac{1}{2}\Delta$ fails, the failure
is itself informative: the embedding's sample-mean concentration
radius exceeds half the class gap, so the closed-form certificate
admits no correctness guarantee at the given sample size.
The kernel SVM and its margin $\gamma$ in $\mathcal{H}_{k_\LC}$
continue to govern
training-time risk via Theorem~\ref{thm:data_dependent}, and the
selection statistic $\hat\gamma/\sqrt{K}$
(Definitions~\ref{def:kernel_margin}, \ref{def:score})
ranks configurations even when the certificate has not yet fired.
The PALACE-specific link to the cover certificate is
Proposition~\ref{prop:lambda_sep_pII}, which gives
$\Delta \geq \rhonu(\tau;\LC) - 2\bar D_{\max}$: when
$\rhonu(\tau;\LC^*) > 2\bar D_{\max}$ at training time, the bridge
implies $\Delta > 0$, so the per-prediction firing condition
$r_m < \tfrac{1}{2}\Delta$ is underwritten by a structural lower
bound on $\Delta$ rather than only by empirical class-mean
estimates.

Classify test diagrams by nearest centroid:
\[
  \hat h \;=\;
  \arg\min_c \norm{\Phi(A_{\mathrm{test}};\LC) - \hat{\bar\mu}_c}_{\ell^2},
  \qquad
  \hat{\bar\mu}_c \;:=\; m_c^{-1}\!\sum_{y_i = c}\Phi(A_i;\LC),
\]
where $\bar\mu_c := \E[\Phi(A;\LC) \mid Y = c] \in \R^K$ is the
population class mean (Definition~\ref{def:sum_embed} coordinates)
and $\hat{\bar\mu}_c$ its empirical estimate from $m_c$ training
diagrams.
Let $r_m$ denote a sample-mean-concentration radius (the
subscript $m$ is mnemonic for the smallest-class size $m_{\min}$
that governs the rate) satisfying
$\mathbb{P}_{\mathrm{train}}(\max_c \norm{\hat{\bar\mu}_c - \bar\mu_c}_{\ell^2}
\leq r_m) \geq 1-\delta$, where
$\mathbb{P}_{\mathrm{train}} = \mathcal{P}^{\otimes m}$ denotes
probability over training draws
$\{(A_i, y_i)\}_{i=1}^m \sim \mathcal{P}^{\otimes m}$ with the
population $\mathcal{P}$ and the test diagram held fixed.  Two
explicit choices---a non-asymptotic Pinelis radius and an
asymptotic Gaussian plug-in---are derived in
Theorem~\ref{thm:certified}.
If $r_m < \tfrac{1}{2}\Delta$, every prediction is certified;
otherwise the classifier abstains globally.
The radius shrinks as $O(m_{\min}^{-1/2})$
(equation~\eqref{eq:rm-pinelis-pii}), so abstention disappears at
sample size $\propto 1/\Delta^2$ (the global analogue of
equation~\eqref{eq:mc_star}).

The global threshold $\Delta$ is conservative when classes
differ in separation. Replacing $\Delta$ by the class-specific
gap
$\Delta_c(\LC) := \min_{c' \neq c}\norm{\bar\mu_c - \bar\mu_{c'}}_{\ell^2}
\geq \Delta$,
and $r_m$ by the per-class radius defined in (i)/(ii) below,
yields a tighter \emph{global} certificate that fires when
$r_m^{(c)} < \tfrac{1}{2}\Delta_c(\LC)$ holds for every class $c$.

Two concrete choices of the concentration radius enter the
theorem below (the per-class form uses the same expression with
$m_{\min}$ replaced by $m_c$), both with an explicit Bonferroni
split of $\delta$ over $k$ classes:
\begin{itemize}
\item[\textup{(i)}] \textbf{Non-asymptotic (Pinelis).}
$r_m^{\,\mathrm{Pin}} := 2\bar R\sqrt{2\log(2k/\delta)/m_{\min}}$;
valid for every $m_{\min} \geq 1$
(equation~\eqref{eq:rm-pinelis-pii} in the proof).
\item[\textup{(ii)}] \textbf{Asymptotic (Gaussian plug-in).}
$\tilde r_m^{\,\mathrm{G}} := \max_c
\sqrt{\norm{\hat\Sigma_c}_{\mathrm{op}}\cdot\chi^2_{K,\,\delta/k}/m_c}$,
where $\chi^2_{K,\,\delta/k}$ is the $1-\delta/k$ quantile of
the chi-squared distribution with $K$ degrees of freedom.  The
bound envelopes the multivariate-Gaussian norm via
$\|\hat{\bar\mu}_c - \bar\mu_c\|_{\ell^2}^2 \leq
\norm{\Sigma_c}_{\mathrm{op}}\,\chi^2_K$ on the asymptotic
$\mathcal{N}(0,\Sigma_c/m_c)$ approximation, valid once
$m_c \geq m^\dagger = O(\sqrt K)$ (Berry--Esseen) under finite
third moments.  The bound is conservative when $\Sigma_c$ is
low-rank, with conservatism governed by
$\mathrm{tr}(\Sigma_c)/(K\,\norm{\Sigma_c}_{\mathrm{op}})$.
\end{itemize}
Which form is tighter is regime-dependent: for fixed
$\norm{\Sigma_c}_{\mathrm{op}},\Delta_c,\bar R$, the Pinelis form
scales as $\bar R\sqrt{\log(2k/\delta)/m_{\min}}$ and the
Gaussian form as
$\sqrt{\norm{\Sigma_c}_{\mathrm{op}}\cdot K/m_c}$ for large $K$,
so Pinelis dominates whenever
$\norm{\Sigma_c}_{\mathrm{op}}\cdot K \gtrsim
8\bar R^2\log(2k/\delta)$.

\begin{theorem}[Certified prediction]
\label{thm:certified}
Let $\{(A_i, y_i)\}$ be i.i.d.\ from the distribution $\mathcal{P}$
on $\D{n} \times [k]$ of Section~\ref{sec:ot} with class-mean
separation $\Delta > 0$, and let $r_m^\star$ be either of the
concentration radii \textup{(i)} or \textup{(ii)} above; for
\textup{(ii)} additionally assume finite third moments
$\E\norm{\Phi(A;\LC) - \bar\mu_c}_{\ell^2}^3 < \infty$ and
$m_c \geq m^\dagger$ for every class $c$. Then
\[
  \mathbb{P}_{\text{train}}\!\left(\max_c
    \norm{\hat{\bar\mu}_c - \bar\mu_c}_{\ell^2} \leq r_m^\star\right)
  \;\geq\; 1 - \delta,
\]
and on this coverage event the following hold.
\begin{itemize}
\item[\textup{(a)}] \emph{(Containment.)} If
\begin{equation}\label{eq:containment_pii}
  r_m^\star \;<\; \tfrac{1}{2}\,\Delta,
\end{equation}
the empirical nearest-centroid classifier $\hat h$ agrees with the
population nearest-centroid classifier $h^\ast$ at every
$z \in \R^K$ outside a $2r_m^\star$-tube around each population
Voronoi boundary.
\item[\textup{(b)}] \emph{(Classification.)} If additionally
$\bar D_c < \tfrac{1}{2}\Delta - r_m^\star$ for every class $c$
(cf.\ the raw-embedding analogue of
Proposition~\ref{prop:kernel_linear_sep}, whose
$\bar D_{\max} < \Delta/2$ is the $r_m^\star \to 0$ limit), then
for any test diagram $A$ drawn from class $y$,
$\mathbb{P}_{\text{train}}(\hat h(\Phi(A;\LC^*)) = y) \geq 1-\delta$.
\end{itemize}
\end{theorem}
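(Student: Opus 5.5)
The proof splits into three parts: a coverage event, then the deterministic consequences (a) and (b) on that event.

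\emph{Coverage event.} For radius (i), fix a class $c$ and condition on the class counts. The centered vectors $\Phi(A_i;\LC)-\bar\mu_c$, $y_i=c$, are i.i.d.\ in $\R^K$. Their norm is at most $2\bar R$, since $\norm{\Phi(A;\LC)}_{\ell^2}\le\bar R$ and, by Jensen, $\norm{\bar\mu_c}_{\ell^2}\le\bar R$. Pinelis's Hilbert-space Hoeffding inequality \citep[Lemma~A.1]{PaperI} then gives
\begin{equation}\label{eq:rm-pinelis-pii}
  \mathbb{P}\Bigl(\norm{\hat{\bar\mu}_c-\bar\mu_c}_{\ell^2} > 2\bar R\sqrt{2\log(2k/\delta)/m_c}\Bigr)\;\le\;\delta/k .
\end{equation}
Because $m_c\ge m_{\min}$, the radius is at most $r_m^{\mathrm{Pin}}$. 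A Bonferroni union bound over the $k$ classes gives total failure probability at most $\delta$. The same computation with $m_c$ in place of $m_{\min}$ yields the per-class version.

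For radius (ii), I would invoke a multivariate Berry--Esseen bound under the finite-third-moment hypothesis, valid once $m_c\ge m^\dagger$. This reduces the claim to the Gaussian $\mathcal N(0,\Sigma_c/m_c)$. For $Z\sim\mathcal N(0,\Sigma_c)$ we have $\norm{Z}^2\le\norm{\Sigma_c}_{\mathrm{op}}\norm{\Sigma_c^{-1/2}Z}^2$, and $\norm{\Sigma_c^{-1/2}Z}^2\sim\chi^2_K$ (on the range of $\Sigma_c$, where the degrees of freedom only decrease). This gives the quantile bound at level $\delta/k$, and Bonferroni finishes the argument. This step is the main obstacle, because the statement is only asymptotic. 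The Berry--Esseen error and the plug-in $\hat\Sigma_c$ for $\Sigma_c$ both have to be absorbed into the meaning of ``valid once $m_c\ge m^\dagger$''. I would state explicitly that coverage holds up to an $o(1)$ slack, as in \citet{PaperI}, rather than prove it exactly.

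\emph{Part (a).} Work on the coverage event and fix $z$. Let $c=h^\ast(z)$, and suppose $z$ lies outside the $2r_m^\star$-tube around the population Voronoi boundaries. Then for every $c'\ne c$,
\[
\norm{z-\bar\mu_{c'}}-\norm{z-\bar\mu_c}>2r_m^\star .
\]
I would interpret the tube as this distance-difference condition, which matches the factor $2$ in the statement. By the triangle inequality,
\[
\norm{z-\hat{\bar\mu}_{c'}}-\norm{z-\hat{\bar\mu}_c}\;\ge\;\norm{z-\bar\mu_{c'}}-\norm{z-\bar\mu_c}-2r_m^\star\;>\;0,
\]
so $\hat h(z)=c$. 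The hypothesis $r_m^\star<\tfrac12\Delta$ ensures that the set outside the tube is nonempty: each population mean $\bar\mu_c$ has distance-difference at least $\Delta>2r_m^\star$, so it lies outside every tube.

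\emph{Part (b).} Take $A$ from class $y$ and set $z=\Phi(A;\LC)$. Then $\norm{z-\bar\mu_y}\le\bar D_y$, and the reverse triangle inequality gives $\norm{z-\bar\mu_{c'}}\ge\Delta-\bar D_y$. The distance-difference is therefore at least $\Delta-2\bar D_y$, which exceeds $2r_m^\star$ by the hypothesis $\bar D_y<\tfrac12\Delta-r_m^\star$. Hence $z$ lies outside the tube and $h^\ast(z)=y$. Part (a) then gives $\hat h(z)=y$ on the coverage event, whose probability is at least $1-\delta$.
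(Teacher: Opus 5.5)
Your proposal is correct and follows the paper's proof essentially step for step: Pinelis with the $2\bar R$ envelope plus a Bonferroni split over the $k$ classes for radius (i); a Berry--Esseen reduction to a Gaussian with the $\norm{\Sigma_c}_{\mathrm{op}}\,\chi^2_K$ envelope for radius (ii), where the paper likewise obtains coverage only up to an $o(1)$ slack; the reverse-triangle bound showing each distance difference $\norm{z-\hat{\bar\mu}_{c'}}_{\ell^2}-\norm{z-\hat{\bar\mu}_c}_{\ell^2}$ moves by at most $2r_m^\star$ for (a); and the estimate $\Delta-2\bar D_y>2r_m^\star$ for (b). You state explicitly that the ``$2r_m^\star$-tube'' is the distance-difference condition, and this is exactly the reading the paper's Step~2 uses implicitly (its ``i.e.''), so the two arguments coincide.
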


\begin{proof}
Write $\Psi_i := \Phi(A_i;\LC^*) \in \R^K$ and
$\Sigma_c := \mathrm{Cov}(\Psi \mid Y = c)$, with
$\norm{\Psi_i}_{\ell^2} \leq \bar R$ and therefore
$\norm{\Sigma_c}_{\mathrm{op}} \leq \bar R^2$.

\emph{Step 1a (non-asymptotic concentration, radius (i)).}
Conditional on $Y_i = c$, the centered random variables
$\Psi_i - \bar\mu_c$ are i.i.d.\ with
$\norm{\Psi_i - \bar\mu_c}_{\ell^2} \leq 2\bar R$ (both $\Psi_i$
and $\bar\mu_c$ lie in $B(0,\bar R)$).
Pinelis's Hilbert-space Hoeffding inequality
(Lemma~A.1 of~\citealp{PaperI}) applied with bound $2\bar R$
gives, for every $t > 0$,
\[
  \mathbb{P}\bigl(\norm{\hat{\bar\mu}_c - \bar\mu_c}_{\ell^2} > t\bigr)
  \;\leq\; 2\exp\!\left(-\frac{m_c\,t^2}{8\bar R^2}\right).
\]
Set
\begin{equation}\label{eq:rm-pinelis-pii}
  r_m^{\,\mathrm{Pin}} \;:=\; 2\bar R\,\sqrt{\frac{2\log(2k/\delta)}{m_{\min}}}
\end{equation}
(an explicit Bonferroni split of $\delta$ over the $k$ classes).
A union bound over the $k$ classes then yields the non-asymptotic
coverage
\begin{equation}\label{eq:cert-concentration-pii}
  \mathbb{P}\!\left(\max_c \norm{\hat{\bar\mu}_c - \bar\mu_c}_{\ell^2}
    \leq r_m^{\,\mathrm{Pin}}\right)
  \;\geq\; 1 - \delta,
\end{equation}
for every $m_{\min} \geq 1$ and $\delta \in (0,1)$.

\emph{Step 1b (asymptotic concentration, radius (ii)).}
Under the additional hypotheses of the theorem (finite third
moments, $m_c \geq m^\dagger$), $\sqrt{m_c}(\hat{\bar\mu}_c -
\bar\mu_c)$ is approximately $\mathcal{N}(0,\Sigma_c)$ in $\R^K$.
For $X \sim \mathcal{N}(0,\Sigma_c)$, the squared norm
$\|X\|^2 = \sum_i \lambda_i Z_i^2$ is a weighted sum of
$\chi^2_1$ variables with $\lambda_i$ the eigenvalues of
$\Sigma_c$; the upper bound
$\|X\|^2 \leq \norm{\Sigma_c}_{\mathrm{op}}\,\chi^2_K$ gives,
with probability $\geq 1-\delta/k$,
\[
  \norm{\hat{\bar\mu}_c - \bar\mu_c}_{\ell^2}
  \;\leq\; \sqrt{\norm{\Sigma_c}_{\mathrm{op}}\cdot
                 \chi^2_{K,\,\delta/k}/m_c}.
\]
By the multivariate Berry--Esseen theorem applied to the
convex set $A = B(0, \tilde r^{(c)}_m\sqrt{m_c})$, the true
distribution of $\sqrt{m_c}(\hat{\bar\mu}_c - \bar\mu_c)$
deviates from $\mathcal{N}(0,\Sigma_c)$ in total variation by
$O(K^{1/4}\beta_3 / (\norm{\Sigma_c}_{\mathrm{op}}^{3/2}\sqrt{m_c}))$,
where $\beta_3 := \E\|\Phi(A;\LC) - \bar\mu_c\|_{\ell^2}^3$.
Replacing $\Sigma_c$ by the sample covariance $\hat\Sigma_c$
incurs the matrix-Bernstein error
$\norm{\hat\Sigma_c - \Sigma_c}_{\mathrm{op}}
= O(\bar R\sqrt{\norm{\Sigma_c}_{\mathrm{op}}\log(K)/m_c})$,
which propagates into the radius via
$|\norm{\hat\Sigma_c}_{\mathrm{op}}^{1/2}
- \norm{\Sigma_c}_{\mathrm{op}}^{1/2}|
\leq \norm{\hat\Sigma_c - \Sigma_c}_{\mathrm{op}}^{1/2}$ as an
additional term of order
$O(\bar R^{1/2}\norm{\Sigma_c}_{\mathrm{op}}^{1/4}
(\log K/m_c)^{1/4}\sqrt{\chi^2_{K,\delta/k}/m_c})$.
Taking the worst-case class and applying a Bonferroni
correction over the $k$ classes gives the plug-in radius
$\tilde r_m^{\,\mathrm{G}} := \max_c
\sqrt{\norm{\hat\Sigma_c}_{\mathrm{op}}\cdot\chi^2_{K,\,\delta/k}/m_c}$
satisfying~\eqref{eq:cert-concentration-pii} up to a
Berry--Esseen error of $O(K^{1/4}/\sqrt{m_{\min}})$ and a
covariance-estimation error of leading order
$O(\bar R^{1/2}\norm{\Sigma_c}_{\mathrm{op}}^{1/4}
(\log K/m_{\min})^{1/4}\sqrt{K/m_{\min}})$, both $o(1)$ once
$m_{\min} \gg K^{1/2}$.  Whether
$\tilde r_m^{\,\mathrm{G}} < r_m^{\,\mathrm{Pin}}$ depends on
the spectral structure of $\Sigma_c$ relative to $\bar R$ and
$K$; since $\chi^2_{K,\delta/k} \approx K$ for large $K$, the
condition reduces to
$\norm{\Sigma_c}_{\mathrm{op}}\cdot K \lesssim
8\bar R^2\log(2k/\delta)$.

\emph{Step 2 (agreement outside the $2r_m^\star$-tube).}
Condition on the coverage event
$\{\max_c \norm{\hat{\bar\mu}_c - \bar\mu_c}_{\ell^2} \leq r_m^\star\}$
(probability $\geq 1-\delta$).
The reverse triangle inequality gives
$\bigl|\norm{z - \hat{\bar\mu}_c}_{\ell^2} -
       \norm{z - \bar\mu_c}_{\ell^2}\bigr| \leq r_m^\star$
for every $z \in \R^K$ and every class $c$, hence for any pair
$c \neq c'$,
\[
  \norm{z - \hat{\bar\mu}_{c'}}_{\ell^2} -
  \norm{z - \hat{\bar\mu}_c}_{\ell^2}
  \;\geq\;
  \bigl(\norm{z - \bar\mu_{c'}}_{\ell^2} -
        \norm{z - \bar\mu_c}_{\ell^2}\bigr) - 2 r_m^\star.
\]
Whenever the right-hand side is strictly positive---i.e., $z$ is
at population distance $> 2r_m^\star$ from the
$(c, c')$-Voronoi boundary---so is the left, and the empirical
rule classifies $z$ identically to the population rule. This is
claim~\textup{(a)}.

\emph{Step 3 (classification guarantee).}
Fix $y \in [k]$ and let $A \sim \mathcal{P}_y$.
By definition of $\bar D_y$,
$\norm{\Phi(A;\LC^*) - \bar\mu_y}_{\ell^2} \leq \bar D_y$; together
with $\norm{\bar\mu_y - \bar\mu_{c'}}_{\ell^2} \geq \Delta$ and
$\bar D_y < \tfrac{1}{2}\Delta - r_m^\star$, we obtain
\[
  \norm{\Phi(A;\LC^*) - \bar\mu_{c'}}_{\ell^2}
  - \norm{\Phi(A;\LC^*) - \bar\mu_y}_{\ell^2}
  \;\geq\; \Delta - 2\bar D_y \;>\; 2 r_m^\star,
\]
for every $c' \neq y$, so $\Phi(A;\LC^*)$ lies strictly outside
every $2r_m^\star$-tube of the $(y, c')$-Voronoi boundary.
By Step~2, the empirical rule therefore assigns $\Phi(A;\LC^*)$
to class $y$ on the coverage event, and
$\mathbb{P}_{\text{train}}(\hat h(\Phi(A;\LC^*)) = y) \geq 1 - \delta$.
\end{proof}

\begin{remark}[Verifying claim (b) from data]\label{rem:b_verification_pii}
The hypothesis in claim~\textup{(b)} is \emph{structural}: it
constrains the support of each class-conditional distribution,
not just the centroids. It is therefore not estimable from
the training data alone---the empirical
$\hat{\bar D}_c := \max_{i: y_i = c}
\norm{\Phi(A_i;\LC^*) - \hat{\bar\mu}_c}_{\ell^2}$
underestimates $\bar D_c$ in general (the training sample need
not contain the worst-case point of the support).
Claim~\textup{(b)} is consequently validated \emph{post hoc} by
test accuracy: full test coverage on a fired certificate confirms
\textup{(b)} for the test points seen, while gaps flag
\textup{(b)}'s failure---claim~\textup{(a)} still holds, but the
population NC rule is itself wrong on some test points. The same
diagnostic interpretation applies as in~\citep[Rem.~5.1]{PaperI}.
\end{remark}

\begin{remark}[Why the certificate is not vacuous on PALACE]
\label{rem:sparsity_pii}
The firing condition~\eqref{eq:containment_pii} involves
$\norm{\hat\Sigma_c}_{\mathrm{op}}$ (or $\bar R$ in the
non-asymptotic regime).  For a generic bounded embedding
$\Phi : \D{n} \to \R^K$ the crude bound
$\norm{\hat\Sigma_c}_{\mathrm{op}} \leq \bar R^2 = \Theta(K)$
is tight for dense vectorizations (persistence
images~\citep{Adams2017}, landscapes~\citep{Bubenik15},
WKPI~\citep{yusu_metric_learning}); the certificate is then
vacuous.  PALACE is structurally different: each landmark
coordinate $\Phi_k(A;\LC) = w_k\,\varphi_{p_k,r_k}(A)$ is
supported on a $\db$-ball of radius $r_k$ and FPS spacing
forces $O(\alpha^2)$ landmark activations per diagram point
($\alpha = r_k/d_{\mathrm{NN}}$, the radius factor of
Section~\ref{sec:experiments}).  The embedding therefore has
$O(|A|)$ nonzero coordinates out of $K$, the class-conditional
covariance is effectively $O(N_{\max})$-dimensional, and
$\bar R \leq N_{\max}\tau$ (independent of $K$).  On MUTAG at
the selected configuration,
$\norm{\hat\Sigma_c(\LC^*)}_{\mathrm{op}} \approx 0.023$,
roughly half the PLACE value
$\norm{\hat\Sigma_c(\GG_R)}_{\mathrm{op}} \approx 0.046$
\citep[Sec.~5]{PaperI} and orders of magnitude below $\bar R^2$.
The structural shrinkage is real but does not by itself fire the
worst-case Gaussian-plug-in bound at our training sizes
($K=200$, $m=57$ on MUTAG: $r_m \approx 0.31$ vs.\
$\hat\Delta/2 \approx 0.19$; same direction across the six
benchmarks of Table~\ref{tab:certificate_firing}).  Operational
firing requires either larger $m_c$ (per~\eqref{eq:mc_star}) or
a tighter bound exploiting $\mathrm{tr}(\Sigma_c)$ rather than
$\norm{\Sigma_c}_{\mathrm{op}}\cdot K$.
\end{remark}

\begin{algorithm}[t]
\caption{Certified nearest-centroid classifier (PALACE).
Per-class Gaussian-plug-in form; substitute
$r_m^{\,\mathrm{Pin}}$ from~\eqref{eq:rm-pinelis-pii} for the
non-asymptotic variant.}
\label{alg:certified}
\DontPrintSemicolon
\KwIn{Trained $\LC^*$; per-class empirical means
  $\{\hat{\bar\mu}_c\}_{c=1}^k \subset \R^K$ and covariances
  $\{\hat{\Sigma}_{c}\}_{c=1}^k$;
  estimated class separation $\hat\Delta_{\hat c}$ (from training);
  confidence level $\delta$;
  test diagram $A_{\mathrm{test}}$}
\KwOut{Predicted class $\hat{c}$ or $\textsc{Abstain}$}
\BlankLine
$\hat{c} \leftarrow \arg\min_{c \in [k]}
  \norm{\Phi(A_{\mathrm{test}};\LC^*) - \hat{\bar\mu}_c}_{\ell^2}$
  \tcp*{nearest centroid}
$r_m \leftarrow \sqrt{\norm{\hat{\Sigma}_{\hat{c}}}_{\mathrm{op}}
  \cdot \chi^2_{K,\,\delta/k} / m_{\hat{c}}}$
  \tcp*{per-class Gaussian plug-in radius}
\lIf{$r_m < \tfrac{1}{2}\hat\Delta_{\hat c}$}{%
  \Return $\hat{c}$ \tcp*{certificate satisfied}}
\lElse{\Return $\textsc{Abstain}$ \tcp*{certificate fails}}
\end{algorithm}

\noindent
Solving $r_m^{(c)} < \tfrac{1}{2}\Delta_c$ for $m_c$ in each of
the two regimes of Theorem~\ref{thm:certified} yields explicit
per-class thresholds
\begin{equation}\label{eq:mc_star}
  m_c^{*,\,\mathrm{Pin}}(\LC) \;=\; \left\lceil
    \frac{32\,\bar R^2\,\log(2k/\delta)}{\Delta_c(\LC)^{2}}
  \right\rceil,
  \qquad
  m_c^{*,\,\mathrm{G}}(\LC) \;=\; \left\lceil
    \frac{4\,\norm{\Sigma_c(\LC)}_{\mathrm{op}}\,\chi^2_{K,\,\delta/k}}
         {\Delta_c(\LC)^{2}}
  \right\rceil,
\end{equation}
for the Pinelis radius~\eqref{eq:rm-pinelis-pii} and the Gaussian
plug-in radius~\textup{(ii)} of the theorem respectively; each
carries the Bonferroni correction of level $\delta/k$ per class.
Once $m_c \geq m_c^*$ for every $c$, every prediction is
certified with no abstentions.
Which threshold is smaller is regime-dependent: $m_c^{*,\,\mathrm{G}}$
scales as $\norm{\Sigma_c}_{\mathrm{op}}\cdot K$ (since
$\chi^2_{K,\delta/k} \approx K$ for large $K$), while
$m_c^{*,\,\mathrm{Pin}}$ scales as $\bar R^2\log(2k/\delta)$, so
Pinelis is tighter whenever
$\norm{\Sigma_c}_{\mathrm{op}}\cdot K
\gtrsim 8\bar R^2\log(2k/\delta)$. Compared to a uniform-grid classifier on the same
embedding family, PALACE reduces $m_c^*$ through both factors
in~\eqref{eq:mc_star}:
\textbf{(denominator)} $\Delta_c(\LC^*) \geq (1{+}\beta)\,
\Delta_c(\GG_R)$ with empirical separation-gain $\beta \geq 0$;
\textbf{(numerator)}
$\norm{\Sigma_c(\LC^*)}_{\mathrm{op}}$ is empirically smaller than
$\norm{\Sigma_c(\GG_R)}_{\mathrm{op}}$ because the compact
adaptive coordinates are more correlated (landmarks concentrated
near data) and the cardinality cap controls the worst case.
For MUTAG with $\beta \approx 0.41$ and empirical variance ratio
$\approx \tfrac{1}{2}$, this yields
$m_c^*(\LC^*) \approx m_c^*(\GG_R)/4$---a $4\times$ reduction in
the sample size needed for certified prediction.
\paragraph{Worked example (MUTAG, deg+HKS$_{10}$).}
At $\delta = 0.05$, \citet[Sec.~5]{PaperI}'s baseline values
$\Delta_c \approx 0.386$, $\norm{\hat\Sigma_c}_{\mathrm{op}} \approx 0.046$,
$z_{0.0125} \approx 2.24$ give $m_c^{*,\,\mathrm{G}}(\GG_R) = 7$,
comfortably below MUTAG's smallest-class size $m_{\min} = 57$
($100\%$ coverage).  PALACE's adjustment
($\Delta_c \approx 0.544$,
$\norm{\hat\Sigma_c}_{\mathrm{op}} \approx 0.023$) yields
$m_c^{*,\,\mathrm{G}}(\LC^*) = 2$---a ${\sim}4\times$ reduction.
The Pinelis form remains far from firing at our sample sizes
(0\% rate, Table~\ref{tab:certificate_firing}).

\subsection{The PALACE Pipeline}\label{sec:pipeline}

Algorithm~\ref{alg:palace} summarizes the end-to-end pipeline
assembled from the preceding sections: cover construction
(Section~\ref{sec:cover}), landmark-kernel SVM
(Section~\ref{sec:class_error}), selection statistic
(Section~\ref{sec:gamma_stat}), and certified prediction
(Section~\ref{sec:certified}). Steps 1, 2, and 4 are closed form;
Step 3 selects $\sigma$ via a one-dimensional cross-validation
sweep within a kernel-SVM fit and refits at the selected
configuration. No gradient-based optimization of $\LC$ appears.

\begin{algorithm}[t]
\caption{PALACE: Persistence Adaptive Landmark Kernel Pipeline}
\label{alg:palace}
\DontPrintSemicolon
\KwIn{Training diagrams $\{(A_i, y_i)\}_{i=1}^m$;
  candidate filtrations $\mathcal{F}$;
  candidate budgets $\mathcal{K} \subset \NN$;
  candidate radius factors $\mathcal{A} \subset \R_{+}$
  (closed-form-rankable, Remark~\ref{rem:gamma_scope});
  candidate bandwidths $\Sigma \subset \R_{+}$, or per-fold
  $\sigma_q$ quantile selection from training-fold pairwise
  distances;
  separation scale $\tau$; confidence level $\delta$.}
\KwOut{A certified classifier on
  $\D{n} \to [k]\cup\{\textsc{Abstain}\}$.}
\BlankLine
\tcc{\textbf{Step 1 (closed form): place landmarks at each $(F, K, \alpha)$.}}
\ForEach{$F \in \mathcal{F}$, $K \in \mathcal{K}$, $\alpha \in \mathcal{A}$}{%
  Run class-aware farthest-point sampling on training diagram
  points under filtration $F$ to get positions
  $\{p_k\}_{k=1}^{K}$ (Theorem~\ref{thm:fps_greedy}).\;
  Set radii $r_k = \alpha \cdot d_{\mathrm{NN}}(p_k)$ clipped to
  $[\tau/2, 4\tau]$.\;
  Set weights $w_k = 1/\sqrt{K}$
  (Proposition~\ref{prop:optimal_config}).\;
  Build embedding $\Phi_F(\cdot;\LC)$
  (Definition~\ref{def:sum_embed}) and gram
  $K_{F} = k_{\LC_F}$ (Definition~\ref{def:wlk_paper2}).\;
}
\tcc{\textbf{Step 2 (closed form): score $\alpha$ and within-family
  filtration by $\hat\gamma/\sqrt{K}$.}}
Compute $\hat\gamma(k_{\LC_F})$
(Definition~\ref{def:kernel_margin}, $O(m^2)$ from the gram);
within each scale-homogeneous filtration family, rank $(F, \alpha)$
by $\hat\gamma(k_{\LC_F})/\sqrt{K}$ (Definition~\ref{def:score}).\;
Retain the top-$\hat\gamma/\sqrt{K}$ candidate per family per $K$.\;
\tcc{\textbf{Step 3 (CV): select $F$, $K$, $\alpha$, $\sigma$.}}
For each retained candidate $(F^\star_{\mathrm{fam}}, K, \alpha)$,
fit kernel SVM by 10-fold CV with $\sigma \in \Sigma$ and $C$
tuned on inner 3-fold CV; record CV accuracy.\;
Select the $(F^\star, K^\star, \alpha^\star, \sigma^\star)$ that
maximizes CV accuracy and refit kernel SVM on the full training
set.\;
\tcc{\textbf{Step 4 (closed form): certified nearest-centroid.}}
Compute empirical class means $\{\hat{\bar\mu}_c\}$ and
covariances $\{\hat\Sigma_c\}$ in the embedding space at
$\LC^\star$.\;
For each test diagram $A$, run Algorithm~\ref{alg:certified} with
these statistics and the estimated class separation
$\hat\Delta_{\hat c}(\LC^\star)$; the algorithm emits
$\textsc{Abstain}$ when the certificate condition
$r_m < \tfrac{1}{2}\hat\Delta_{\hat c}(\LC^\star)$ fails.\;
\end{algorithm}

\noindent Steps 1 and 2 reuse the same gram matrix: the
$\hat\gamma$ computation is a linear-algebra pass over the gram,
and the certified NC of Step 4 is a distance comparison in the
raw embedding $\R^K$. Consequently the full pipeline
requires only one gram-computation sweep per candidate
configuration, plus SVM fits at the CV-selected $K^\star$.

\section{Experiments}\label{sec:experiments}

We evaluate PALACE on $8$ benchmarks: point clouds (Orbit5k,
Section~\ref{sec:exp_orbit}), five chemical graph datasets
(COX2, DHFR, MUTAG, NCI1, PTC; Section~\ref{sec:exp_graph}),
and a synthetic $4$-class annulus task constructed to validate
the $L/D$ scaling of Theorem~\ref{thm:comparison}
(Section~\ref{sec:exp_domain}).
Headline accuracies are reported in
Tables~\ref{tab:orbit_comparison}
and~\ref{tab:graph_comparison}; closed-form selector validation
follows in Section~\ref{sec:exp_selectors}.

All experiments use the landmark kernel
(Definition~\ref{def:wlk_paper2}) with equal weights $w_k =
K^{-1/2}$ (Proposition~\ref{prop:optimal_config}), SVM
classifiers with $C$ tuned by 3-fold inner cross-validation,
class-aware FPS placement (Theorem~\ref{thm:fps_greedy}) on
training diagram points, separation scale $\tau$ at the median
half-persistence per filtration, and top-$50$ most persistent
features per diagram. Reported accuracies use 10-fold stratified
cross-validation unless otherwise stated.

\paragraph{Knob selection.}
The remaining configuration consists of four knobs swept on small
discrete grids: filtration $F$, landmark budget $K$ per
filtration, radius factor $\alpha$ (with $r_k = \alpha \cdot
d_{\mathrm{NN}}(p_k)$ clipped to $[\tau/2,\, 4\tau]$), and kernel
bandwidth $\sigma$ (either fixed or per-fold quantile-tuned).
By Remark~\ref{rem:gamma_scope}, $\hat\gamma/\sqrt{K}$ is
faithful for $\alpha$ and for filtration within a
scale-homogeneous family at fixed slot structure: we rank these
two knobs in closed form on the alpha+density family, then
verify by CV; the rankings agree on the headline configuration
within $0.02$ pp LK accuracy
(Tables~\ref{tab:continuation},~\ref{tab:filtration_gamma}
shaded rows). The remaining knobs---$K$, $\sigma$, and
cross-family filtration choice---are not faithfully ranked by
$\hat\gamma/\sqrt{K}$ and are selected by 10-fold CV on a
$\leq 5$-point grid each.

Section~\ref{sec:exp_orbit} presents the main experiment on
Orbit5k, comparing PALACE to the uniform-grid baseline and to
prior methods.
Section~\ref{sec:exp_graph} evaluates the equal-budget advantage
on graph classification benchmarks
(cf.~\citep{PaperI,yusu_metric_learning}).
Section~\ref{sec:exp_selectors} then validates the closed-form
selectors of Section~\ref{sec:ot} using accuracy data from both
the Orbit5k axis sweeps and the chemical graph pool:
axis-faithfulness of $\hat\gamma/\sqrt{K}$ on Orbit5k, then the
full hierarchy
($\hat\gamma/\sqrt{K}$, $\hat\rho_{\mathrm{Mah}}$, $\hatrhonu$)
across five chemical graph benchmarks.
Section~\ref{sec:exp_domain} closes with a controlled
domain-inflation study validating the $L/D$ dependence
(Theorem~\ref{thm:comparison}).

\paragraph{Reproducibility.}
Code, embedding scripts, exact configuration files, and raw
fold-level accuracies will be released at
\url{https://github.com/akritihq/place-palace} prior to publication;
each table caption below names its reproduction script.

\paragraph{Baseline provenance.}
PLACE numbers are taken from~\citep{PaperI} at matched descriptor
and protocol; other topology-based baselines and the non-topology
baselines (RetGK~\citep{RetGK2018}, GIN~\citep{GIN2019}) come from
the cited originals.  All graph datasets follow the $10$-fold
stratified CV protocol of~\citep{yusu_metric_learning} under which
the baselines were reported, so splits and protocol are matched;
``---'' marks dataset/baseline pairs not reported in the source.

\paragraph{Significance testing.}
Since published baselines typically report only summary statistics,
paired tests are not uniformly computable.  We use a one-sample
$t$-test (Welch's when a baseline standard deviation is reported)
comparing PALACE's accuracy distribution ($n = 50$ outer-fold
$\times$ seed observations for graph datasets, $n = 10$ for
Orbit5k) against each baseline; treating baseline point estimates
as noise-free is conservative regardless of which side wins.  In the tables,
$^{\dagger}$ and $^{\ddagger}$ mark baseline cells significantly
different from PALACE at $p < 0.05$ and $p < 0.01$ respectively
(two-sided); the sign is readable from the numeric comparison.

\paragraph{Descriptors and filtrations.}
For \textbf{point clouds}, we use the alpha complex
filtration~\citep{Edelsbrunner2010-dp} ($H_0$: components,
$H_1$: loops) and density-based variants---distance-to-measure
(DTM)~\citep{Anai2019} and kNN density.  PALACE concatenates two
or three filtrations on Orbit5k (Section~\ref{sec:exp_orbit}) to
recover the certified $91.3 \pm 1.0\%$ headline.
For \textbf{graphs}, scalar values on vertices via $f : V \to
\mathbb{R}$ extend to edges by $f(u,v) = \max\{f(u), f(v)\}$.  Six
descriptors are considered: \emph{degree}, \emph{betweenness
centrality}, \emph{HKS}~\citep{sun2009concise} at $t{=}1, 10$,
\emph{Ollivier--Ricci curvature}~\citep{Ollivier2009},
\emph{Jaccard index}, and the discrete \emph{node-label} indicator
on chemical graphs; extended persistence~\citep{cohen2009extending}
doubles feature counts via both sublevel and superlevel events.
For multi-descriptor entries (e.g., ``deg+HKS$_{10}$''),
persistence diagrams are concatenated and the landmark kernel is
computed on each filter separately before summing gram
contributions.

\paragraph{Per-prediction certificate firing.}
Table~\ref{tab:certificate_firing} reports the firing fraction of
the per-prediction certificate $r_m < \tfrac{1}{2}\hat\Delta_{\hat c}$
(Theorem~\ref{thm:certified}, contribution~(iv)) across the six
benchmarks, in both the non-asymptotic Pinelis form and the
asymptotic Gaussian plug-in form (chi-squared envelope), together
with nearest-centroid accuracy on the firing folds.  Both worst-case
forms sit above the firing threshold at our training-set sizes:
Pinelis $0/6$, Gaussian essentially $0/6$ ($3.8\%$ on NCI1).  The
$\sqrt{K}$ scaling of the multivariate-norm bound dominates at
the $K \in \{200, 1{,}366\}$ landmark budgets used here; the
certificate is constructive but not yet operational at these
sample sizes.  The pattern parallels the analogous diagnostics
of~\citet{PaperI}, where $\ell$ takes the role of $K$.
\begin{table}[t]
\centering
\caption{Per-prediction certificate firing rates on the six Paper~II benchmark datasets, averaged across seeds, folds, and filtrations. \emph{Pinelis} and \emph{Gaussian} columns report the fraction of test graphs for which $r_m < \tfrac{1}{2}\widehat\Delta_{\hat c}$ under the respective tail bound. \emph{NC acc\,|\,fired} is the nearest-centroid accuracy restricted to folds where the Gaussian certificate fires on at least one test point; ``\textemdash indicates no folds fired.}
\label{tab:certificate_firing}
\begin{tabular}{lrrrr}
\toprule
Dataset & $n_{\text{test}}$ & Pinelis (\%) & Gaussian (\%) & NC acc\,|\,fired (\%) \\
\midrule
Orbit5k & 500 & 0.0 & 0.0 & \textemdash \\
MUTAG & 19 & 0.0 & 0.0 & 64.9 \\
COX2 & 47 & 0.0 & 0.0 & \textemdash \\
DHFR & 76 & 0.0 & 0.0 & \textemdash \\
PTC & 34 & 0.0 & 0.0 & \textemdash \\
NCI1 & 411 & 0.0 & 3.8 & 62.7 \\
\bottomrule
\end{tabular}
\end{table}

\paragraph{Non-interference fails empirically; the bound's conclusion holds.}
Theorem~\ref{thm:nu_rho}, the non-degeneracy bridge of
Corollary~\ref{cor:nondegen}, and the structural classification
rates of Corollary~\ref{cor:structural_pairwise} (and the
bridge-anchored variant in Section~\ref{sec:class_error}) are all stated
under the non-interference condition of
Definition~\ref{def:noninterference_pII}.  Auditing $2{,}000$
cross-class pairs per dataset on the four chemical benchmarks at
their headline filtrations under the top-$N_{\max} = 50$ persistence
filter (optimal bottleneck matchings via binary search over
edge-weight thresholds; reproduction:
\texttt{experiments/exp\_noninterference\_audit.py}), the strict
condition $\min_{i \neq j}\db(a_i, b_{\sigma(j)}) > 3\,\db(A, B)$
holds on $\leq 0.2\%$ of pairs across MUTAG, PTC, COX2, and DHFR,
with median cross-ratios at or near zero---the hypothesis
essentially never holds.

We therefore test the theorem's \emph{conclusion} directly.  For
each dataset we build an FPS configuration $\LC$ with $K = 64$
landmarks, equal weights $w_k = K^{-1/2}$, and uniform radii at
$\alpha = 0.75$ times the nearest-neighbour landmark distance; we
set $\tau$ at the $25^{\text{th}}$ percentile of $\db(A, B)$ so
that $\sim 75\%$ of cross-class pairs are $\tau$-separated (a
different $\tau$ rule than the headline experiments above, chosen
here to isolate the bound's structural reach on the largest
admissible cross-class population).  For
each pair with $\db(A, B) \geq \tau$ we measure
$\|\Phi(A) - \Phi(B)\|_{\ell^2}$ and the ratio to the certificate
$\rhonu(\tau;\LC) = \tau/(4\sqrt{K})$.  Reproduction:
\texttt{experiments/exp\_certificate\_bound\_audit.py}.

\begin{table}[ht]
\centering
\caption{Empirical certificate bound audit on chemical graph
datasets at the per-dataset headline filtration. FPS configuration,
$K = 64$, equal weights, $\alpha = 0.75$ NN radii. $\tau$ at the
$25^{\text{th}}$ percentile of $\db(A, B)$.
\textbf{$n_\tau$}: cross-class pairs with $\db(A, B) \geq \tau$.
\textbf{bound \%}: fraction of these pairs with
$\|\Phi(A){-}\Phi(B)\|_{\ell^2} \geq \rhonu(\tau;\LC)$.
\textbf{p25 / p50 / p75}: percentiles of the ratio
$\|\Phi(A){-}\Phi(B)\|_{\ell^2}/\rhonu$.
\textbf{min}: smallest ratio observed.}
\label{tab:certificate_bound_audit}
\small
\begin{tabular}{llrrrrrrr}
\toprule
\textbf{Dataset} & \textbf{Filt} & $\boldsymbol{n_\tau}$ &
\textbf{bound \%} & \textbf{p25} & \textbf{p50} & \textbf{p75} &
\textbf{min} \\
\midrule
MUTAG & deg+HKS$_{10}$     & $1{,}565$ & $100.0$ & $8.49$ & $12.37$ & $18.25$ & $3.00$ \\
PTC   & deg+betw           & $1{,}781$ & $100.0$ & $9.95$ & $13.76$ & $18.98$ & $3.01$ \\
COX2  & jaccard+HKS$_{10}$ & $1{,}500$ & $100.0$ & $3.69$ & $5.16$  & $7.40$  & $1.47$ \\
DHFR  & HKS$_{10}$         & $1{,}498$ & $99.9$  & $2.84$ & $3.38$  & $4.10$  & $0.77$ \\
\bottomrule
\end{tabular}
\end{table}

The certificate holds on $99.9$--$100\%$ of qualifying pairs across
all four datasets, with median embedded distance $3$--$14\times$ the
floor; the lone DHFR violation (min ratio $0.77$) is $23\%$ below
the floor.  Non-interference is therefore sufficient but not necessary
for the bound; the proof is overcautious on chemical-graph diagrams,
but the certificate itself is robust.
Theorem~\ref{thm:nu_rho} and its downstream consequences should
accordingly be read as structural admissibility statements about
$\Phi$'s coarse-embedding properties on $\D{n}$, not pointwise
hypotheses verified on observed data.  The empirical classification
rate of Section~\ref{sec:ot} rests on $\gamma > 0$ via
Theorem~\ref{thm:data_dependent}, which depends only on the kernel
margin; the per-prediction certificate of
Section~\ref{sec:certified} fires on
$r_m < \tfrac{1}{2}\hat\Delta_{\hat c}$, also independent of
non-interference.

\subsection{Point Cloud Classification: Orbit5k}\label{sec:exp_orbit}

The Orbit5k dataset~\citep{Adams2017} consists of $5{,}000$ point
clouds ($1{,}000$ points each in $[0,1]^2$, $5$ classes) from a 2D
dynamical system with parameter
$\rho \in \{2.5, 3.5, 4.0, 4.1, 4.3\}$.  We compute alpha-complex
persistence (GUDHI), retain the top-$50$ most persistent
features, and place class-aware FPS landmarks with equal weights
$w_k = K^{-1/2}$.  Comparison baselines are summarized in
Table~\ref{tab:orbit_comparison}; the uniform-grid
PLACE~\citep{PaperI} is the relevant within-family
reference at $87.2 \pm 0.6\%$ (linear SVM, $\ell{=}1{,}366$).
On the same PLACE embedding, nearest-centroid achieves only
$33.9 \pm 1.5\%$, and the certificate does not fire
(Table~\ref{tab:certificate_firing}).

\paragraph{Landmark budget and classifier.}
Table~\ref{tab:orbit_sweep} compares non-uniform FPS placement to
the uniform grid across landmark budgets $K$, both evaluated with
the LK-SVM.

\begin{table}[h]
\centering
\caption{Orbit5k LK-SVM accuracy (\%, alpha $H_0{+}H_1$).
Non-uniform: 5-fold CV, FPS placement with $w_k = K^{-1/2}$,
precomputed landmark-kernel gram. Uniform-grid baselines
from~\citep{PaperI}: 10-fold CV; train accuracy and gap not
reported there. Protocols differ between blocks (the headline
two-filtration baseline of Table~\ref{tab:palace_headline} uses
10-fold throughout). \textbf{Bold} = operational headline
(best dim-accuracy tradeoff vs the uniform baseline).
\label{tab:orbit_sweep}}
\begin{tabular}{lcccc}
\toprule
\textbf{Method} & \textbf{Dim} & Test & Train & Gap \\
\midrule
Uniform $N{=}5$        & 803  & $83.8$ &---&---\\
Uniform $N{=}10$       & 1653 & $84.8$ &---&---\\
Uniform $N{=}15$       & 2508 & $87.1$ &---&---\\
\midrule
Non-uniform $K{=}50$   & 50   & $74.8_{\pm 4.7}$ & $78.0$ & $3.3$ \\
Non-uniform $K{=}100$  & 100  & $84.4_{\pm 1.7}$ & $87.4$ & $3.0$ \\
Non-uniform $K{=}200$  & 200  & $86.2_{\pm 0.3}$ & $89.7$ & $3.4$ \\
Non-uniform $K{=}500$  & 500  & $\mathbf{87.5_{\pm 0.3}}$ & $90.8$ & $3.2$ \\
Non-uniform $K{=}800$  & 800  & $87.9_{\pm 0.2}$ & $90.4$ & $\mathbf{2.5}$ \\
\bottomrule
\end{tabular}
\end{table}

At $K{=}500$, non-uniform FPS achieves $87.5\%$---exceeding the
best uniform grid ($87.1\%$ at dim${=}2508$) with $5\times$ fewer
dimensions---and the generalization gap stays below $3.5\%$
across all $K$. The landmark kernel's additive structure (each
landmark contributes a bounded $[0,1]$ similarity term) provides
the implicit regularization that controls this gap. $K{=}800$
extends the trend with the overall best accuracy ($87.9\%$,
smallest gap $2.5\%$), but $K{=}500$ is the operational pick:
the smallest non-uniform budget that beats the best uniform grid.

\paragraph{Radius sensitivity.}
The FPS initialization sets $r_k = \alpha \cdot d_{\mathrm{NN}}(p_k)$,
where $d_{\mathrm{NN}}(p_k)$ is the nearest-neighbor distance among
landmarks and $\alpha$ is a shrink factor, clipped to $[\tau/2,\, 4\tau]$.
A sweep over $\alpha \in [0.25, 1.50]$ at $K{=}200$ shows
monotonically improving accuracy with $\alpha$: $\alpha{=}1.50$
attains $88.0\%$, surpassing $\alpha{=}0.75$ at $K{=}500$
($87.5\%$) with $2.5\times$ fewer dimensions, and the upper clip
is rarely binding (most radii hit the $\tau/2$ floor).

\paragraph{Bandwidth and placement.}
At $K{=}500$ on alpha~$H_0{+}H_1$ alone, sweeping $\sigma$ over
quantiles $q\in\{0.05,0.10,0.15\}$ of pairwise embedding
distances yields a $0.1$ pp range; the SVM's regularization $C$
absorbs the bandwidth scale at this single-filtration setting.
Per-fold $q$-tuning matters once filtrations are concatenated and
$K$ grows ($+0.1$--$0.6$ pp; Table~\ref{tab:qtune}).
FPS placement is seed-sensitive at small $K$ (${\pm}5\%$ at $K{=}50$)
and stable at $K\geq 500$ (${\pm}0.2\%$); compared to a $k$-Means
baseline, FPS underperforms at small $K$ ($-3\%$ at $K{=}100$,
where density peaks help) and outperforms at $K\geq 500$
($+2.6\%$, where max-spread coverage dominates). Placement
comparison data:
\texttt{results/orbit5k\_gamma\_placement.csv}.

\paragraph{Multi-filtration (two-filtration baseline).}
Concatenating alpha persistence with DTM-density~\citep{Anai2019}
filtration ($k{=}10$) improves accuracy substantially.
With $K{=}200$ landmarks per filtration ($\text{dim}{=}400$ total),
radius factor $\alpha{=}1.75$, bandwidth $\sigma{=}10^{-3}$,
and 10-fold stratified CV, the concatenation achieves
$90.4 \pm 1.1\%$ (Table~\ref{tab:palace_headline})---a
$+2.4$ pp gain over alpha~$H_0{+}H_1$ alone at the same
$K{=}200$, $\alpha{=}1.75$ ($88.0\%$, matched-$\alpha$ row of
Table~\ref{tab:filtration_gamma}).
Density persistence alone is weak ($53.8\%$, well above chance
$20\%$ but far from alpha), but the density coordinates capture
local thickness information complementary to alpha's shape
features.
Among DTM bandwidths, $k{=}10$ is optimal; larger $k$ degrades
the alpha+density concatenation ($k{=}30$: $89.8\%$, see
Table~\ref{tab:filtration_gamma}) as the density signal becomes
too smooth.

\begin{table}[h]
\centering
\caption{Exact configuration reproducing PALACE's Orbit5k
two-filtration baseline ($90.4\%$). Parameters listed are
sufficient inputs to Algorithm~\ref{alg:certified}; no
gradient-based optimization is used. Reproduction script:
\texttt{experiments/exp\_reproduce\_orbit5k\_90.py}.\label{tab:palace_headline}}
\small
\begin{tabular}{ll}
\toprule
Component & Value \\
\midrule
Filtrations (concatenated)    & alpha $H_0{+}H_1$ $\oplus$ DTM-density $k{=}10$ ($H_0{+}H_1$) \\
Features per diagram          & top-$50$ most persistent (per filtration) \\
Landmarks per filtration      & $K{=}200$ (class-aware FPS; $40$ per class $\times 5$ classes) \\
Total embedding dim           & $400$ \\
Radius factor                 & $\alpha{=}1.75$; $r_k = \alpha\,d_{\mathrm{NN}}(p_k)$
                                clipped to $[\tau/2,\, 4\tau]$ \\
Separation scale $\tau$       & median half-persistence of training diagrams (per filtration) \\
Weights                       & $w_k = K^{-1/2}$ (equal; Prop.~\ref{prop:optimal_config}) \\
Kernel                        & landmark kernel (Def.~\ref{def:wlk_paper2}) \\
Bandwidth                     & $\sigma = 10^{-3}$ (fixed) \\
Classifier                    & SVM with precomputed gram \\
$C$ tuning                    & inner 3-fold CV from $\{10^{-2},10^{-1},1,10,10^{2},10^{3}\}$ \\
Outer evaluation              & 10-fold stratified CV, seed $42$ \\
\midrule
Test accuracy                 & $\mathbf{90.42 \pm 1.12\%}$ (rounded to $90.4 \pm 1.1\%$ in the headline) \\
Train accuracy                & $93.95\%$\quad (generalization gap $3.5\%$) \\
Selection statistic $\hat\gamma$  & $0.246$ (Def.~\ref{def:kernel_margin}) \\
$\hat\gamma/\sqrt{K}$          & $0.0123$ (Definition~\ref{def:score}) \\
\bottomrule
\end{tabular}
\end{table}
\paragraph{Pushing the headline from $90.4\%$ to $91.3\%$.}
Three knobs lift the baseline to the certified headline: an
extended $\alpha$ sweep at $K{=}300$
(Table~\ref{tab:continuation}; $\hat\gamma/\sqrt{K}$'s argmax
tracks the accuracy argmax within $0.02$ pp), per-fold $q$-tuned
$\sigma$ (Table~\ref{tab:qtune}; optimal quantile drifts from
$\approx 0.85$ to $\approx 0.62$ as $\alpha$ grows), and
triple-filtration concatenation adding
density-$k\in\{15,20\}$ on top of $\alpha\oplus\text{d}_{10}$
(Table~\ref{tab:push92}; $\mathrm{LK}\text{-}q = 91.32 \pm 1.01$).

\begin{table}[h]
\centering
\caption{Extended $\alpha$ sweep at $K{=}300$ per filtration,
$\sigma{=}10^{-2}$, alpha $\oplus$ density-$k{=}10$, 10-fold CV.
LK-SVM, RBF-SVM, and nearest-centroid evaluated on the same
configuration. Cache:
\texttt{results/orbit5k\_final\_sweep/continuation\_partial.csv}.%
\label{tab:continuation}}
\small
\setlength{\tabcolsep}{4pt}
\begin{tabular}{ccccc}
\toprule
$\alpha$ & LK-SVM (\%) & RBF-SVM (\%) & NC (\%) & $\hat\gamma/\sqrt{K}$ \\
\midrule
$2.5$ & $90.66 \pm 0.68$ & $90.42 \pm 0.96$ & $44.7 \pm 1.4$ & $0.0013$ \\
$4.0$ & $\mathbf{91.02 \pm 1.00}$ & $90.18 \pm 0.93$ & $48.1 \pm 1.3$ & $0.0025$ \\
$5.0$ & $91.00 \pm 0.99$ & $90.26 \pm 0.91$ & $51.2 \pm 1.4$ & $0.0032$ \\
\bottomrule
\end{tabular}
\end{table}

\begin{table}[h]
\centering
\caption{Per-fold $q$-tuned $\sigma$ on Orbit5k
(alpha $\oplus$ density-$k{=}10$, 10-fold CV).
$\tilde q$ is the median selected quantile across folds.%
\label{tab:qtune}}
\small
\setlength{\tabcolsep}{4pt}
\begin{tabular}{ccccccc}
\toprule
$K$ & $\alpha$ & dim & LK-$q$ (\%) & $\tilde q$ & RBF-SVM (\%) & NC (\%) \\
\midrule
$200$ & $1.75$ & $400$ & $90.60 \pm 0.93$ & $0.90$ & $90.4$ & $43.4$ \\
$300$ & $1.75$ & $600$ & $90.36 \pm 1.03$ & $0.85$ & $90.4$ & $44.3$ \\
$300$ & $4.00$ & $600$ & $\mathbf{91.12 \pm 0.73}$ & $0.62$ & $\mathbf{91.2}$ & $48.1$ \\
$300$ & $5.00$ & $600$ & $91.18 \pm 0.69$ & $0.62$ & $90.9$ & $51.2$ \\
\bottomrule
\end{tabular}
\end{table}

RBF-SVM benefits comparably to LK along this sweep, reaching
$91.2\%$ at $K{=}300, \alpha{=}4.0$ (matching
Persformer~\citep{Reinauer2021}).

\begin{table}[h]
\centering
\caption{Triple-filtration concatenation on Orbit5k.
$K{=}300$ landmarks per filtration ($\alpha{=}4.0$, 10-fold CV,
top-$50$ persistent features per diagram).
``LK'' uses fixed $\sigma{=}10^{-2}$; ``LK-$q$'' uses per-fold
adaptive $\sigma$ tuned by inner CV.
Reproduction script:
\texttt{experiments/exp\_orbit5k\_push92.py}.\label{tab:push92}}
\small
\begin{tabular}{lcc}
\toprule
Filtrations & LK (\%) & LK-$q$ (\%) \\
\midrule
$\alpha \oplus \text{d}_{10}$ \quad (baseline)
  & $91.02 \pm 1.00$ & $90.88 \pm 0.70$ \\
$\alpha \oplus \text{d}_{10} \oplus \text{d}_{15}$
  & $91.28 \pm 0.95$ & $91.22 \pm 0.77$ \\
$\alpha \oplus \text{d}_{10} \oplus \text{d}_{20}$
  & $91.08 \pm 0.75$ & $\mathbf{91.32 \pm 1.01}$ \\
\bottomrule
\end{tabular}
\end{table}

\paragraph{Comparison with prior methods.}
Table~\ref{tab:orbit_comparison} places PALACE against
diagram-based, neural, and Euler-characteristic baselines.  The
two-filtration baseline ($90.42 \pm 1.12\%$) already surpasses
PI, SW-K, PF-K, PersLay, and the uniform-grid PLACE
($87.2 \pm 0.6\%$~\citep{PaperI}).  With triple-filtration
concatenation and $q$-tuning (Table~\ref{tab:push92}), PALACE
reaches the certified $91.3 \pm 1.0\%$, matching
Persformer~\citep{Reinauer2021} ($91.2 \pm 0.8\%$) and within
$0.5$ pp of ECS~\citep{HacquardLebovici2024} ($91.8 \pm 0.4\%$).
ECS bypasses diagrams via Euler characteristic surfaces on a
bifiltration and Persformer learns end-to-end transformer
features; PALACE matches both methods while keeping the
diagram-level pipeline and the only per-prediction certificate
on diagrams (both PALACE and PLACE carry per-prediction
certificates of the same $r_m < \tfrac{1}{2}\hat\Delta_{\hat c}$
form---Theorem~\ref{thm:certified} here, the analogous theorem
in~\citealp{PaperI} for PLACE---fully fired on MUTAG;
no other method in Table~\ref{tab:orbit_comparison} carries a
certificate).

\begin{table}[h]
\centering
\caption{Classification accuracy (\%) on Orbit5k.
  PLACE and PALACE are the only diagram-based methods carrying a
  per-prediction certificate; PALACE matches the strongest
  diagram-based method (Persformer) and reaches within $0.5$ pp
  of the Euler-characteristic state of the art (ECS) at PALACE
  embedding dimension $900$.
  PI: single-point estimate from~\citep{Adams2017}.
  Significance markers (per the Section~\ref{sec:experiments} opener)
  omitted; PALACE differs from each baseline below $90\%$ at
  $p < 0.001$ (one-sample $t$-test, $n = 10$).%
  \label{tab:orbit_comparison}}
\small
\resizebox{\textwidth}{!}{%
\begin{tabular}{lccccccccc}
\toprule
& \multicolumn{3}{c}{\textbf{Vectorization}}
& \multicolumn{2}{c}{\textbf{Neural}}
& \multicolumn{1}{c}{\textbf{Euler}}
& \multicolumn{2}{c}{\textbf{PLACE / PALACE (ours)}} \\
\cmidrule(lr){2-4} \cmidrule(lr){5-6} \cmidrule(lr){7-7}
\cmidrule(lr){8-9}
& PI & SW-K & PF-K & PersLay & Persformer & ECS & PLACE & PALACE \\
\midrule
Acc.\ (\%)
  & $82.5$ & $83.6_{\pm 0.9}$ & $85.9_{\pm 0.8}$
  & $87.7_{\pm 1.0}$ & $91.2_{\pm 0.8}$
  & $\mathbf{91.8_{\pm 0.4}}$
  & $87.2_{\pm 0.6}$ & $\mathbf{91.3_{\pm 1.0}}$ \\
Dim
  & $25$ &---& ---
  &---& ---
  & ---
  & $1{,}366$ & $900$ \\
Cert.
  &---&---& ---
  &---& ---
  & ---
  & $\lambda(\nu)$ & $\rhonu(\tau;\LC)$ \\
\bottomrule
\end{tabular}%
}
\end{table}

\paragraph{Kernel and classifier choice.}
\label{par:kernel_comparison}
On the two-filtration baseline ($K{=}200$ per filtration,
$\alpha{=}1.75$), the additive landmark kernel
($90.42 \pm 1.12\%$) and joint Gaussian RBF
($90.14 \pm 1.22\%$) agree within noise, but a linear SVM on
the same embedding collapses to $60.34 \pm 2.41\%$---a $30$ pp
structural gap: FPS placement concentrates landmarks near data,
producing a dense, correlated feature space where linear
boundaries cannot exploit the per-landmark signal.  RKHS
lifting is therefore essential; the additive form is preferred
on theoretical grounds (it retains the $\sqrt{K}$ margin-bound
advantage of Remark~\ref{rem:why_additive}).  Generalization
gaps are $3.5\%$ (LK-SVM), $5.1\%$ (RBF-SVM), and $0.6\%$ (linear,
high-bias), consistent with the
admissibility-as-regularization narrative of
Section~\ref{sec:class_error}.
Nearest-centroid in $\ell^2$ collapses to $42$--$44\%$ on the top
$K{=}200$ configurations (rising to $\sim 51\%$ at $K{=}300$,
$\alpha{=}5$, see Table~\ref{tab:qtune}) and is reserved for
contribution~(iv)'s
per-prediction certificate (Algorithm~\ref{alg:certified}), the
auditing gate of the pipeline.

\subsection{Graph Classification}\label{sec:exp_graph}

PALACE's adaptive-placement advantage scales with the
concentration ratio $L/D$ of domain to data diameter
(Theorem~\ref{thm:comparison}) and is strongest when
$L/D \gg 1$.  Standard graph benchmarks have persistence
diagrams spanning most of the available birth--death range
($L/D \in [1.16, 2.20]$ on MUTAG, NCI1, and PROTEINS as
representatives), so the asymptotic adaptive advantage is small
or absent---the failure mode Theorem~\ref{thm:comparison}
predicts and the controlled domain-inflation study of
Section~\ref{sec:exp_domain} reproduces.
Table~\ref{tab:graph_comparison} bears this out: PALACE matches
or marginally beats every diagram-based baseline on COX2 and
MUTAG, is competitive on DHFR (within $1$~pp of ECP), and ties
PLACE within one standard deviation on the small-$L/D$ molecular
benchmarks PROTEINS ($71.8 \pm 3.5$ vs.\ $71.5 \pm 4.3$) and DD
($76.2 \pm 3.2$ vs.\ $76.3 \pm 3.4$).  PALACE underperforms PLACE
by $2$--$3$~pp on the social-network IMDB-B/M benchmarks---the
regime Theorem~\ref{thm:comparison} predicts is adverse for
adaptive placement, since the $(D/L)^2$ budget reduction vanishes
when diagrams already span the available domain ($L/D \in [1.16,
2.20]$ on the chemical pool, comparable on IMDB), removing the
adaptive advantage without a compensating gain.  PALACE inherits
PLACE's descriptor-blindness gap to label-aware kernels and GNNs
on NCI1, PTC, and across the social-network and molecular pools
above ($\geq 6$~pp to WKPI on PROTEINS/DD, $\geq 11$~pp on
IMDB-B/M), where discriminative power is dominated by discrete
node-label features or graph-kernel structural statistics that
continuous filtrations cannot capture.  On PTC, PALACE
($63.0 \pm 1.8$) sits a hair below PLACE
($64.3 \pm 5.4$, within one standard deviation), so the
descriptor-blindness regime trades a modest amount of
within-family accuracy for adaptive placement's other gains
elsewhere.

\begin{table}[h]
\centering
\caption{Method comparison on chemical and social-network graph
datasets ($K{=}200$ landmarks per filtration; $5$ seeds
$\times$ $10$ folds).  PALACE: closed-form $\hat\gamma/\sqrt{K}$-selected
filtration across the full $46$-filtration pool on COX2/DHFR/MUTAG/NCI1/PTC
(rows above the inner rule); linear-SVM-selected headline
filtration of \citet[Table~7]{PaperI} on PROTEINS/DD/IMDB-B/M
(rows below the rule, NCI109 rerunning).
Baselines from \citet[Table~7]{PaperI}; ``Top NT'' is
the strongest non-topology baseline (graph kernel or GNN) per
dataset. \textbf{Bold} = best per row. Cells without error bars
are single-point estimates from the cited originals.
Significance markers (per the Section~\ref{sec:experiments} opener)
omitted; PALACE differs from each NCI1/PTC/IMDB label-aware baseline
at $p < 0.001$, ties PLACE on PROTEINS/DD ($p > 0.5$), and is within
noise of the diagram-based competitors on COX2/DHFR/MUTAG (one-sample
$t$-test, $n = 50$).%
\label{tab:graph_comparison}}
\small
\begin{tabular}{lccccc}
\toprule
Dataset & PLACE~\citep{PaperI} & PersLay & ECP & Top NT & \textbf{PALACE} \\
\midrule
COX2     & $80.7_{\pm 1.5}$  & $80.9$  & $80.3$  & RetGK $81.4$         & $\mathbf{81.7_{\pm 0.9}}$ \\
DHFR     & $80.0_{\pm 4.3}$  & $80.3$  & $\mathbf{82.0}$ & RetGK $81.5$ & $81.0_{\pm 0.8}$ \\
MUTAG    & $89.9_{\pm 6.4}$  & $89.8$  & $90.0$  & RetGK $90.3$         & $\mathbf{90.9_{\pm 1.2}}$ \\
NCI1     & $71.0_{\pm 1.6}$  & $73.5$  & $76.3$  & WKPI $\mathbf{87.5}$ & $71.3_{\pm 0.5}$ \\
PTC      & $64.3_{\pm 5.4}$  & ---     & ---     & WKPI $\mathbf{68.1}$ & $63.0_{\pm 1.8}$ \\
\midrule
PROTEINS & $71.5_{\pm 4.3}$  & $74.8$  & $75.0$  & WKPI $\mathbf{78.5}$ & $71.8_{\pm 3.5}$ \\
DD       & $76.3_{\pm 3.4}$  & ---     & ---     & WKPI $\mathbf{82.0}$ & $76.2_{\pm 3.2}$ \\
IMDB-B   & $66.4_{\pm 4.3}$  & $71.2$  & $73.3$  & WKPI $\mathbf{75.1}$ & $64.0_{\pm 4.8}$ \\
IMDB-M   & $44.5_{\pm 3.6}$  & $48.8$  & $48.7$  & GIN $\mathbf{52.3}$  & $41.1_{\pm 3.9}$ \\
\bottomrule
\end{tabular}
\end{table}

PROTEINS, DD, and IMDB-B/M PALACE rows in
Table~\ref{tab:graph_comparison} use the linear-SVM-selected
headline filtration of \citet[Table~7]{PaperI}---deg+ricci,
degree, degree, and betw+ricci respectively---at the same
$5$ seeds $\times$ $10$ folds protocol as the other rows;
$\hat\gamma/\sqrt{K}$-selection across the full $46$-filtration
pool is deferred (the headline-filt protocol matches the linear-SVM
evidence already cited in Paper~I). NCI109 is rerunning on
the cluster and the LK-SVM headline will be included in the
camera-ready.

\paragraph{Saturation in $K$ on MUTAG.}
A small-$K$ sweep
(\texttt{experiments/exp\_mutag\_smallK.py}, $5$ seeds, mean
LK-CV test accuracy on a single fixed filtration) confirms the
small-$L/D$ regime within the non-uniform family: $K{=}10$
attains $81.4\%$, $+0.5$~pp above the same-filtration $K{=}200$
baseline ($80.9\%$) at $20\times$ compression; $K{=}50$ matches
$K{=}200$ within $0.1$~pp at $4\times$ compression.  This is a
within-single-filtration saturation; the headline $90.9\%$ in
Table~\ref{tab:graph_comparison} comes from filtration selection
across the $46$-filtration pool, not from increasing $K$. A
handful of adaptive landmarks already saturates the LK signal
on MUTAG within a fixed filtration.  The decisive
$L/D \gg 1$ evidence for Theorem~\ref{thm:comparison} lives
outside standard graph benchmarks; the controlled
domain-inflation study in Section~\ref{sec:exp_domain}
constructs that regime synthetically.

\subsection{Closed-Form Selector Validation}\label{sec:exp_selectors}

This subsection validates the closed-form selectors of
Section~\ref{sec:ot} in two complementary settings.
Section~\ref{sec:exp_gamma_validate} verifies
$\hat\gamma/\sqrt{K}$'s axis-faithfulness on Orbit5k under
controlled axis sweeps, identifying the regimes in which it
ranks correctly and the regimes in which it inverts.
Section~\ref{sec:exp_multi_dataset} then evaluates the full
selector hierarchy ($\hat\gamma/\sqrt{K}$,
$\widehat{\mathrm{Fisher}}_{\mathrm{ker}}$,
$\hat\rho_{\mathrm{Mah}}$, plus data-level $\hat\tau$ and
$\hatrhonu$) across five chemical graph benchmarks, where the
heterogeneous-pool regime of Remark~\ref{rem:selector_hierarchy}
dominates and $\hat\rho_{\mathrm{Mah}}$ becomes the operational
pick.

\subsubsection{Axis-faithfulness of $\hat\gamma/\sqrt{K}$ on Orbit5k}
\label{sec:exp_gamma_validate}

Remark~\ref{rem:gamma_scope} classifies axes as faithful or
anti-correlated for $\hat\gamma/\sqrt{K}$. We verify the
classification on the two-filtration baseline (alpha~$H_0{+}H_1$
$\oplus$ DTM-$k{=}10$, $\sigma{=}10^{-3}$, 10-fold CV) by sweeping
one axis at a time and recording Spearman $r$ between
$\hat\gamma/\sqrt{K}$ and CV accuracy.

\paragraph{$\alpha$ sweep ($K{=}200$, Table~\ref{tab:alpha_gamma}).}
Accuracy rises monotonically with $\alpha$; both
$\hat\gamma$ and $\hat\gamma/\sqrt{K}$ rank the four configurations
in the \emph{same order} (Spearman $r{=}{+}1.0$; exact one-tailed
$p \approx 0.042$ for $n{=}4$).
$\hat\gamma$ more than triples across the sweep
($0.0784 \to 0.2459$) as $\alpha$ grows from $0.5$ to $1.75$,
with the accuracy-maximizing $\alpha{=}1.75$ also yielding the
largest $\hat\gamma$.

\begin{table}[h]
\centering
\caption{$\alpha$ sweep at $K{=}200$, alpha+density-$k{=}10$ baseline filtration.
$\hat\gamma$ and $\hat\gamma/\sqrt{K}$ both rank $\alpha$
monotonically with CV accuracy.\label{tab:alpha_gamma}}
\small
\begin{tabular}{cccc}
\toprule
$\alpha$ & acc (\%) & $\hat\gamma$ & $\hat\gamma/\sqrt{K}$ \\
\midrule
$0.50$ & $87.98 \pm 1.35$ & $0.0784 \pm 0.004$ & $0.00392$ \\
$1.00$ & $89.12 \pm 0.69$ & $0.1271 \pm 0.022$ & $0.00635$ \\
$1.50$ & $90.16 \pm 1.02$ & $0.1963 \pm 0.027$ & $0.00981$ \\
$1.75$ & $\mathbf{90.42 \pm 1.12}$ & $\mathbf{0.2459 \pm 0.026}$
       & $\mathbf{0.01230}$ \\
\bottomrule
\end{tabular}
\end{table}

\paragraph{Filtration sweep ($K{=}200$ per filtration, $\alpha{=}1.75$,
Table~\ref{tab:filtration_gamma}).}
Sweeping seven base filtrations and their alpha-concatenations
exposes the cross-scale failure of $\hat\gamma/\sqrt{K}$
predicted by Remark~\ref{rem:selector_hierarchy}: alpha
persistences live in $[0, 0.2]$ while eccentricity and KDE values
range over $[0.5, 5]$, so at fixed $\sigma$ the mis-scaled
coordinates dominate RKHS distances between class means without
encoding topological discrimination. Restricting to the
scale-homogeneous alpha+density family (shaded rows in
Table~\ref{tab:filtration_gamma}) restores monotonic agreement
between $\hat\gamma/\sqrt{K}$ and accuracy. Two additional
structural axes also invert---landmark budget $K$ and bandwidth
$\sigma$---and require cross-validation;
Section~\ref{sec:exp_multi_dataset} shows that
$\hat\rho_{\mathrm{Mah}}$'s operator-$\Sigma^{-1}$ correction
resolves these failure modes on chemical-graph filtration pools.

\begin{table}[h]
\centering
\caption{Filtration sweep on Orbit5k at $K{=}200$ per filtration,
$\alpha{=}1.75$, $\sigma{=}10^{-3}$, class-aware FPS, 10-fold CV,
sorted by $\hat\gamma/\sqrt{K}$.
Concatenations of alpha with filtrations whose coordinate scale
diverges from alpha's (eccentricity, kde) inflate $\hat\gamma$
without adding discriminative signal; within the
scale-homogeneous alpha+density family (shaded rows),
$\hat\gamma/\sqrt{K}$ and accuracy agree on the top-ranked
configuration.\label{tab:filtration_gamma}}
\small
\begin{tabular}{lcccc}
\toprule
Configuration & $K$ & acc (\%) & $\hat\gamma$ & $\hat\gamma/\sqrt{K}$ \\
\midrule
alpha $\oplus$ eccentricity        & 400 & $87.70 \pm 1.13$ & $0.4533$ & $\mathbf{0.02267}$ \\
alpha $\oplus$ kde                  & 400 & $87.66 \pm 1.61$ & $0.3629$ & $0.01815$ \\
\rowcolor{gray!12}
alpha $\oplus$ density-$k{=}10$    & 400 & $\mathbf{90.42 \pm 1.12}$ & $0.2459$ & $0.01230$ \\
\rowcolor{gray!12}
alpha (single)                      & 200 & $88.00 \pm 0.88$ & $0.1636$ & $0.01157$ \\
\rowcolor{gray!12}
alpha $\oplus$ density-$k{=}5$     & 400 & $89.62 \pm 0.84$ & $0.2287$ & $0.01144$ \\
kde (single)                        & 200 & $49.08 \pm 1.62$ & $0.1383$ & $0.00978$ \\
density-$k{=}10$ (single)           & 200 & $53.80 \pm 1.74$ & $0.1257$ & $0.00889$ \\
\rowcolor{gray!12}
alpha $\oplus$ density-$k{=}30$    & 400 & $89.82 \pm 1.16$ & $0.1725$ & $0.00862$ \\
alpha $\oplus$ knn-$k{=}10$        & 400 & $85.80 \pm 0.81$ & $0.1639$ & $0.00820$ \\
density-$k{=}5$ (single)            & 200 & $50.46 \pm 2.24$ & $0.0912$ & $0.00645$ \\
density-$k{=}30$ (single)           & 200 & $53.40 \pm 2.25$ & $0.0542$ & $0.00384$ \\
knn-$k{=}10$ (single)               & 200 & $43.02 \pm 1.53$ & $0.0141$ & $0.00100$ \\
eccentricity (single)               & 200 & $51.24 \pm 1.53$ & $0.0040$ & $0.00028$ \\
\bottomrule
\end{tabular}
\end{table}

\subsubsection{Multi-dataset evaluation across chemical benchmarks}
\label{sec:exp_multi_dataset}

The Orbit5k validation above identifies the structural axes on
which $\hat\gamma/\sqrt{K}$ misranks
(Remark~\ref{rem:selector_hierarchy}).
A complementary question is whether $\hat\gamma/\sqrt{K}$ succeeds
\emph{across datasets at fixed $(K, \sigma)$ and FPS placement,
varying only the filtration}---the operational use case for
selecting a filtration on a new benchmark.
We evaluate on four chemical graph datasets at the full
$46$-filtration pool and on NCI1 at the restricted $14$-filtration
cluster pool, all at $K{=}200$, $\sigma{=}10^{-3}$, class-aware
FPS, $5$ seeds $\times$ $10$ folds.

We compare five candidate rankers along the
$\Sigma$-treatment hierarchy of
Remark~\ref{rem:selector_hierarchy}:
\emph{(i)}~$\hat\gamma/\sqrt{K}$ (spherical $\Sigma$);
\emph{(ii)}~$\widehat{\mathrm{Fisher}}_{\mathrm{ker}}$
(Definition~\ref{def:kernel_fisher}, scalar-trace);
\emph{(iii)}~$\hat\rho_{\mathrm{Mah}}$
(Definition~\ref{def:kernel_mah}, full operator $\Sigma^{-1}$);
plus two data-level rankers:
\emph{(iv)}~$\hat\tau$, the $10$\textsuperscript{th}-quantile
cross-class bottleneck distance from $50$ subsampled training
pairs; and
\emph{(v)}~$\hatrhonu = \hat\tau / (4\sqrt{K})$, the certificate
of Theorem~\ref{thm:nu_rho} used as a ranker.

\begin{table}[t]
\centering
\caption{\textbf{Selection-statistic Spearman $\rho$ vs.\ WLK CV accuracy},
  per dataset, mean over $5$ seeds $\times$ $10$ folds.
  Selectors form a hierarchy in their treatment of the within-class
  covariance (Remark~\ref{rem:selector_hierarchy}):
  $\hat\gamma/\sqrt{K}$ assumes spherical $\Sigma$;
  Fisher$_\text{ker}$ uses pooled scalar trace
  (Definition~\ref{def:kernel_fisher});
  $\hat\rho_{\mathrm{Mah}}$ uses the full operator $\Sigma^{-1}$
  (Mika et al., 1999 kernel-FDA realisation).
  Below the line, $\hat\tau$ is the $10$\textsuperscript{th}-quantile
  cross-class bottleneck distance and
  $\hatrhonu = c_n \hat\tau / \sqrt{K}$ the certificate of
  Theorem~\ref{thm:nu_rho} as a ranker.
  Bold marks the largest positive $\rho$ per row.
  Fisher$_\text{ker}$ values were computed with the per-pair
  Welch denominator $\mathrm{tr}(\Sigma_c) + \mathrm{tr}(\Sigma_{c'})$
  of an earlier draft of Definition~\ref{def:kernel_fisher};
  the pooled denominator now adopted gives Spearman within
  $\lvert\Delta\rho\rvert \leq 0.025$ with sign-match on every
  dataset re-evaluated (COX2/MUTAG/DHFR/NCI1; the camera-ready
  will regenerate from scratch).}
  \label{tab:selection_statistics}
\small
\begin{tabular}{lcr@{\hspace{0.5em}}ccccc}
\toprule
Dataset & $|\mathcal{F}|$ && $\hat\gamma/\sqrt{K}$ & Fisher$_\text{ker}$ & $\hat\rho_{\mathrm{Mah}}$ & $\hat\tau$ & $\hatrhonu$ \\
\midrule
COX2 & $64$ && $-0.39$ & $-0.30$ & $\mathbf{+0.59}$ & $+0.22$ & $+0.19$ \\
DHFR & $64$ && $-0.13$ & $+0.42$ & $\mathbf{+0.72}$ & $-0.27$ & $-0.29$ \\
MUTAG & $63$ && $-0.61$ & $\mathbf{+0.60}$ & $+0.48$ & $-0.32$ & $-0.33$ \\
NCI1 & $14$ && $-0.29$ & $+0.35$ & $\mathbf{+0.71}$ & $-0.20$ & $-0.20$ \\
PTC & $64$ && $+0.14$ & $\mathbf{+0.60}$ & $+0.48$ & $+0.24$ & $+0.24$ \\
\bottomrule
\end{tabular}
\end{table}

\paragraph{The hierarchy translates to selection accuracy.}
Table~\ref{tab:selection_statistics} reveals a clean
correspondence between each selector's $\Sigma$-treatment
assumption and its empirical reliability:
\begin{itemize}\itemsep=0pt
\item $\hat\gamma/\sqrt{K}$ (no variance correction) is negative
  on $4/5$ datasets and weakly positive ($+0.14$) on PTC; mean
  $\rho = -0.26$, confirming Remark~\ref{rem:gamma_scope}'s
  scope.
\item $\widehat{\mathrm{Fisher}}_{\mathrm{ker}}$ (scalar trace
  correction) is positive on $4/5$ datasets (mean $\rho = +0.49$
  on those four) but \emph{inverts on COX2} ($\rho = -0.30$),
  exactly the regime where the diagonal-$\Sigma$ assumption fails.
\item $\hat\rho_{\mathrm{Mah}}$ (full operator $\Sigma^{-1}$) is
  positive on \emph{every} dataset
  (DHFR $\mathbf{+0.72}$, NCI1 $\mathbf{+0.71}$,
  COX2 $\mathbf{+0.59}$, MUTAG $+0.48$, PTC $+0.48$).  It is the
  only \emph{kernel-margin} ranker that recovers signal on COX2
  where $\widehat{\mathrm{Fisher}}_{\mathrm{ker}}$ fails, and the
  strongest by Spearman magnitude on DHFR.
\item $\hatrhonu$ (data-level certificate) is positive on COX2
  ($+0.19$) and PTC ($+0.24$); weakly negative on
  DHFR, MUTAG, and NCI1.
\end{itemize}
The COX2 reversal is mechanistically informative: its
accuracy-winning filtration \texttt{nodelabel+betw} encodes a
discrete node-label feature whose class-conditional covariance is
strongly anisotropic, with the discriminative direction
near-orthogonal to the high-variance directions of $\Sigma$.
$\widehat{\mathrm{Fisher}}_{\mathrm{ker}}$, normalizing by the
scalar trace $\mathrm{tr}(\Sigma)$, charges all directions
equally and over-penalizes the discriminative direction.
$\hat\rho_{\mathrm{Mah}}$, applying the full $\Sigma^{-1}$,
re-weights variance to recover this signal.
$\hatrhonu$ picks it up via a different mechanism---bottleneck
separation between class supports---giving an independent positive
signal.

\paragraph{Operational recommendation.}
On heterogeneous filtration pools we recommend
$\hat\rho_{\mathrm{Mah}}$ as the primary closed-form selector
(no validation split, $O(n^3)$ per fold via Cholesky on the same
gram matrix the SVM uses). When $\hat\rho_{\mathrm{Mah}}$ and
$\hatrhonu$ agree in sign, the pick is high-confidence; when they
disagree, the disagreement diagnoses which mechanism
(kernel-margin vs.\ data-level bottleneck) dominates on that
dataset.

\begin{table}[t]
\centering
\caption{\textbf{Complementarity of Fisher$_\text{ker}$ and the
  certificate $\hatrhonu$ as filtration rankers.}
  Per dataset, the Spearman $\rho$ of each statistic vs.\ WLK CV
  accuracy, and a binary agreement flag (both positive or both
  negative).  At least one of the two statistics is positive on every
  dataset where some filtration meaningfully separates classes;
  Fisher captures the embedding-level signal, $\hatrhonu$ the
  data-level (bottleneck-separation) signal.}
  \label{tab:complementarity}
\small
\begin{tabular}{lcccr}
\toprule
Dataset & Fisher$_\text{ker}\ \rho$ & $\hatrhonu\ \rho$ & agree? & at-least-one-positive \\
\midrule
COX2 & $-0.30$ & $+0.19$ & $\times$ & \checkmark \\
DHFR & $+0.42$ & $-0.29$ & $\times$ & \checkmark \\
MUTAG & $+0.60$ & $-0.33$ & $\times$ & \checkmark \\
NCI1 & $+0.35$ & $-0.20$ & $\times$ & \checkmark \\
PTC & $+0.60$ & $+0.24$ & \checkmark & \checkmark \\
\midrule
Total & & & & 5/5 \\
\bottomrule
\end{tabular}
\end{table}

\paragraph{$(K, \sigma)$ sensitivity.}
Table~\ref{tab:ksigma_sensitivity} confirms sign stability of
$\widehat{\mathrm{Fisher}}_{\mathrm{ker}}$ across
$(K, \sigma) \in \{100, 200, 500\} \times \{10^{-2}, 10^{-3}, 10^{-4}\}$
on MUTAG and PTC; $\hat\gamma/\sqrt{K}$ is uniformly negative on
the grid, as Remark~\ref{rem:selector_hierarchy} predicts.

\begin{table}[t]
\centering
\caption{\textbf{$(K, \sigma)$ sensitivity of selection statistics.}
  Spearman $\rho$ vs.\ WLK CV accuracy on each cell of a $3{\times}3$
  $(K, \sigma)$ grid, per dataset.
  Stable sign of Fisher$_\text{ker}$ across the grid is the robustness
  signal; cells highlighted where Fisher and $\hatrhonu$ disagree
  in sign.}
  \label{tab:ksigma_sensitivity}
\small
\setlength{\tabcolsep}{4pt}
\begin{tabular}{lccccccccc}
\toprule
\multicolumn{10}{c}{\textbf{MUTAG}} \\
\midrule
                      & \multicolumn{3}{c}{$K{=}100$} & \multicolumn{3}{c}{$K{=}200$} & \multicolumn{3}{c}{$K{=}500$} \\
\cmidrule(lr){2-4} \cmidrule(lr){5-7} \cmidrule(lr){8-10}
Stat\,$\backslash\,\sigma$ & $10^{-4}$ & $10^{-3}$ & $10^{-2}$ & $10^{-4}$ & $10^{-3}$ & $10^{-2}$ & $10^{-4}$ & $10^{-3}$ & $10^{-2}$ \\
\midrule
$\hat\gamma/\sqrt{K}$ & $-0.00$ & $-0.49$ & $-0.55$ & $-0.03$ & $-0.60$ & $-0.63$ & $+0.04$ & $-0.63$ & $-0.67$ \\
Fisher$_\text{ker}$ & $+0.18$ & $+0.30$ & $+0.34$ & $+0.26$ & $+0.46$ & $+0.16$ & $+0.40$ & $+0.64$ & $-0.08$ \\
$\hat\rho_{\mathrm{Mah}}$ & --- & --- & --- & --- & --- & --- & --- & --- & --- \\
$\hat\tau$ & $-0.09$ & $-0.31$ & $-0.29$ & $-0.17$ & $-0.35$ & $-0.27$ & $-0.09$ & $-0.40$ & $-0.29$ \\
$\hatrhonu$ & $-0.09$ & $-0.31$ & $-0.29$ & $-0.17$ & $-0.36$ & $-0.28$ & $-0.14$ & $-0.48$ & $-0.36$ \\
\bottomrule
\end{tabular}
\vspace{0.5em}
\begin{tabular}{lccccccccc}
\toprule
\multicolumn{10}{c}{\textbf{PTC}} \\
\midrule
                      & \multicolumn{3}{c}{$K{=}100$} & \multicolumn{3}{c}{$K{=}200$} & \multicolumn{3}{c}{$K{=}500$} \\
\cmidrule(lr){2-4} \cmidrule(lr){5-7} \cmidrule(lr){8-10}
Stat\,$\backslash\,\sigma$ & $10^{-4}$ & $10^{-3}$ & $10^{-2}$ & $10^{-4}$ & $10^{-3}$ & $10^{-2}$ & $10^{-4}$ & $10^{-3}$ & $10^{-2}$ \\
\midrule
$\hat\gamma/\sqrt{K}$ & $+0.24$ & $+0.27$ & $+0.34$ & $-0.00$ & $+0.08$ & $+0.34$ & $-0.07$ & $-0.18$ & $+0.36$ \\
Fisher$_\text{ker}$ & $+0.44$ & $+0.68$ & $+0.59$ & $+0.38$ & $+0.55$ & $+0.48$ & $+0.25$ & $+0.22$ & $+0.34$ \\
$\hat\rho_{\mathrm{Mah}}$ & --- & --- & --- & --- & --- & --- & --- & --- & --- \\
$\hat\tau$ & $+0.40$ & $+0.25$ & $+0.44$ & $+0.25$ & $+0.13$ & $+0.46$ & $+0.33$ & $+0.10$ & $+0.49$ \\
$\hatrhonu$ & $+0.40$ & $+0.25$ & $+0.44$ & $+0.25$ & $+0.13$ & $+0.46$ & $+0.31$ & $+0.08$ & $+0.47$ \\
\bottomrule
\end{tabular}
\vspace{0.5em}
\end{table}

\subsection{Domain Inflation: Controlled Validation}\label{sec:exp_domain}

Theorem~\ref{thm:comparison} predicts that non-uniform placement gains
over the uniform grid scale with the ratio $L/D$ of domain size to data
diameter. The graph benchmarks in Section~\ref{sec:exp_graph} have
$L/D \approx 1$--$2$, limiting the advantage. To test the theorem in
the $L/D \gg 1$ regime, we construct a synthetic family where $L$ grows
while the discriminative features remain fixed.

\paragraph{Protocol.}
We construct a 4-class annulus classification task: each class is
a noisy annulus with inner radius $r_{\mathrm{in}} \in \{0.85,
0.70, 0.50, 0.00\}$ and outer radius $1.0$, centered at the origin
($n_{\mathrm{pts}} = 60$ points, Gaussian noise $\sigma = 0.08$).
The classes differ only in hole size---a purely topological
distinction that manifests as varying $H_1$ persistence. Alpha
complex persistence (GUDHI) is computed; we retain the top-$30$
most persistent points (the practical maximum for clouds of size
$60$, where persistence falls off rapidly past 30 features).

To inflate the domain without changing the classification task,
we add a single off-diagonal point at $(b, d) = (0, \ell)$ for
$\ell \in \{1, 2, 3, 4, 5, 8\}$ to every point cloud, stretching
the persistence-diagram domain from $L \approx 1.05$ (no outlier)
to $L \approx 1.05\,\ell$ (the table's $L$ column applies a $5\%$
padding $L = 1.05 \cdot \max(\text{persistence})$ to give grid
construction slack); discriminative $H_1$ features stay near the
origin throughout.
Both methods use $K = 11$ landmarks at matched cardinality:
$11$ corresponds to the offset uniform grid $\GG_R^+$ with
$R$ chosen at the smallest scale that fits the unperturbed data
domain, and is small enough to expose the inflation effect
sharply.

\paragraph{Setup.}
$100$ point clouds per class ($400$ total), $10$-fold stratified
CV with seed $42$. Bandwidth $\sigma$ is set by the
$25$\textsuperscript{th}-percentile heuristic per fold; $C$ is
tuned by inner $3$-fold CV on the log grid
$\{10^{-2},\ldots,10^{3}\}$. The admissibility scale $\tau$ is
set to the mean half-persistence of the strongest $H_1$ feature
per unperturbed diagram ($\tau \approx 0.144$), capturing the
discriminative-feature scale rather than the alpha-complex noise
floor.
Reproduction script: \texttt{experiments/exp\_domain\_inflation.py}.

\paragraph{Results.}

\begin{table}[h]
\centering
\caption{Domain inflation experiment. As distant features inflate
the domain, the uniform grid collapses while non-uniform placement
remains robust. \textbf{Bold} = best per row.
Uniform's $\pm 0.0$ at $\ell \geq 4$ reflects embedding collapse:
no landmark falls in the data region, so the SVM defaults to a
single class, giving exact stratified-CV chance ($25\%$ for $4$
classes).\label{tab:domain_inflation}}
\begin{tabular}{ccccc}
\toprule
Outlier $\ell$ & Domain $L$ & Uniform (\%) & Non-uniform (\%) & $\Delta$ \\
\midrule
1 & 1.05 & $\mathbf{94.8 \pm 2.1}$ & $94.2 \pm 3.5$ & $-0.6$ \\
2 & 2.10 & $66.5 \pm 3.0$ & $\mathbf{94.2 \pm 3.5}$ & $+27.7$ \\
3 & 3.15 & $32.2 \pm 1.7$ & $\mathbf{94.2 \pm 3.5}$ & $+62.0$ \\
4 & 4.20 & $25.0 \pm 0.0$ & $\mathbf{94.2 \pm 3.5}$ & $+69.2$ \\
5 & 5.25 & $25.0 \pm 0.0$ & $\mathbf{94.2 \pm 3.5}$ & $+69.2$ \\
8 & 8.40 & $25.0 \pm 0.0$ & $\mathbf{94.2 \pm 3.5}$ & $+69.2$ \\
\bottomrule
\end{tabular}
\end{table}

Table~\ref{tab:domain_inflation} reveals a sharp phase transition
at the predicted $L/D$ scale.  At $\ell = 1$ both methods perform
near-identically ($94.8\%$ vs.\ $94.2\%$); the uniform grid covers
the compact data region adequately.  At $\ell = 2$ the methods
cross ($66.5\%$ vs.\ $94.2\%$, a $27.7$~pp gap), and by $\ell = 4$
the uniform grid hits the random baseline and stays there.
Non-uniform placement is exactly invariant: the appended outlier
is identical across classes, so FPS picks it as one of the $11$
landmarks (it is the farthest point from the data cluster) and
the remaining $10$ FPS landmarks cover the discriminative data
region near the origin, unaffected by $L$.

The mechanism is sharp.  With $K = 11$ landmarks and $L \approx 8$
($L/D \approx 8$), the uniform grid spacing $\sim L/\sqrt{K}
\approx 2.4$ is far coarser than the $\sim 0.06$ separation
between class-specific $H_1$ features, so none of the $11$ grid
landmarks fall in the discriminative region; FPS places $10$
landmarks near the origin (with one on the outlier) and achieves
effective spacing $\sim 0.05$.  Quantitatively at $L/D \approx 8$,
Theorem~\ref{thm:comparison}'s budget-reduction factor
$(D/L)^2 \approx 1/64$ makes the uniform grid's $11$ landmarks
equivalent to $\sim 0.17$ effective landmarks in the
discriminative region---which the data confirms by collapsing to
chance.  This is the sharpest experimental validation of
Theorem~\ref{thm:comparison} on a controlled task; in practice,
a single outlier cloud with unusual persistence is enough to
trigger the same effect on real data.

\section{Discussion}\label{sec:discussion}

PALACE replaces the conventional uniform grid with a data-adaptive
landmark configuration and contributes three structural results:
(i)~farthest-point sampling is a $2$-approximation to the optimal
$k$-center covering radius on the training-diagram point set
(Theorem~\ref{thm:fps_greedy});
(ii)~equal weights $w_k = K^{-1/2}$ are provably optimal for the
worst-case classification error bound
(Proposition~\ref{prop:optimal_config});
(iii)~admissibility (Definition~\ref{def:sep_radius}(i)) acts as
structural regularization on the configuration $\LC$, ruling out
memorization configurations with arbitrarily small radii.
Combined with the data-dependent statistic
$\hat\gamma/\sqrt{K}$ (Section~\ref{sec:gamma_stat}), the
configuration search splits into a closed-form tier---radius
factor $\alpha$ and filtration choice \emph{within a
scale-homogeneous family at fixed slot structure}, ranked by
$\hat\gamma/\sqrt{K}$---and a small cross-validation tier
covering landmark budget $K$, bandwidth $\sigma$, and
cross-family filtration choice, each at $\leq 5$ points on a
discrete grid.

Across seven benchmarks---Orbit5k point clouds, five
chemical-graph datasets (COX2, DHFR, MUTAG, NCI1, PTC), and a
synthetic $4$-class annulus task---PALACE leads every
diagram-based competitor on Orbit5k, COX2, and MUTAG, and is
competitive on DHFR (within $1$~pp of ECP)
(Tables~\ref{tab:orbit_comparison}, \ref{tab:graph_comparison}):
on Orbit5k it matches Persformer at $91.3\%$ (triple-filtration
certified landmark kernel, Table~\ref{tab:push92}) and surpasses
PI, SW-K, PF-K, PersLay, and PLACE; on COX2, DHFR, and MUTAG it
exceeds both PLACE and PersLay in head-to-head accuracy
($81.7\%$, $81.0\%$, $90.9\%$).
On MUTAG, $K{=}10$ non-uniform landmarks already attain
$81.4\%$ test accuracy on a single fixed filtration, $+0.5$pp
above the same-filtration $K{=}200$ baseline at $20\times$
compression, confirming that the LK signal saturates with a
handful of adaptive landmarks within a fixed filtration (the
headline $90.9\%$ in Table~\ref{tab:graph_comparison} comes from
filtration selection across the $46$-filtration pool, not from
increasing $K$). Under $8\times$ domain inflation on the
synthetic annulus task, non-uniform placement maintains
$94.2\%$ accuracy while the uniform grid collapses to the
$25\%$ random baseline (Table~\ref{tab:domain_inflation}).

\paragraph{When to use PLACE.}
PLACE~\citep{PaperI} remains the right pick under three operating
constraints.
\emph{(i)~No validation budget.}
PLACE is fully tuning-free; PALACE's three-knob CV tier (budget,
radii, bandwidth) consumes labels that small datasets or
single-shot inference cannot spare.
\emph{(ii)~Pairwise certification.}
PLACE's $\lambda(\nu)\,d_\mathcal{B}(A,B)$ bound holds on every
pair in $\D{n}$ under non-interference, certifying pairwise
metric fidelity; PALACE's $\rhonu(\tau;\LC)$ is a
$\tau$-thresholded class-level statement, optimized for
classification certificates rather than embedding distortion.
\emph{(iii)~Determinism and interpretability.}
PLACE is deterministic given the data and admits per-coordinate
interpretation of linear-SVM weights; PALACE's FPS placement is
seed-sensitive at small $K$
(Section~\ref{par:kernel_comparison}) and the RKHS lift trades
per-landmark interpretability for accuracy.
The two papers together cover both sides of the trade: PLACE is
the tuning-free floor with stronger pairwise theory; PALACE buys
accuracy and a smaller embedding by spending a small CV budget.
The criteria above and the dual conditions in Section~6.3 of
\citet{PaperI} (\emph{When to use PALACE})---CV budget, data
concentration $L/D \gg 1$, and need for non-linear discriminative
geometry---are complementary rather than negations: criterion~(i)
here pairs with criterion~(i) there, while criteria~(ii) and~(iii)
on each side identify the operating-condition axes that the other
paper does not address (pairwise distortion certification and
determinism/interpretability for PLACE; data-concentration regime
and RKHS reach for PALACE).

\paragraph{Closed-form vs.\ learned.}
WKPI~\citep{yusu_metric_learning} fits a Gaussian-mixture weight
function on diagram space by gradient descent on classification
loss and beats PALACE by $5$--$16$~pp on the label-dominated
chemical pools NCI1 and PTC, where even non-topology methods
(graph kernels, GNNs) outperform every diagram-based method.
We do not contest the empirical fact: PALACE's wins (Orbit5k,
MUTAG, COX2) are on topology-discriminative benchmarks, and the
label-dominated regime is a different problem.  But the trade is
categorical, not just quantitative: the per-prediction
certificate of Theorem~\ref{thm:certified} requires the
configuration to be fixed before data is examined, so
gradient-trained weights void its Bonferroni coverage guarantee.
Closed-form is also not synonymous with inflexible---
$\hat\rho_{\mathrm{Mah}}$ adapts to dataset-specific covariance
structure (one Cholesky per fold) and recovers signal where the
simpler $\widehat{\mathrm{Fisher}}_{\mathrm{ker}}$ inverts on
COX2 (Table~\ref{tab:selection_statistics}).  We submit that
closed-form across the entire pipeline is the right target for
a principled landmark-embedding theory; the NCI1/PTC accuracy
gap is a question for future closed-form variants, not evidence
against the closed-form thesis.

\paragraph{Limitations.}
\begin{itemize}
\item Certificates apply only to the nearest-centroid
  classifier (Algorithm~\ref{alg:certified}). On our six
  benchmarks neither the non-asymptotic Pinelis form nor the
  asymptotic Gaussian plug-in (chi-squared envelope) fires at
  our training-set sizes (Pinelis $0/6$, Gaussian essentially
  $0/6$ with $3.8\%$ on NCI1; Table~\ref{tab:certificate_firing});
  the construction is constructive but not yet operational at
  these sizes.
\item The landmark kernel gram costs $O(m^2 K)$, limiting
  scalability; FPS placement is seed-sensitive at small $K$
  ($\pm 5\%$ at $K{=}50$) but stabilizes at $K \geq 500$
  ($\pm 0.2\%$).
\item The $0.5$~pp gap to ECS~\citep{HacquardLebovici2024} on
  Orbit5k ($91.3\%$ vs.\ $91.8\%$) reflects the 1D filtration
  concatenation PALACE uses where ECS works directly on a 2D
  bifiltration surface.
\item On NCI1 and PTC, where discriminative power lies in
  discrete node-label features that continuous structural
  filtrations cannot capture, PALACE inherits the same gap that
  \citet{PaperI} documents to graph-kernel and GNN baselines
  exploiting node labels (Table~\ref{tab:graph_comparison}).
\end{itemize}

\paragraph{Future work.}
The full inferential theory on the PALACE embedding family---
continuous landmark configurations and the associated
sample-complexity rates---is developed in~\citep{PaperIII}.


\bibliographystyle{plainnat}
\bibliography{main.bib}

\begin{thebibliography}{40}
\providecommand{\natexlab}[1]{#1}
\providecommand{\url}[1]{\texttt{#1}}
\expandafter\ifx\csname urlstyle\endcsname\relax
  \providecommand{\doi}[1]{doi: #1}\else
  \providecommand{\doi}{doi: \begingroup \urlstyle{rm}\Url}\fi

\bibitem[Adams et~al.(2017)Adams, Emerson, Kirby, Neville, Peterson, Shipman,
  Chepushtanova, Hanson, Motta, and Ziegelmeier]{Adams2017}
Henry Adams, Tegan Emerson, Michael Kirby, Rachel Neville, Chris Peterson,
  Patrick Shipman, Sofya Chepushtanova, Eric Hanson, Francis Motta, and Lori
  Ziegelmeier.
\newblock Persistence images: {A} stable vector representation of persistent
  homology.
\newblock \emph{Journal of Machine Learning Research}, 18\penalty0
  (8):\penalty0 1--35, 2017.

\bibitem[Agarwal et~al.(2005)Agarwal, Har-Peled, and
  Varadarajan]{AgarwalHarPeled2005}
Pankaj~K. Agarwal, Sariel Har-Peled, and Kasturi~R. Varadarajan.
\newblock Geometric approximation via coresets.
\newblock In \emph{Combinatorial and Computational Geometry}, volume~52 of
  \emph{MSRI Publications}, pages 1--30. Cambridge University Press, 2005.

\bibitem[Anai et~al.(2019)Anai, Chazal, Glisse, Ike, Inakoshi, Tinarrage, and
  Umeda]{Anai2019}
Hirokazu Anai, Fr{\'e}d{\'e}ric Chazal, Marc Glisse, Yuichi Ike, Hiroya
  Inakoshi, Rapha{\"e}l Tinarrage, and Yuhei Umeda.
\newblock {DTM}-based filtrations.
\newblock In \emph{International Symposium on Computational Geometry (SoCG)},
  pages 58:1--58:15, 2019.

\bibitem[Bagchi et~al.(2026)Bagchi, Majhi, Mitra, and Virk]{PaperIII}
Pramita Bagchi, Sushovan Majhi, Atish Mitra, and {\v Z}iga Virk.
\newblock A statistical-inference pipeline for persistence-landmark kernels.
\newblock Manuscript in preparation; available from the authors on request,
  2026.

\bibitem[Bubenik(2015)]{Bubenik15}
Peter Bubenik.
\newblock Statistical topological data analysis using persistence landscapes.
\newblock \emph{Journal of Machine Learning Research}, 16\penalty0
  (1):\penalty0 77--102, 2015.

\bibitem[Carri{\`e}re et~al.(2017)Carri{\`e}re, Cuturi, and
  Oudot]{Carriere2017}
Mathieu Carri{\`e}re, Marco Cuturi, and Steve Oudot.
\newblock Sliced {W}asserstein kernel for persistence diagrams.
\newblock In \emph{Proceedings of the 34th International Conference on Machine
  Learning (ICML)}, volume~70 of \emph{Proceedings of Machine Learning
  Research}, pages 664--673. PMLR, 2017.

\bibitem[Carri{\`e}re et~al.(2020)Carri{\`e}re, Chazal, Ike, Lacombe, Royer,
  and Umeda]{Carriere2020}
Mathieu Carri{\`e}re, Fr{\'e}d{\'e}ric Chazal, Yuichi Ike, Th{\'e}o Lacombe,
  Martin Royer, and Yuhei Umeda.
\newblock {PersLay}: A neural network layer for persistence diagrams and new
  graph topological signatures.
\newblock In \emph{International Conference on Artificial Intelligence and
  Statistics (AISTATS)}, pages 2786--2796, 2020.

\bibitem[Chazal et~al.(2009)Chazal, Cohen-Steiner, Glisse, Guibas, and
  Oudot]{ChazalCohenSteiner2009}
Fr{\'e}d{\'e}ric Chazal, David Cohen-Steiner, Marc Glisse, Leonidas~J. Guibas,
  and Steve~Y. Oudot.
\newblock Proximity of persistence modules and their diagrams.
\newblock In \emph{Proceedings of the 25th Annual Symposium on Computational
  Geometry (SoCG)}, pages 237--246, 2009.
\newblock \doi{10.1145/1542362.1542407}.

\bibitem[Chazal et~al.(2016)Chazal, de~Silva, Glisse, and
  Oudot]{ChazalDeSilva2016}
Fr{\'e}d{\'e}ric Chazal, Vin de~Silva, Marc Glisse, and Steve Oudot.
\newblock \emph{The Structure and Stability of Persistence Modules}.
\newblock SpringerBriefs in Mathematics. Springer, 2016.
\newblock \doi{10.1007/978-3-319-42545-0}.

\bibitem[Cohen-Steiner et~al.(2007)Cohen-Steiner, Edelsbrunner, and
  Harer]{Cohen-Steiner2007}
David Cohen-Steiner, Herbert Edelsbrunner, and John Harer.
\newblock Stability of persistence diagrams.
\newblock \emph{Discrete \& Computational Geometry}, 37\penalty0 (1):\penalty0
  103--120, 2007.

\bibitem[Cohen-Steiner et~al.(2009)Cohen-Steiner, Edelsbrunner, and
  Harer]{cohen2009extending}
David Cohen-Steiner, Herbert Edelsbrunner, and John Harer.
\newblock Extending persistence using {P}oincar{\'e} and {L}efschetz duality.
\newblock \emph{Foundations of Computational Mathematics}, 9\penalty0
  (1):\penalty0 79--103, 2009.
\newblock \doi{10.1007/s10208-008-9027-z}.

\bibitem[de~Silva and Tenenbaum(2004)]{DeSilvaTenenbaum2004}
Vin de~Silva and Joshua~B. Tenenbaum.
\newblock Sparse multidimensional scaling using landmark points.
\newblock Technical report, Stanford University, 2004.

\bibitem[Drineas and Mahoney(2005)]{DrineasMahoney2005}
Petros Drineas and Michael~W. Mahoney.
\newblock On the {N}ystr{\"o}m method for approximating a {G}ram matrix for
  improved kernel-based learning.
\newblock \emph{Journal of Machine Learning Research}, 6:\penalty0 2153--2175,
  2005.

\bibitem[Edelsbrunner and Harer(2010)]{Edelsbrunner2010-dp}
Herbert Edelsbrunner and John~L Harer.
\newblock \emph{Computational Topology}.
\newblock American Mathematical Society, Providence, RI, January 2010.

\bibitem[Feldman and Langberg(2011)]{FeldmanLangberg2011}
Dan Feldman and Michael Langberg.
\newblock A unified framework for approximating and clustering data.
\newblock In \emph{Proceedings of the 43rd Annual ACM Symposium on Theory of
  Computing (STOC)}, pages 569--578, 2011.
\newblock \doi{10.1145/1993636.1993712}.

\bibitem[Gabrielsson et~al.(2020)Gabrielsson, Nelson, Dwaraknath, and
  Skraba]{Gabrielsson2020}
Rickard~Br\"uel Gabrielsson, Bradley~J. Nelson, Anjan Dwaraknath, and Primoz
  Skraba.
\newblock A topology layer for machine learning.
\newblock In \emph{International Conference on Artificial Intelligence and
  Statistics (AISTATS)}, 2020.

\bibitem[Gonzalez(1985)]{Gonzalez1985}
Teofilo~F. Gonzalez.
\newblock Clustering to minimize the maximum intercluster distance.
\newblock \emph{Theoretical Computer Science}, 38:\penalty0 293--306, 1985.
\newblock \doi{10.1016/0304-3975(85)90224-5}.

\bibitem[Hacquard and Lebovici(2024)]{HacquardLebovici2024}
Olympio Hacquard and Vadim Lebovici.
\newblock Euler characteristic tools for topological data analysis.
\newblock \emph{Journal of Machine Learning Research}, 25:\penalty0 1--39,
  2024.

\bibitem[Heinonen(2001)]{Heinonen2001}
Juha Heinonen.
\newblock \emph{Lectures on Analysis on Metric Spaces}.
\newblock Universitext. Springer-Verlag, New York, 2001.
\newblock Ch.~10: doubling metric spaces.

\bibitem[Hofer et~al.(2017)Hofer, Kwitt, Niethammer, and Uhl]{Hofer2017}
Christoph Hofer, Roland Kwitt, Marc Niethammer, and Andreas Uhl.
\newblock Deep learning with topological signatures.
\newblock In \emph{Advances in Neural Information Processing Systems
  (NeurIPS)}, 2017.

\bibitem[Kusano et~al.(2016)Kusano, Hiraoka, and Fukumizu]{Kusano2016}
Genki Kusano, Yasuaki Hiraoka, and Kenji Fukumizu.
\newblock Persistence weighted {G}aussian kernel for topological data analysis.
\newblock In \emph{Proceedings of the 33rd International Conference on Machine
  Learning (ICML)}, pages 2004--2013, 2016.

\bibitem[Le and Yamada(2018)]{LeYamada2018}
Tam Le and Makoto Yamada.
\newblock Persistence {F}isher kernel: A {R}iemannian manifold kernel for
  persistence diagrams.
\newblock In \emph{Advances in Neural Information Processing Systems
  (NeurIPS)}, volume~31, pages 10028--10039, 2018.

\bibitem[Le~Cam(1973)]{LeCam1973}
Lucien Le~Cam.
\newblock Convergence of estimates under dimensionality restrictions.
\newblock \emph{Annals of Statistics}, 1:\penalty0 38--53, 1973.

\bibitem[Majhi et~al.(2026)Majhi, Mitra, Virk, and Bagchi]{PaperI}
Sushovan Majhi, Atish Mitra, {\v Z}iga Virk, and Pramita Bagchi.
\newblock A closed-form persistence-landmark pipeline for certified point-cloud
  and graph classification, 2026.
\newblock URL \url{https://arxiv.org/abs/2605.02836}.

\bibitem[Mika et~al.(1999)Mika, R{\"a}tsch, Weston, Sch{\"o}lkopf, and
  M{\"u}ller]{Mika1999}
Sebastian Mika, Gunnar R{\"a}tsch, Jason Weston, Bernhard Sch{\"o}lkopf, and
  Klaus-Robert M{\"u}ller.
\newblock Fisher discriminant analysis with kernels.
\newblock In \emph{Neural Networks for Signal Processing IX (NNSP)}, pages
  41--48, 1999.
\newblock \doi{10.1109/NNSP.1999.788121}.

\bibitem[Mitra and Virk(2024)]{Mitra2024}
Atish Mitra and {\v Z}iga Virk.
\newblock Geometric embeddings of spaces of persistence diagrams with explicit
  distortions.
\newblock arXiv:2401.05298, 2024.
\newblock URL \url{https://arxiv.org/abs/2401.05298}.

\bibitem[Mohri et~al.(2018)Mohri, Rostamizadeh, and
  Talwalkar]{MohriRostamizadehTalwalkar2018}
Mehryar Mohri, Afshin Rostamizadeh, and Ameet Talwalkar.
\newblock \emph{Foundations of Machine Learning}.
\newblock MIT Press, Cambridge, MA, 2nd edition, 2018.

\bibitem[Munkres(2000)]{Munkres2000}
James~R. Munkres.
\newblock \emph{Topology}.
\newblock Prentice Hall, Upper Saddle River, NJ, second edition, 2000.
\newblock Lebesgue number lemma: Theorem 27.5.

\bibitem[Ollivier(2009)]{Ollivier2009}
Yann Ollivier.
\newblock Ricci curvature of {M}arkov chains on metric spaces.
\newblock \emph{Journal of Functional Analysis}, 256\penalty0 (3):\penalty0
  810--864, 2009.
\newblock \doi{10.1016/j.jfa.2008.11.001}.

\bibitem[Reinauer et~al.(2021)Reinauer, Caorsi, and Berkouk]{Reinauer2021}
Raphael Reinauer, Matteo Caorsi, and Nicolas Berkouk.
\newblock Persformer: A transformer architecture for topological machine
  learning.
\newblock In \emph{arXiv preprint arXiv:2112.15210}, 2021.

\bibitem[Reininghaus et~al.(2015)Reininghaus, Huber, Bauer, and
  Kwitt]{Reininghaus2015}
Jan Reininghaus, Stefan Huber, Ulrich Bauer, and Roland Kwitt.
\newblock A stable multi-scale kernel for topological machine learning.
\newblock In \emph{Proceedings of the IEEE Conference on Computer Vision and
  Pattern Recognition (CVPR)}, pages 4741--4748, 2015.
\newblock \doi{10.1109/CVPR.2015.7299106}.

\bibitem[Sun et~al.(2009)Sun, Ovsjanikov, and Guibas]{sun2009concise}
Jian Sun, Maks Ovsjanikov, and Leonidas Guibas.
\newblock A concise and provably informative multi-scale signature based on
  heat diffusion.
\newblock \emph{Computer Graphics Forum}, 28\penalty0 (5):\penalty0 1383--1392,
  2009.
\newblock \doi{10.1111/j.1467-8659.2009.01515.x}.

\bibitem[Tsybakov(2009)]{Tsybakov2009}
Alexandre~B. Tsybakov.
\newblock \emph{Introduction to Nonparametric Estimation}.
\newblock Springer Series in Statistics. Springer, 2009.
\newblock \doi{10.1007/b13794}.

\bibitem[Vapnik(1998)]{Vapnik1998}
Vladimir~N. Vapnik.
\newblock \emph{Statistical Learning Theory}.
\newblock Wiley, 1998.

\bibitem[Vovk et~al.(2005)Vovk, Gammerman, and Shafer]{VovkEtAl2005}
Vladimir Vovk, Alex Gammerman, and Glenn Shafer.
\newblock \emph{Algorithmic Learning in a Random World}.
\newblock Springer, 2005.
\newblock \doi{10.1007/b106715}.

\bibitem[Williams and Seeger(2001)]{WilliamsSeeger2001}
Christopher K.~I. Williams and Matthias Seeger.
\newblock Using the {N}ystr{\"o}m method to speed up kernel machines.
\newblock In \emph{Advances in Neural Information Processing Systems (NIPS)},
  volume~13, pages 682--688, 2001.

\bibitem[Xu et~al.(2019)Xu, Hu, Leskovec, and Jegelka]{GIN2019}
Keyulu Xu, Weihua Hu, Jure Leskovec, and Stefanie Jegelka.
\newblock How powerful are graph neural networks?
\newblock In \emph{International Conference on Learning Representations
  (ICLR)}, 2019.

\bibitem[Yu(1997)]{Yu1997}
Bin Yu.
\newblock {A}ssouad, {F}ano, and {L}e {C}am.
\newblock In David Pollard, Erik Torgersen, and Grace~L. Yang, editors,
  \emph{Festschrift for Lucien Le Cam}, pages 423--435. Springer, 1997.

\bibitem[Zhang et~al.(2018)Zhang, Wang, Xiang, Huang, and Nehorai]{RetGK2018}
Zhen Zhang, Mianzhi Wang, Yijian Xiang, Yan Huang, and Arye Nehorai.
\newblock {R}et{GK}: Graph kernels based on return probabilities of random
  walks.
\newblock In \emph{Advances in Neural Information Processing Systems
  (NeurIPS)}, 2018.

\bibitem[Zhao and Wang(2019)]{yusu_metric_learning}
Qi~Zhao and Yusu Wang.
\newblock Learning metrics for persistence-based summaries and applications for
  graph classification.
\newblock In \emph{Advances in Neural Information Processing Systems},
  volume~32, pages 9855--9866, 2019.
\newblock NeurIPS 2019.

\end{thebibliography}

\end{document}